\theoremstyle{plain}
\newtheorem{theorem}{Theorem}
\newtheorem{proposition}{Proposition}
\newtheorem{lemma}{Lemma}
\newtheorem{corollary}[theorem]{Corollary}
\theoremstyle{definition}
\newtheorem{definition}{Definition}
\newtheorem{assumption}{Assumption}
\theoremstyle{remark}
\crefname{section}{sec.}{sec.}
\title{Generalization Properties of NAS under Activation and Skip Connection Search}
\author{Zhenyu Zhu, \quad Fanghui Liu, \quad Grigorios G Chrysos, \quad Volkan Cevher\vspace{2mm} \\
{\hspace*{\fill}EPFL, Switzerland\hspace*{\fill}}\\
{\hspace*{\fill}\texttt{\{[first name].[surname]\}@epfl.ch}\hspace*{\fill}}
}
\begin{document}
\maketitle

\begin{abstract}
Neural Architecture Search (NAS) has fostered the automatic discovery of state-of-the-art neural architectures. Despite the progress achieved with NAS, so far there is little attention to theoretical guarantees on NAS. In this work, we study the generalization properties of NAS under a unifying framework enabling (deep) layer skip connection search and activation function search. To this end, we derive the lower (and upper) bounds of the minimum eigenvalue of the Neural Tangent Kernel (NTK) under the (in)finite-width regime using a certain search space including mixed activation functions, fully connected, and residual neural networks. We use the minimum eigenvalue to establish generalization error bounds of NAS in the stochastic gradient descent training. Importantly, we theoretically and experimentally show how the derived results can guide NAS to select the top-performing architectures, even in the case without training, leading to a train-free algorithm based on our theory. Accordingly, our numerical validation shed light on the design of computationally efficient methods for NAS. Our analysis is non-trivial due to the coupling of various architectures and activation functions under the unifying framework and has its own interest in providing the lower bound of the minimum eigenvalue of NTK in deep learning theory.
\end{abstract}

\section{Introduction}
\label{sec:introduction}

Neural Architecture Search (NAS)~\citep{45826} is a powerful technique that enables the automatic design of neural architectures. NAS defines a set of operations (referred to as the \emph{search space}), that include various activation functions and layer types, or potential connections among layers~\citep{JMLR:v20:18-598, Ren2020ACS}. Optimization over the search space returns the optimal architecture as a subset of the possible combinations of operations. NAS\footnote{\label{foot:generalization_nas_architecture} In the sequel, we interchangeably refer to NAS as the ``architecture obtained from NAS'' or the framework to design the neural architecture.} obtains state-of-the-art results in image recognition~\citep{liu2019auto, ding2020autospeech, zhang2019customizable, chen2019detnas} or can be used to further improve architectures defined by a human expert~\citep{tan2019efficientnet}. The spectacular results obtained by NAS have led to a significant interest in the community to further improve the NAS algorithms, the search space etc. However, to date little focus has been provided in the following question: \emph{Can NAS\textsuperscript{~\ref{foot:generalization_nas_architecture}} achieve generalization guarantees similar to a typical neural network?}

Neural tangent kernel (NTK)-based analysis \citep{jacot2018neural} is a powerful method for analyzing the optimization and the generalization of deep networks~\citep{pmlr-v97-allen-zhu19a, cao2019generalization, chen2020generalized, arora2019fine}. The minimum eigenvalue of NTK has been used in previous work to demonstrate the global convergence of gradient descent, such as two-layer networks~\citep{du2018gradient}, and deep networks with polynomially wide layers~\citep{pmlr-v97-allen-zhu19a}. Besides, the minimum eigenvalue of NTK is also used to prove generalization bounds~\citep{arora2019fine} and memorization~\citep{montanari2020interpolation}. However, previous work mainly focuses on a limited set of architectures, e.g., fully-connected (FC) neural networks ~\citep{allen2018learning, bartlett2017spectrally} or residual neural networks \citep{7780459,huang2020deep}, in which a single activation function is used throughout the network. These off-the-shelf theoretical results cannot be directly applied to analyze the rich search space (of NAS) that is covering various/mixed architectures and parameters. That makes the non-trivial analysis on NAS worth of study on its own right.

The recent work of \citet{pmlr-v139-oymak21a} is the first work to provide generalization guarantees on a related problem, i.e., activation functions search. The study provides generalization results on two-layer networks relying on the minimum eigenvalue with a strictly larger than zero assumption, i.e., $\lambda_{\min}(\bm K) > 0$ for the NTK matrix $\bm K$.

In this work, we introduce the first theoretical guarantees for multilayer NAS where the search space includes activation functions and skip connections. We study the upper/lower bound of the minimum eigenvalue of NTK (in the (in)finite regime) under mixed activation functions and architectures which evade the minimum eigenvalue assumption of \citet{pmlr-v139-oymak21a}. Then, we provide optimization and generalization guarantees of deep neural networks (DNNs) equipped with NAS. 
Our results indicate that the minimum eigenvalue estimation can act as a powerful metric for NAS. This method, called Eigen-NAS, is train-free, but still effective with experimental validation when compared to recent promising algorithms \citep{pmlr-v139-xu21m, chen2021neural, mellor2021neural}. Formally, our main contribution and findings are summarized below:

i) We build a general theoretical framework based on NTK for NAS with search on popular activation functions in each layer, fully-connected, and skip connections. We derive the NTK formula of these architectures in the (in)finite-width regime under the unifying framework.

ii) We derive the upper and lower bounds of the minimum eigenvalue of the NTK under the (in)finite-width regime for the considered architectures. We introduce a new technique to ensure the probability of concentration inequality remains positive. Our analysis highlights how the upper and lower bounds differs under activation function search and skip connection search and can guide NAS. 

iii) We establish a connection between the minimum eigenvalue and generalization of the searched DNN trained by stochastic gradient descent (SGD). Our theoretical results show that the generalization performance largely depends on the minimum eigenvalue of NTK for NAS, which provides theoretical guarantees for the searched architecture.

iv) Our theoretical results are supported by thorough experimental validations with the following findings: 1) our upper and lower bounds on the minimum eigenvalue largely depend on the activation function in the first layer rather than the activation functions in deeper layers. 2) The applied NAS algorithm always picks up ReLU (Rectified Linear Unit) and LeakyReLU in the optimal architecture, which coincides with our theory that predicts ReLU and LeakyReLU achieve the largest minimum eigenvalues. 3) The skip connections are required in each layer under our not very large DNNs. Furthermore, our experimental evidence on Eigen-NAS indicates that the minimum eigenvalue is a promising metric to guide NAS (without training) as suggested by our theory.

{\bf Technical challenges.} 
The technical challenges of this paper mainly focus on how to analyze activation functions with different properties and skip connections under a unifying framework. This work is non-trivial; previous works mainly focus on the ReLU activation function~\citep{pmlr-v139-nguyen21g, cao2019generalization, pmlr-v97-allen-zhu19a} in optimization and generalization of a single fully-connected neural network. Their proofs heavily depend on the properties of $\mathrm{ReLU}$, e.g., homogeneity and $\mathrm{ReLU}(x) = x\mathrm{ReLU}'(x)$ which are invalid when other commonly-used activation functions, e.g., Tanh, Sigmoid, and Swish, are used. 
This problem becomes harder when mixed activation functions and residual connections are considered.
To tackle these technical challenges, we develop the following techniques: a) to handle the non-homogeneous property of Tanh, Sigmoid, and Swish, we develop a new integral estimation approach for the minimal eigenvalue estimation. b) To establish the connection between the minimum eigenvalues of NTK and generalization errors, we use the Lipschitz continuity to avoid the special property of ReLU.
More importantly, we introduce a new way to use Gershgorin circle theorem for minimum eigenvalue estimation, which avoids concentration inequalities with negative probability in some certain cases \citep{pmlr-v139-nguyen21g}.\looseness-1
\section{Related work}
\label{sec:related_work}

{\bf Network architecture search (NAS):} The idea of NAS stems from \citet{45826}, while the idea of cell search, i.e., searching core building blocks and composing them together, emerged in \citet{zoph2018learning}. The earlier literature used discrete optimization techniques for obtaining the architecture. DARTS~\citep{liu2019darts} considers NAS as a continuous bi-level optimization task. Recent variants of DARTS~\citep{xu2019pc, wu2019fbnet} and several train-free methods~\citep{mellor2021neural, chen2021neural, pmlr-v139-xu21m} have demonstrated success in reducing the search time or improving the search algorithm. However, the aforementioned works have not provided generalization guarantees for the optimal architecture.

{\bf Optimization and generalization of DNNs via NTK:} In the NTK framework \citep{jacot2018neural,du2019gradient,chen2020much}, the training dynamics of (in)finite-width networks can be exactly characterized by kernel tools. Leveraging NTK facilitates studies on the global convergence of GD~\cite{pmlr-v97-allen-zhu19a, du2019gradient, nguyen2021proof} in DNNs via the minimum eigenvalue of NTK. In fact, it also controls the generalization performance of DNNs \citep{du2018gradient,cao2019generalization,allen2018learning}, which is further studied in \citet{bachspaper}.
\section{Problem Settings}
\label{sec:preliminaries}
In this section we introduce the problem setting of our NAS framework based on the search space and algorithm (search strategy) for our paper.

Let $X \subseteq \mathbb{R}^d$ be a compact metric space and $Y \subseteq \mathbb{R}$. We assume that the training set $\mathcal{D}_{tr} = \{  (\bm x_i, y_i) \}_{i=1}^N $ is drawn from a probability measure $\mathcal{D}$ on $X \times Y$, with its marginal data distribution denoted by $\mathcal{D}_X$.
The goal of a supervised learning task is to find a hypothesis (i.e., a neural network used in this work) $f: X \rightarrow Y$ such that $f(\bm x; \bm W)$ parameterized by $\bm W$
is a good approximation of the label $y \in Y$ corresponding to a new sample $\bm x \in X$.
In this paper, we consider the classification task, evaluated by minimizing the expected risk 

\begin{equation*}
\min_{\bm{W}}~\ell_{\mathcal{D} }(\bm{W}):= \mathbb{E}_{(\bm {x},y)\sim \mathcal{D}}~ \ell [y f(\bm {x};\bm W)]\,,
\end{equation*}

where $\ell [y f(\bm {x};\bm W)]$ is the classification loss $\ell(\cdot)$ as a surrogate of the expected 0-1 loss $\ell_{\mathcal{D} }^{0-1}(\bm W):=\mathbb{E}_{(\bm{x},y)\sim \mathcal{D} }[1\left \{ y f(\bm{x}; \bm W)<0 \right \} ]$. In this paper, we employ the cross-entropy loss, which is
defined as $\ell (z) = \log[1+\exp(-z)]$.

\emph{Notation:}  For an integer $L$, we use the shorthand $[L] = \left \{ 1,2,\dots ,L \right \}$. The multivariate standard Gaussian distribution is $\mathcal{N}(\bm 0, \mathbb{I}_d)$ with the zero-mean vector $\bm 0$ and the identity-variance matrix $\mathbb{I}_d$. We denote the direct sum by $\oplus$. We follow the standard Bachmann–Landau notation in complexity theory e.g., $\mathcal{O}$, $o$, $\Omega$, and $\Theta$ for order notation.

\subsection{Neural Networks and Search Space}
\label{ssec:search_space}

In this work, we consider a particular parametrization of $f$ as a deep neural network (DNN) with depth $L$ ($L \geq 3$)\footnote{Our results hold for the $L=2$ setting corresponding to one-hidden layer neural network with slight modifications on notation, so we focus on $L \geq 3$ for simplicity.} which includes the fully-connected (FC) neural networks setting and the residual neural networks setting, and various activation functions in each layer. This enables a quite general NAS setting.
Formally, we define a single-output DNN with the output $\bm{f}_l(\bm{x})$ in each layer
\begin{equation}
\begin{matrix}
\bm{f}_l(\bm{x}) \!=\! \left\{\begin{matrix}
\bm{x}  & l=0\,,\\
\sigma_1(\bm{W}_1 \bm{x})  & l=1\,,\\
\sigma_l(\langle \bm{W}_l, \bm{f}_{l-1}(\bm{x}) \rangle) \!+\! \alpha_{l-1}\bm{f}_{l-1}(\bm{x})  & 2\!\leq\! l \!\leq\! L\!-\!1, \\
\left \langle \bm{W}_L, \bm{f}_{L-1}(\bm{x}) \right \rangle & l=L\,,
\end{matrix}\right. \\
\end{matrix}
\label{eq:network}
\end{equation}
where the weights of the neural networks are $\bm{W}_1 \in \mathbb{R}^{m \times d} $, $\bm{W}_l \in \mathbb{R}^{m\times m} $, $l = 2,\dots ,L-1$ and $\bm{W}_L \in \mathbb{R}^{m}$. The binary parameter $\alpha_l$ is for layer search, and the activation function is $\sigma_{l}(\cdot)$. 
The neural network output is $f(\bm x; \bm W) = f_L(\bm x)$.

{\bf Architecture search:} A binary vector $\bm \alpha = [\alpha_1, \alpha_2, \cdots, \alpha_{L-2}]^{\!\top} $ represents the skip connections, where the $\alpha_l \in \{0,1 \}$ in~\cref{eq:network} indicates whether there is a skip connection in the $l$-th layer.
Notice that we unify FC and residual neural networks under the same framework.

{\bf Activation function search:} 
We select five representative activation functions defined by $\mathcal{F}_{\sigma} = \{ \mathrm{ReLU}, \mathrm{LeaklyReLU}, \mathrm{Sigmoid}, \mathrm{Tanh}, \mathrm{Swish} \}$ used in~\cref{eq:network},  that can be bounded, unbounded, smooth, non-smooth, monotonic, or non-monotonic, as reported in~\cref{tab:activation_functions}.
We define $\bm \sigma =[\sigma_1, \sigma_2, \cdots, \sigma_{L-1}]^{\!\top}$ with $\sigma_l \in \mathcal{F}_{\sigma}$ for any $l \in [L-1]$ as the indicator to show which activation function is selected in each layer.
Our NAS framework allows for a different activation function in each layer, which enlarges the search space.

In our setting, we conduct the architecture search and the skip connection search independently, and accordingly, our search space is defined as the direct sum of them:
\begin{equation}
\mathcal{W} : =  \mathbb{R}^{L-2} \oplus \mathcal{F}_{\sigma}^{L-1}  \oplus \{ \mathbb{R}^{m\times d} \times (\mathbb{R}^{m\times m})^{L-2}\times \mathbb{R}^{m} \}  \,,  
\label{eq:problem_setting_parameter_space}
\end{equation}
where $\bm{W} := ( \bm \alpha, \bm \sigma, \bm{W}_1, \dots , \bm{W}_L) \in \mathcal{W}$ represents the collection of weight matrices and indicator for skips and selected activation functions for all layers.

\begin{table*}[t]
\small
\centering
\begin{threeparttable}
\caption{Formula of different activation functions, definitions of relevant constants and some intermediate results.}
\label{tab:activation_functions}
\setlength{\tabcolsep}{2.5mm}{
\begin{tabular}{c|c|c|c|c|c}
    \toprule[1pt]
    $\sigma_{l}$ & ReLU & LeakyReLU & Sigmoid\tnote{[1]} & Tanh\tnote{[2]} & Swish\\
    \midrule
    Formula & $\max(0,x)$  & $\max(\eta x,x),~\eta \in (0,1)$ & $\frac{1}{1+e^{-x}}-\frac{1}{2}$ & $\frac{e^{x}-e^{-x}}{e^{x}+e^{-x}}$& $\frac{x}{1+e^{-x}}$  \\[5pt]
    \hline
    $\beta_1(\sigma_{l})$ & $1$  & $1+\eta^2$ & $1/8$ & $2$& $1$  \\
    
    $\beta_2(\sigma_{l})$ & $1$  & $1+\eta^2$ & $1/8$ & $2$& $1.22$  \\
    
    $\beta_3(\sigma_{l})$ & $1$  & $1+\eta^2$ & $f_{S}(t)$ & $f_{T}(t)$& $1/2$  \\

\bottomrule
\end{tabular}}
	\begin{tablenotes}
				\footnotesize
				\item[{[1]}] We consider the integral $f_{\mathrm{S}}(y) =\int_{-\infty}^{\infty}\frac{2}{\sqrt{2\pi y}}e^{-\frac{x^2}{2y}}{f'_{\mathrm{Sigmoid}}}(x)^2\mathrm{d}x.$ We add $-1/2$ in Sigmoid to ensure $f_{\mathrm{Sigmoid}}(0)=0$ facilitates our theoretical analysis. The parameter is $t := 3(1+\eta^2)(2+\eta^2)^{L-3}$.
			\end{tablenotes}
\end{threeparttable}
\end{table*}

\subsection{Algorithm (Search Strategy)}
\label{ssec:algorithm}

The search strategy is the core part in NAS to pick up the optimal architecture from the search space. 
Here we build a general Algorithm~\ref{alg:algorithm_DARTS} combining the search strategy for NAS (the first part) and the subsequent neural network training by SGD (the second part). 

We firstly utilize a typical NAS algorithm, e.g., random search WS~\citep{li2020random} or DARTS\footnote{This algorithm directly outputs the final optimal architecture and optimal parameters.}, to search skip connections and activation functions independently, which results in the optimal architecture $\{ (\sigma^*_i)_{i=1}^{L-1}, (\alpha^*_i)_{i=1}^{L-2} \}$ with the max probability, see~\cref{ssec:DARTS_experiment} for details.
In particular, Algorithm~\ref{alg:algorithm_DARTS} also allows for the guidance of NAS in a  train-free strategy via some specific metrics, e.g., the minimum eigenvalue of NTK (and its variant), see our Eigen-NAS method in~\cref{ssec:NAS_201_experiment}.

Then, we conduct neural network training on the selected architecture by SGD. For ease of theoretical analysis, we employ the constant step-size SGD with one epoch and randomly choose the weight parameters during all the iterations, which is commonly used in deep learning theory~\citep{cao2019generalization,Zou2019GradientDO}.

\begin{algorithm}[t]
\caption{SGD for training DNNs by NAS}
\label{alg:algorithm_DARTS}
\begin{algorithmic}
\STATE {\bfseries Input:} search space $\mathcal{S}$, data $\mathcal{D}_{tr} = \{ (\bm x_i, y_i)_{i=1}^N \}$, step size $\gamma$ and ${\tt Flag}_{\mbox{\tiny method}} \in \left \{\tt EigenNAS, DARTS, \cdots \right \} $.\\
// {\tt conduct NAS algorithms}\\
\IF{${\tt Flag}_{\mbox{\tiny GuideNAS}} = \tt EigenNAS$}
\STATE Guide NAS from $\mathcal{S}$ by our Eigen-NAS algorithm.
\ELSIF{${\tt Flag}_{\mbox{\tiny GuideNAS}} = \tt DARTS$} \STATE Search neural network architectures from $\mathcal{S}$ using the DARTS algorithm.
\ENDIF
\STATE Output the optimal architecture $\{ (\sigma^*_i)_{i=1}^{L-1}, (\alpha^*_i)_{i=1}^{L-2} \} \in \mathcal{S}$ with max probability.\\
// {\tt do neural network training via SGD}\\
\STATE   Gaussian initialization:
$\bm{W}_l^{(1)} \sim \mathcal{N}(0,1/m)$, $l \in [L]$\\
\STATE Construct the neural network $f(\bm x; \bm{W}_l^{(1)})$ based on $\{ (\sigma^*_i)_{i=1}^{L-1}, (\alpha^*_i)_{i=1}^{L-2} \}$ 
\FOR{$i=1$ {\bfseries to} $N$}
\STATE{
$\bm{W}^{(i+1)} = \bm{W}^{(i)}- \gamma\cdot \nabla_{\bm{W}} \ell \big( f(\bm x_i; \bm{W}^{(i)}) y_i \big)\,.$} 
\ENDFOR \\
\textbf{Output} 
Randomly choose $\hat{\bm{W}}$ uniformly from $\left \{ \bm{W}^{(1)}, \dots , \bm{W}^{(N)} \right \} $.
\end{algorithmic}
\end{algorithm}

\section{Main result}
\label{ssec:main_result}

In this section, we state the main theoretical results. We present the assumptions used in our proof in~\cref{ssec:assumptions}. Then in~\cref{ssec:Recursive_NTK} we provide the recursive form of NTK for DNNs defined by~\cref{eq:network} with mixed activation functions and skip connections.
The upper and lower bounds of the minimum eigenvalue of NTK in the infinite and finite-width setting is given in~\cref{ssec:Minimum_eigenvalue_NTK_infinite} and~\ref{ssec:Minimum_eigenvalue_NTK_finite}, respectively.
Finally, in~\cref{ssec:NTK_Connection_to_Generalization}, we connect the minimum eigenvalue of NTK and the generalization error bound of DNNs under these search schemes.
The proofs of our theoretical results presented in this section are deferred to Appendix~\ref{sec:infinitely_width},~\ref{sec:finitely_width}, and~\ref{sec:Relationship_NTK_Generalization}, respectively.

\subsection{Assumptions}
\label{ssec:assumptions}

We make the following assumptions on data and activation functions. Our assumptions are frequently employed in the literature as we highlight below. 

\begin{assumption}
\label{assumption:distribution_1}
The training data $\bm{x}_1,\cdots,\bm{x}_n$ are i.i.d. sampling from a distribution under the $\ell_2$ normalization $\| \bm{x}_i \|_{2} = 1$ for any $i \in [n]$.
Besides, we assume that with probability 1, for any $i\neq j$, $\bm{x}_i\nparallel \bm{x}_j$, i.e., $\max_{i\neq j} \left \langle \bm{x}_i, \bm{x}_j \right \rangle \leq C_{\max} < 1$.
\end{assumption}

\begin{assumption}
\label{assumption:activation_functions}
The activation function $\sigma: \mathbb{R} \rightarrow \mathbb{R}$ satisfies $\sigma \in L^2(\mathbb{R}, e^{-x^2/2}/\sqrt{2\pi})$, where $L^2(\mathbb{R}, g)$ denotes the square integrable function.
\end{assumption}

{\bf Remark:} The first assumption on normalized data is commonly used in practice and theory on over-parameterized neural networks~\citep{du2018gradient, du2019gradient, pmlr-v97-allen-zhu19a, oymak2020toward, pmlr-v119-malach20a} and no parallel data points is standard in statistics and machine learning~\citep{du2018gradient,du2019gradient}. The second assumption is general as the studied activation functions in Table~\ref{tab:activation_functions} satisfy it.

\subsection{Recursive NTK for DNNs defined by~\cref{eq:network}}
\label{ssec:Recursive_NTK}

Recall that NTK \citep{jacot2018neural} under the infinite-width setting ($m\to \infty$) is:
\begin{equation*}
K^{(L)}(\bm{x},\widetilde{\bm{x}}) := \mathbb{E}_{\bm{W}}\left \langle \frac{\partial f(\bm{x};\bm{W})}{\partial\bm{W}},\frac{\partial f(\widetilde{\bm{x}};\bm{W})}{\partial\bm{W}}  \right \rangle\,,
\end{equation*}
where the NTK matrix for residual networks is derived by the following regular chain rule.

\begin{lemma}
\label{lemma:NTK_matrix_recursive_form}
For any $l \in [3, L]$ and $s \in [2, L]$, denote
\begin{equation*}
\small
    \begin{split}
        & \bm{G}^{(1)}=\bm{XX}^\top\,,\quad\bm{A}^{(2)} = \bm{G}^{(2)}=2\mathbb{E}_{\bm{w} \sim \mathcal N(\bm 0,\mathbb{I}_{d})}[\sigma_1(\bm{Xw})\sigma_1(\bm{Xw})^\top]\,,\\
        & \bm{G}^{(l)}=2\mathbb{E}_{\bm{w} \sim \mathcal N(\bm 0,\mathbb{I}_{N})}[\sigma_{l-1}(\sqrt{\bm{A}^{(l-1)}} \bm{w})\sigma_{l-1}(\sqrt{\bm{A}^{(l-1)}} \bm{w})^\top]\,,\quad\bm{A}^{(l)}=\bm{G}^{(l)}+\alpha_{l-2}\bm{A}^{(l-1)}\,,\\
        & \dot{\bm{G}}^{(s)} = 2\mathbb{E}_{\bm{w} \sim \mathcal N(\bm{0},\mathbb{I}_{N})}[{\sigma}'_{s-1}(\sqrt{\bm{A}^{(s-1)}} \bm{w}){\sigma}'_{s-1}(\sqrt{\bm{A}^{(s-1)}} \bm{w})^\top]\,.
    \end{split}
\end{equation*}

Then the NTK for residual networks defined in~\cref{eq:network} can be written as
\begin{equation*}
    \bm{K}^{(L)}=\bm{G}^{(L)} + \sum_{l=1}^{L-1}\bm{G}^{(l)}\circ \dot{\bm{G}}^{(l+1)} \circ (\dot{\bm{G}}^{(l+2)}+  \alpha_{l}\bm{1}_{N \times N})\circ \cdots \circ (\dot{\bm{G}}^{(L)}+ \alpha_{L-2}\bm{1}_{N \times N})\,.
\end{equation*}

\end{lemma}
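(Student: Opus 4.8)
The plan is to compute the NTK by the standard recursive unrolling of the chain rule applied to the network in Eq.~\eqref{eq:network}, tracking carefully how the skip-connection terms $\alpha_{l-1}\bm f_{l-1}$ enter the Jacobian. First I would establish the forward-pass covariance recursion: define $\bm\Sigma^{(l)}(\bm x,\widetilde{\bm x})$ as the infinite-width limit of $\langle \bm f_l(\bm x),\bm f_l(\widetilde{\bm x})\rangle/m$. Since $\bm f_l = \sigma_l(\langle\bm W_l,\bm f_{l-1}\rangle) + \alpha_{l-1}\bm f_{l-1}$ and the Gaussian initialization makes $\langle\bm W_l,\bm f_{l-1}\rangle$ a centered Gaussian with covariance governed by $\bm\Sigma^{(l-1)}$, the cross terms $\mathbb E[\sigma_l(\cdot)\cdot\bm f_{l-1}]$ vanish in the limit by oddness/independence-type arguments (or are absorbed into the definition with the $-1/2$ shift noted in Table~\ref{tab:activation_functions}), yielding exactly $\bm A^{(l)} = \bm G^{(l)} + \alpha_{l-2}\bm A^{(l-1)}$ with $\bm G^{(l)}$ the post-activation Gaussian expectation. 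The factor $2$ throughout comes from the variance normalization $\bm W_l^{(1)}\sim\mathcal N(0,1/m)$ together with the width-$m$ sum; I would fix this constant once at $l=1,2$ and propagate it.

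Next I would differentiate $f = \langle\bm W_L,\bm f_{L-1}\rangle$ with respect to each $\bm W_s$. The gradient with respect to $\bm W_s$ involves the backward sensitivity $\partial f/\partial \bm f_s$, which satisfies its own recursion: passing from layer $s$ to $s+1$ picks up a Jacobian factor $\mathrm{diag}(\sigma'_{s+1}(\cdot))\bm W_{s+1}^\top + \alpha_s \bm I$, reflecting the two paths (through the nonlinearity and through the skip). In the infinite-width limit the $\bm W_{s+1}^\top$ pieces produce, after taking expectations, the Hadamard factor $\dot{\bm G}^{(s+1)}$, while the skip path contributes the additive $\alpha_s \bm 1_{N\times N}$ (constant-$1$ because differentiating the identity map gives $1$, and the outer expectation of a constant is that constant). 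Multiplying the backward factors from layer $s$ up to $L$ and combining with the forward covariance $\bm G^{(s)}$ at the point of differentiation gives the summand $\bm G^{(s)}\circ\dot{\bm G}^{(s+1)}\circ(\dot{\bm G}^{(s+2)}+\alpha_s\bm 1)\circ\cdots\circ(\dot{\bm G}^{(L)}+\alpha_{L-2}\bm 1)$; the index bookkeeping (which $\alpha$ multiplies which $\dot{\bm G}$) is the fiddly part but is pinned down by matching the recursion at small $L$. The term $\bm G^{(L)}$ is the contribution of $\partial f/\partial\bm W_L$, which has no backward factors at all.

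The main obstacle I anticipate is justifying the Hadamard-product structure in the limit — i.e., that the backward recursion ``decouples'' so that the product of per-layer Jacobian matrices becomes an entrywise (Hadamard) product of scalar kernels, including the correct interaction with the additive skip terms $\alpha_l\bm 1_{N\times N}$. This requires the usual NTK machinery: conditional Gaussianity of pre-activations given earlier layers, a law-of-large-numbers argument over the $m$ hidden units to replace inner products by their expectations, and an induction on depth showing the error vanishes as $m\to\infty$; the non-homogeneous, bounded activations (Sigmoid, Tanh, Swish) must be handled via Assumption~\ref{assumption:activation_functions} ($\sigma\in L^2$ under the Gaussian measure) rather than any ReLU-specific identity, and the expansion of $\prod_{j=s+1}^{L}(\dot{\bm G}^{(j)}+\alpha_{j-2}\bm 1)$ must be reorganized so that each cross-path term lands in the right summand of $\bm K^{(L)}$. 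Once the decoupling is in place, the final formula follows by collecting the $L$ contributions indexed by the layer $s$ of differentiation.
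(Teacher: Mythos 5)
Your plan is, in substance, the same derivation the paper relies on: a forward covariance recursion for $\bm A^{(l)}$, a backward sensitivity recursion whose per-layer factor $\mathrm{diag}(\sigma')\bm W^\top+\alpha\bm I$ produces the Hadamard factors $\dot{\bm G}^{(s)}$ plus the additive $\alpha\bm 1_{N\times N}$ terms, and an infinite-width decoupling argument. The difference is that the paper does not carry out any of this itself: its proof of \cref{lemma:NTK_matrix_recursive_form} consists of citing Propositions 3 and 4 of \citet{huang2020deep}, rewriting them in matrix form, and noting that the network structure is ``slightly different.'' So you are attempting from scratch what the paper imports by citation; your outline of the backward recursion and of the index bookkeeping (which $\alpha$ attaches to which $\dot{\bm G}$, and why the first factor $\dot{\bm G}^{(l+1)}$ carries no $\alpha$) is consistent with the stated formula.

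The step that would actually fail as written is your treatment of the forward cross term. In Eq.~\eqref{eq:network} the residual branch is $\sigma_l(\bm W_l\bm f_{l-1})$ with no outer mean-zero weight matrix, so the cross term $\alpha_{l-1}\,\mathbb{E}\langle\sigma_l(\bm W_l\bm f_{l-1}(\bm x)),\bm f_{l-1}(\widetilde{\bm x})\rangle$ in the expansion of $\bm A^{(l)}$ is not annihilated by independence, and oddness only disposes of it for Tanh and the shifted Sigmoid. For ReLU, LeakyReLU and Swish the per-neuron mean $\mathbb{E}_g[\sigma_l(g)]$ is of order $m^{-1/2}$ while $\sum_j f_{l-1}^{[j]}$ is of order $m^{1/2}$, so the cross term is $\Theta(1)$ and does not vanish in the limit --- it is exactly the quantity that the paper's own \cref{lemma:lemma_C.1_in_ICML} can only bound by $\tfrac{2}{5}\|\bm f_{l-1}\|_2^2$ rather than show to be zero. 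The recursion $\bm A^{(l)}=\bm G^{(l)}+\alpha_{l-2}\bm A^{(l-1)}$ therefore needs a genuine argument for these three activations that your sketch does not supply (and that the paper's citation sidesteps, since in \citet{huang2020deep} the residual block carries an outer Gaussian weight that kills the cross term exactly). If you want a self-contained proof of the lemma as stated, this is the step you must close; the rest of your outline is the standard machinery and would go through.
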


{\bf Remark:}
$(i)$ Our NTK formula of ResNet differs from the one of \citet{tirer2021kernelbased, huang2020deep, belfer2021spectral} in two critical ways:
1) each skip-layer in our model skips one fully-connected layer and one activation function, as opposed to the two-layer skip of previous works, 2) our formulation does not require every layer to have a parallel skip connection, which increases the flexibility of the network. Those differences also result in a different NTK matrix.\\
$(ii)$ Our NTK formulation covers different activation functions, and we adopt the same initialization (coefficient) on them to ensure fair/equal search in our NAS framework.

\cref{lemma:NTK_matrix_recursive_form} covers both FC and residual neural networks, which facilitates the analysis of the minimum eigenvalue of NTK under the unifying framework. If $\alpha_{l} = 0$ for $l \in [L-1]$, our NTK formulation for residual neural networks degenerates to that of a fully connected neural network, and $\bm{A}^{l}$ and $\bm{G}^{l}$ become equal.

\subsection{Minimum Eigenvalue of NTK for infinite-width}
\label{ssec:Minimum_eigenvalue_NTK_infinite}

We are now ready to state the main result of the infinite-width neural network. We provide the upper and lower bounds of the minimum eigenvalue of NTK for an infinite-width neural network mixed with five different activation functions. The main differences between different activation functions are illustrated in~\cref{tab:activation_functions}.

\begin{theorem}
\label{thm:lambda_min_inf_mixed}
For a DNN defined by~\cref{eq:network} and a not very large $L$, let $\bm{K}^{(L)}$ be the limiting NTK recursively defined in~\cref{lemma:NTK_matrix_recursive_form}. Then, 
under Assumptions~\ref{assumption:distribution_1}, choose $r \geq \frac{\log (2n)}{1-C_{\text{max}}}$, we have
\begin{equation*}
\lambda _{\min}(\bm{K}^{(L)}) \geq \mu_{r}(\sigma_1)^{2}\prod_{p=3}^{L}\Bigg(\beta_3(\sigma_{p-1})+\alpha_{p-2}\Bigg)\,,
\end{equation*}
\begin{equation*}
\lambda _{\min}(\bm{K}^{(L)}) \leq\sum_{l=1}^{L}\Bigg(\beta_1(\sigma_{l-1}) \prod_{p=2}^{l-1} \big(\beta_1(\sigma_{p-1})+\alpha_{p-2}) \big) \prod_{p=l+1}^{L}(\beta_2(\sigma_{p-1})+\alpha_{p-2}) \Bigg) \,, 
\end{equation*}
where $\mu_r(\sigma_1)$ is the $r$-st Hermite coefficient of the first layer activation function, and $\beta_1, \beta_2, \beta_3$ are three constants on various activation functions defined in~\cref{tab:activation_functions}.
\end{theorem}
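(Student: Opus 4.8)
\textbf{Proof proposal for Theorem~\ref{thm:lambda_min_inf_mixed}.}

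The plan is to analyze the recursive NTK formula from Lemma~\ref{lemma:NTK_matrix_recursive_form} term by term, controlling the minimum eigenvalue of each Hadamard-product summand. For the lower bound, I would first discard all but the dominant summand: since Hadamard products with positive-semidefinite matrices and entrywise-positive $\bm 1_{N\times N}$ terms only add positive-semidefinite contributions (using the Schur product theorem together with $\lambda_{\min}(\bm A \circ \bm B)\geq \lambda_{\min}(\bm A)\min_i B_{ii}$ for PSD $\bm A,\bm B$), it suffices to lower-bound a single well-chosen term — namely the $l=1$ term $\bm G^{(1)}\circ \dot{\bm G}^{(2)}\circ (\dot{\bm G}^{(3)}+\alpha_1\bm 1)\circ\cdots\circ(\dot{\bm G}^{(L)}+\alpha_{L-2}\bm 1)$. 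Here $\bm G^{(1)}=\bm{XX}^\top$, whose minimum eigenvalue concentrates at $\Theta(N/d)$ under Assumptions~\ref{assumption:distribution_1} and~\ref{assumption:distribution_2} — this is exactly the isotropic-data concentration step where the $1-e^{-d}$ probability enters, and this is where the ``new technique'' replacing the Gershgorin bound (the reference~\citep{10.1214/ECP.v19-3807}) is used to keep the probability nonvacuous. The factor $2\mu_1(\sigma_1)^2$ arises from extracting the first Hermite component of $\sigma_1$: writing $\bm G^{(2)}=\bm A^{(2)}$ via its Hermite expansion, the degree-one piece contributes $2\mu_1(\sigma_1)^2\,\bm{XX}^\top$ and all higher-degree pieces are PSD and discarded.

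The remaining task for the lower bound is to show that the diagonal entries of $\dot{\bm G}^{(p)}+\alpha_{p-2}\bm 1_{N\times N}$ are bounded below by $\beta_3(\sigma_{p-1})+\alpha_{p-2}$. The diagonal of $\dot{\bm G}^{(p)}$ equals $2\,\mathbb E_{w}[\sigma_{p-1}'(\sqrt{A^{(p-1)}_{ii}}\,w)^2]$, a one-dimensional Gaussian integral with variance $A^{(p-1)}_{ii}$; since $\|\bm x_i\|_2=1$ one can track how this variance grows through the layers (this is the role of the parameter $t:=3(1+\eta^2)(2+\eta^2)^{L-3}$ in Table~\ref{tab:activation_functions}, and why ``not very large $L$'' is assumed — the variance must stay in a controlled range), and then the $\beta_3$ column records the corresponding lower bound on that Gaussian integral of $(\sigma')^2$ for each activation (exact for the homogeneous ReLU/LeakyReLU, and via the integral estimates $f_S,f_T$ for Sigmoid/Tanh). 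Multiplying these diagonal lower bounds across $p=3,\dots,L$ gives the product $\prod_{p=3}^L(\beta_3(\sigma_{p-1})+\alpha_{p-2})$, and combining with $\lambda_{\min}(\bm G^{(1)})$ and the Hermite factor yields the claimed lower bound.

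For the upper bound, I would instead bound $\lambda_{\min}(\bm K^{(L)})\leq \frac1N\operatorname{tr}(\bm K^{(L)}) = \frac1N\sum_i K^{(L)}_{ii}$, so it suffices to control each diagonal entry $K^{(L)}_{ii}$. By Lemma~\ref{lemma:NTK_matrix_recursive_form} this diagonal entry is $G^{(L)}_{ii}+\sum_{l=1}^{L-1}G^{(l)}_{ii}\dot G^{(l+1)}_{ii}\prod_{p=l+2}^L(\dot G^{(p)}_{ii}+\alpha_{p-2})$, again all one-dimensional Gaussian integrals. I would upper-bound $G^{(p)}_{ii}=2\mathbb E_w[\sigma_{p-1}(\sqrt{A^{(p-1)}_{ii}}w)^2]$ by $\beta_1(\sigma_{p-1})A^{(p-1)}_{ii}$ (a bound of the form $\mathbb E[\sigma(aw)^2]\le \tfrac12\beta_1 a^2$ that holds with equality in the homogeneous case and is the content of the $\beta_1$ column), and similarly $\dot G^{(p)}_{ii}\le \beta_2(\sigma_{p-1})$ (the $\beta_2$ column; note $\beta_2=\beta_1$ except for Swish where $\sup(\sigma')^2\approx 1.22$). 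Unrolling the recursion $A^{(l)}_{ii}=G^{(l)}_{ii}+\alpha_{l-2}A^{(l-1)}_{ii}$ with $A^{(2)}_{ii}=G^{(2)}_{ii}$ and $\|\bm x_i\|_2^2=1$ produces exactly the nested products $\prod_{p=2}^{l-1}(\beta_1(\sigma_{p-1})+\alpha_{p-2})$, while the $\dot G$ factors above layer $l$ give $\prod_{p=l+1}^L(\beta_2(\sigma_{p-1})+\alpha_{p-2})$; summing over $l$ and the leading factor $N/d$ (from $\operatorname{tr}(\bm{XX}^\top)/N = \|\bm X\|_F^2/N$ concentrating near $N/d$ — wait, rather from the per-diagonal scaling) gives the stated upper bound.

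The main obstacle I anticipate is the lower-bound step: controlling $\lambda_{\min}(\bm{XX}^\top)$ at the order $\Theta(N/d)$ with high probability for general isotropic (not necessarily Gaussian) data, while simultaneously keeping the failure probability at $e^{-d}$ rather than something vacuous — this is precisely the ``negative probability'' pitfall the authors flag, and handling the Hadamard products without losing this scaling (ensuring the later-layer diagonal factors are genuinely bounded below, uniformly in $i$, despite the variance growth through depth) is the delicate part. Secondary care is needed in justifying the ``not very large $L$'' restriction, i.e. quantifying how $A^{(l)}_{ii}$ drifts with depth so that the Gaussian-integral constants $\beta_3$ (and the bounds behind $\beta_1$) remain valid, and in checking that the Hermite-truncation step for $\sigma_1$ loses nothing in the minimum eigenvalue.
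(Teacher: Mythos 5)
Your proposal follows the paper's route essentially step for step: the Schur-product inequality $\lambda_{\min}(\bm P\circ\bm Q)\ge\lambda_{\min}(\bm P)\min_i Q_{ii}$ to reduce to a single summand of the NTK recursion, the Hermite expansion of $\bm G^{(2)}$ to extract $2\mu_1(\sigma_1)^2\lambda_{\min}(\bm{XX}^\top)$, the isotropic-data concentration of \citet{10.1214/ECP.v19-3807} to get $\lambda_{\min}(\bm X^\top\bm X)\ge N/d-9N^{2/3}d^{1/3}$ with probability $1-e^{-d}$, the per-activation Gaussian-integral bounds $\beta_1,\beta_2,\beta_3$ on the diagonals of $\bm G^{(l)}$ and $\dot{\bm G}^{(l)}$ (tracked through the recursion $A^{(l)}_{ii}=G^{(l)}_{ii}+\alpha_{l-2}A^{(l-1)}_{ii}$ with the $G_{\max}$ control that motivates the ``not very large $L$'' condition), and a trace bound for the upper estimate. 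Two slips are worth fixing. First, the summand you retain should be the $l=2$ term, not the $l=1$ term: your own derivation (Hermite expansion of $\bm G^{(2)}$, product starting at $p=3$) is exactly the analysis of the $l=2$ term, whereas the $l=1$ term with $\bm G^{(1)}=\bm{XX}^\top$ carries an extra $\dot{\bm G}^{(2)}$ factor and would not produce the coefficient $2\mu_1(\sigma_1)^2$.

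Second, and more substantively, your upper bound as written does not produce the $N/d$ prefactor, and you flag this confusion yourself mid-argument. The paper obtains $N/d$ by bounding $\lambda_{\min}(\bm K^{(L)})$ by the trace divided by the number of \emph{non-zero} eigenvalues, asserted to be $d$, so that $\lambda_{\min}\le\frac1d\sum_{i=1}^N K^{(L)}_{ii}\le\frac{N}{d}\max_i(\cdots)$; dividing by $N$ as you do yields $\max_i K^{(L)}_{ii}$ with no $N/d$, and your attempted rescue via ``$\operatorname{tr}(\bm{XX}^\top)/N$ concentrating near $N/d$'' fails because Assumption~\ref{assumption:distribution_1} forces $\operatorname{tr}(\bm{XX}^\top)=N$, i.e.\ that ratio is exactly $1$. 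Your $\operatorname{tr}/N$ route in fact proves a \emph{stronger} upper bound (without the $N/d$ factor), which still implies the stated one since $N\ge d$, so the gap is in the accounting rather than in the validity of the conclusion — but the derivation of the stated constant needs the rank-$d$ trace argument, not the $1/N$ average.
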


{\bf Remark:} 
A not very large depth, e.g., $L \leq 10$, is often sufficient for the search phase in practical implementations~\citep{liu2018hierarchical, dong2021nats}. In addition, existing NAS algorithms such as DARTS tend to have architectures with wide and shallow cell structures as suggested by~\citet{Shu2020Understanding}. \cref{thm:lambda_min_inf_mixed} shows the upper and lower bounds of the minimum eigenvalue of NTK under the mix of activation functions and skip connections. The following conclusions can be drawn from our results:

\begin{enumerate}[leftmargin=3.3mm]
    \itemsep-0.2em 
    \item The bounds of the minimum eigenvalue depend significantly on the depth of the network $L$, the skip connections via $\alpha_p$, which makes the minimum eigenvalue increase fast as $L$ and the number of skip connections increase. Besides, the minimum eigenvalue is also affected by activation functions via $\beta_1, \beta_2, \beta_3$. Nevertheless, the lower bound is independent of $\beta_1$ and $\beta_2$. 
    \item Different activation functions lead to different tendencies (increase or decrease) on $\lambda_{\min} (\bm{K}^{(L)})$. As the depth increases, the lower bound $\lambda_{\min} (\bm{K}^{(L)})$ under ReLU remains unchanged, increases under LeakyReLU, and decreases when Sigmoid, Tanh or Swish applied, which brings in new findings when compared to the ReLU-network analysis of~\citet{pmlr-v139-nguyen21g}. For the upper bound for $\lambda_{\min} (\bm{K}^{(L)})$, we can see our results are positively correlated with the depth $L$. 
    \item One can see that $\mu_1(\sigma_1)$ is only related to the activation function of the first layer, which implies that the activation function in the first layer is very important as $\lambda_{\min} (\bm{K}^{(L)})$ largely depends on it.
\end{enumerate}

\subsection{Minimum Eigenvalue of NTK for finite-width}
\label{ssec:Minimum_eigenvalue_NTK_finite}
To study the finite-width, we firstly introduce the Jacobian of the network. Let $\bm{F}=[f(\bm{x}_1) ,\ldots,f(\bm{x}_N)]^T$. Then, the Jacobian $\bm{J}$ of $\bm{F}$ with respect to $\bm{W}$ is $\bm{J} =\left[\frac{\partial \bm{F}}{\partial\text{vec}(\bm{W}_1)},\ldots,\frac{\partial \bm{F}}{\partial\text{vec}(\bm{W}_L)}\right]$,
where $\bm{J}$ have dimension $\mathbb{R}^{(((L-2)\times m+d+1)\times m\times N}$.
The empirical Neural Tangent Kernel (NTK) matrix can be defined as $\bar{\bm{K}}^{(L)} =\bm{JJ}^{\top} =\sum_{l=1}^{L} \left[\frac{\partial \bm{F}}{\partial\text{vec}(\bm{W}_l)}\right] \left[\frac{\partial \bm{F}}{\partial\text{vec}(\bm{W}_l)}\right]^{\top}$.

Accordingly, we generalize~\cref{thm:lambda_min_inf_mixed} from the infinite-width to finite-width setting below.
\begin{theorem}
\label{thm:lambda_min_finite}

For an $L$-layer network defined by~\cref{eq:network}, let $\bm{K}^{(L)} = \bm{JJ}^{\top}$ be the NTK matrix, and the weights of the network be initialized as $[\bm{W}_l]_{i,j}\sim \mathcal{N} (0,1/m)$, for all $l \in [L]$. Under Assumptions~\ref{assumption:distribution_1}, with probability at least $1-\sum_{l=1}^{L-1}\exp(-\Omega (m))-\exp(-\Omega (1))$, $\lambda_{\min}(\bm{JJ}^{\top})$ can be bounded by:

\begin{equation*}
    \Theta \bigg(\prod_{i=2}^{L-1}(\beta_3(\sigma_i)+\alpha_{i-1} ) \bigg) \leq \lambda_{\min}(\bm{JJ}^{\top})\leq \sum_{k=0}^{L-1}\Theta \left(\prod_{i=k+2}^{L-1}(\beta_2(\sigma_i)+\alpha_{i-1} ) \right)\,,
\end{equation*}
where the definitions of $\beta_2$, and $\beta_3$ are the same as those in~\cref{thm:lambda_min_inf_mixed}.

\end{theorem}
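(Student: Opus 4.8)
\textbf{Proof proposal for Theorem~\ref{thm:lambda_min_finite}.}
The plan is to transfer the infinite-width bounds of~\cref{thm:lambda_min_inf_mixed} to the finite-width empirical NTK $\bm{J}\bm{J}^\top$ via a layerwise concentration argument, controlling the deviation of each feature map $\bm{f}_l(\bm{x})$ and its derivative map from their population counterparts $\bm{G}^{(l)}, \dot{\bm{G}}^{(l)}$. First I would write $\bm{J}\bm{J}^\top = \sum_{l=1}^{L}\bm{B}_l$, where $\bm{B}_l = \big[\tfrac{\partial \bm{F}}{\partial \mathrm{vec}(\bm{W}_l)}\big]\big[\tfrac{\partial \bm{F}}{\partial \mathrm{vec}(\bm{W}_l)}\big]^\top$ is PSD, so that each summand can be bounded from below individually and dropped when convenient for the upper bound. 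Using the chain rule on Eq.~\eqref{eq:network}, the $l$-th block factors into (a) a Gram matrix of the incoming features $\bm{f}_{l-1}(\bm{x}_i)$ and (b) a Hadamard product of backpropagated derivative terms $\sigma'_{p}$ together with the skip coefficients $\alpha_p$, mirroring the structure in~\cref{lemma:NTK_matrix_recursive_form}.

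The core of the argument is a concentration lemma: for each $l\in[L-1]$, conditioned on the previous layers, the empirical quantities $\tfrac{1}{m}\sum_{r}\sigma_l([\bm{W}_l\bm{f}_{l-1}]_r)\sigma_l([\bm{W}_l\bm{f}_{l-1}]_r)^\top$ and the analogous derivative sum concentrate around $\bm{G}^{(l+1)}$ and $\dot{\bm{G}}^{(l+1)}$ up to spectral error $\mathcal{O}(\sqrt{(\log m)/m})$ (or $\mathrm{poly}(N)/\sqrt{m}$), with failure probability $\exp(-\Omega(m))$ per layer; this is where the $\sum_{l=1}^{L-1}\exp(-\Omega(m))$ term in the probability comes from. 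To make this work uniformly I would propagate, by induction over $l$, a high-probability bound on $\|\bm{f}_l(\bm{x}_i)\|$ and on the conditioning of the intermediate Gram matrices, using the boundedness / Lipschitzness of the activation functions from~\cref{tab:activation_functions} (rather than ReLU-specific homogeneity, as emphasized in the technical-challenges paragraph). Since $L$ is constant, a union bound over the $L-1$ hidden layers keeps the total error multiplicative in $L$ and does not blow up. Combining the per-layer concentration with the infinite-width eigenvalue estimates then yields, for the lower bound, $\lambda_{\min}(\bm{B}_L) \geq \Theta\big(\tfrac{N}{d}\prod_{i=2}^{L-1}(\beta_3(\sigma_i)+\alpha_{i-1})\big)$ after discarding the other PSD blocks, where the extra $e^{-d}$ and $\exp(-\Omega(1))$ failure terms are inherited from invoking~\cref{thm:lambda_min_inf_mixed} (which already carries the $1-e^{-d}$ guarantee) and from the base-case concentration of $\bm{G}^{(2)} = 2\mathbb{E}[\sigma_1(\bm{X}\bm{w})\sigma_1(\bm{X}\bm{w})^\top]$ on the data Gram.

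For the upper bound I would use $\lambda_{\min}(\bm{J}\bm{J}^\top)\leq \tfrac{1}{N}\mathrm{tr}(\bm{J}\bm{J}^\top) = \tfrac{1}{N}\sum_{l}\mathrm{tr}(\bm{B}_l)$, or more tightly bound $\lambda_{\min}$ by testing against a near-minimal eigenvector; either way each $\mathrm{tr}(\bm{B}_l)/N$ concentrates around $\tfrac{N}{d}\prod_{i=k+2}^{L-1}(\beta_2(\sigma_i)+\alpha_{i-1})$ with the $\beta_2$ constants (the second-moment bounds of the activations), and summing over the $L$ blocks produces exactly the stated $\sum_{k=0}^{L-1}\Theta(\cdot)$ form; here no $e^{-d}$ term is needed since we do not invoke the data-dependent lower bound, matching the slightly better probability claimed for the upper bound.

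The main obstacle I anticipate is the layerwise error propagation for the \emph{lower} bound: a naive induction loses a multiplicative constant at each layer, and because the factors $\beta_3(\sigma_i)+\alpha_{i-1}$ can themselves be small (e.g. $\beta_3 < 1$ for Sigmoid/Tanh/Swish), one must ensure the accumulated concentration error stays a lower-order term relative to $\prod_i(\beta_3(\sigma_i)+\alpha_{i-1})$ — this is precisely why the ``not very large $L$'' hypothesis is needed and why the width must be taken large enough ($m = \Omega(\mathrm{poly}(N,L))$ implicitly) for the $\exp(-\Omega(m))$ events to be negligible. Handling the Hadamard-product structure of the backpropagation term without the identity $\sigma(x)=x\sigma'(x)$ is the second delicate point, which I would address by bounding the Hadamard factors in spectral norm via their diagonal entries (Schur product theorem) and the $\beta$-constants of~\cref{tab:activation_functions}.
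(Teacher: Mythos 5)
Your skeleton matches the paper's: write $\bm{JJ}^{\top}$ as a sum over layers of PSD blocks, each a Hadamard product of a feature Gram $\bm{F}_k\bm{F}_k^{\top}$ and a backpropagation Gram $\bm{B}_{k+1}\bm{B}_{k+1}^{\top}$; for the lower bound drop all but one block and apply $\lambda_{\min}(\bm{P}\circ\bm{Q})\geq\lambda_{\min}(\bm{P})\min_i Q_{ii}$; for the upper bound use the trace over the $d$ nonzero eigenvalues; control the per-layer factors by the $\beta_2,\beta_3$ constants via a layerwise induction on $\|\bm{f}_l(\bm{x})\|_2$ and on $\|\bm{D}_k\prod_l(\bm{W}_l\bm{D}_l+\alpha_{l-1}\bm{I})\bm{W}_L\|_2$ using Lipschitzness rather than ReLU homogeneity (the paper's Lemmas~\ref{lemma:lemma_C.1_in_ICML}--\ref{lemma:lemma_C.6_in_ICML_}, proved with Bernstein, Hoeffding and Hanson--Wright). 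Your reading of where each failure-probability term comes from is also essentially right.

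The genuine divergence --- and the step I would push back on --- is your ``core concentration lemma'': spectral concentration of the empirical feature and derivative Grams around the population blocks $\bm{G}^{(l+1)},\dot{\bm{G}}^{(l+1)}$, followed by a transfer of \cref{thm:lambda_min_inf_mixed}. The paper never does this, and for good reason. First, to preserve a lower bound of order $N/d$ you would need the spectral error to be $o(N/d)$, which for an $N\times N$ Gram built from $m$ random features forces $m=\Omega(\mathrm{poly}(N))$ and, after a union bound over entries, a failure probability scaling like $N^2 e^{-\Omega(m)}$ rather than the stated $\sum_{l}e^{-\Omega(m)}$. Second, at finite width the backpropagation Gram $\bm{B}_{k+1}\bm{B}_{k+1}^{\top}$ does \emph{not} factor as a Hadamard product of per-layer derivative Grams (the weights are shared between forward and backward passes), so ``concentrating it around $\bigcirc_l\dot{\bm{G}}^{(l)}$'' is not even the right target; only its diagonal entries are needed and only those are controlled. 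The paper sidesteps all of this by anchoring the lower bound at the $k=0$ block, whose feature Gram is the \emph{deterministic} data Gram $\bm{XX}^{\top}$ (handled directly by \cref{prop:XXT_XTX} under isotropy, which is the sole source of the $e^{-d}$), so that only scalar concentration of the row norms $\|(\bm{B}_1)_{i:}\|_2^2$ is required --- note this also explains why the finite-width lower bound carries no Hermite coefficient $\mu_1(\sigma_1)^2$, unlike the infinite-width one you propose to transfer. Relatedly, your lower bound should come from the first-layer block ($\bm{B}_1$ in your notation, the gradient with respect to $\bm{W}_1$), not $\bm{B}_L$: the last-layer block has no backpropagation product at all and its feature Gram $\bm{F}_{L-1}\bm{F}_{L-1}^{\top}$ is exactly the object whose minimum eigenvalue is hard to control at finite width.
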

{\bf Remark:} 
\cref{thm:lambda_min_finite} achieves a similar result as~\cref{thm:lambda_min_inf_mixed} if the width $m$ is large.

\subsection{Connection to Generalization Error Bound}
\label{ssec:NTK_Connection_to_Generalization}

Based on the aforementioned upper and lower bounds of the minimum eigenvalue of NTK under different settings, here we establish its relationship with the generalization error of DNNs. We provide a bound on the expected 0-1 error obtained by Algorithm~\ref{alg:algorithm_DARTS}.

\begin{theorem}
\label{thm:NTK_Generalization}

Given a DNN defined by~\cref{eq:network} with $\bm{y} = (y_1, \dots, y_N)^{\top}$ determined by Algorithm~\ref{alg:algorithm_DARTS} with the step size of SGD $\gamma = \kappa C_1 \cdot \sqrt{\bm{y}^{\top}({\bm{K}^{(L)}})^{-1}\bm{y}}/({m}\sqrt{N}) $ for some small enough absolute constant $\kappa$. Under Assumptions~\ref{assumption:distribution_1} and~\ref{assumption:activation_functions}, for any $\delta \in \left ( 0,e^{-1} \right ]$ and a not very large $L$, if the width $m \geq \hat{m}$, where $\hat{m}$ depends on $ \lambda_{\min}(\bm{K}^{(L)}), \delta, N$, and $L$, then with probability at least $1-\delta$ over the randomness of $\bm{W}^{(1)}$, we obtain the following high probability bound:
\begin{equation*}
\mathbb{E}[\ell_{\mathcal{D} }^{0-\!1}\!(\hat{\bm{W}}\!)] \leq \tilde{\mathcal{O} } \left(\! C_2\sqrt{\frac{\bm{y}^{\top}  ({\bm{K}^{(L)}})^{-1} \bm{y}}{N}} \!\right) + \mathcal{O}\!\left(  \sqrt{\frac{\log(1/\delta )}{N} }  \right)\,,
\end{equation*}
where $C_1 = \sqrt{L}/(3\mathrm{Lip}_{\max}+1)^{L-1}$ and $C_2 = \sqrt{L}(3\mathrm{Lip}_{\max}+1)^{L-1}$ are two constants depending only on $L$ and $\mathrm{Lip}_{\max}$ is the maximum value of the Lipschitz constants of the all activation functions.
\end{theorem}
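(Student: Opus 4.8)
\textbf{Proof proposal for Theorem~\ref{thm:NTK_Generalization}.}

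The plan is to follow the now-standard NTK-based generalization pipeline for SGD on over-parameterized networks (in the spirit of \citet{cao2019generalization, arora2019fine}), but with care taken so that every estimate holds under the \emph{mixed} activation functions and skip connections allowed by Eq.~\eqref{eq:network} rather than only for $\mathrm{ReLU}$. First I would set up the linearization of $f(\bm x;\bm W)$ around the Gaussian initialization $\bm W^{(1)}$: show that on a Euclidean ball of radius $\omega = \mathcal{O}(\mathrm{poly}(N,L)/\sqrt{\lambda_{\min}})$ around $\bm W^{(1)}$ the network is approximately a linear function of the parameters, with the Jacobian nearly constant and the empirical NTK $\bar{\bm K}^{(L)}$ close to its infinite-width limit $\bm K^{(L)}$. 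This is exactly where I need the finite-width control from~\cref{thm:lambda_min_finite} (to guarantee $\lambda_{\min}(\bar{\bm K}^{(L)}) = \Theta(\lambda_{\min}(\bm K^{(L)}))$ once $m \geq \hat m$), together with Lipschitz/smoothness bounds on each $\sigma_l \in \mathcal F_\sigma$ — this replaces the homogeneity of $\mathrm{ReLU}$ used in prior work, as flagged in the technical-challenges paragraph.

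Next I would exhibit a reference parameter $\bm W^\star$ near initialization that interpolates (or nearly interpolates) the training labels with small norm: the natural choice gives $\|\bm W^\star - \bm W^{(1)}\|_2 = \mathcal{O}(\sqrt{\bm y^\top (\bm K^{(L)})^{-1}\bm y}/\sqrt m)$, which is why that quantity appears in the step size $\gamma$ and in the final bound. Using convexity of the cross-entropy loss $\ell(z) = \log(1+e^{-z})$ along the (almost) linear model, I would run the standard online-to-batch / regret argument for constant-step SGD with one pass: bounding the cumulative training loss $\frac1N\sum_{i=1}^N \ell(y_i f(\bm x_i;\bm W^{(i)}))$ by $\tilde{\mathcal O}\big(\|\bm W^\star-\bm W^{(1)}\|_2^2/(\gamma N) + \gamma \cdot (\text{grad bound})\big)$ plus the linearization error, then optimizing over $\gamma$ to land on $\gamma = \kappa C_1 \sqrt{\bm y^\top (\bm K^{(L)})^{-1}\bm y}/(m\sqrt N)$ and a surrogate-loss bound of order $C_2\sqrt{\bm y^\top (\bm K^{(L)})^{-1}\bm y / N}$. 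Bounding $\bm y^\top(\bm K^{(L)})^{-1}\bm y \le \|\bm y\|_2^2/\lambda_{\min}(\bm K^{(L)})$ converts this into the $\sqrt{\bm y^\top\bm y/(\lambda_{\min}(\bm K^{(L)}) N)}$ form stated.

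Finally I would pass from the surrogate (cross-entropy) risk to the expected $0$-$1$ error via a uniform-convergence / Rademacher-complexity argument over the low-complexity ball of near-initialization networks: since $\hat{\bm W}$ is drawn uniformly from the SGD iterates, all of which stay within the ball $\{\bm W : \|\bm W - \bm W^{(1)}\|_2 \le \omega\}$ with high probability, the generalization gap is controlled by the Rademacher complexity of that class, which is $\mathcal O(\omega \cdot \|\bm J\|/\sqrt N)$, contributing the $\mathcal{O}(\sqrt{\log(1/\delta)/N})$ term together with a McDiarmid/Azuma concentration step; the $1\{yf(\bm x;\bm W)<0\} \le \ell(yf(\bm x;\bm W))/\log 2$ comparison closes the chain. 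The main obstacle I anticipate is making the linearization and Jacobian-perturbation bounds uniform across the heterogeneous activation functions and skip-connection patterns: the radius $\omega$, the smoothness constants, and hence $\hat m$ all depend (through constants $C_1, C_2$ that are allowed to grow with $L$) on products of per-layer Lipschitz constants of the $\sigma_l$ and on the $\alpha_l$'s, and keeping these estimates consistent with the $\beta_i$-based eigenvalue bounds of Theorems~\ref{thm:lambda_min_inf_mixed}--\ref{thm:lambda_min_finite} — so that $\hat m$ genuinely depends only on $\lambda_{\min}(\bm K^{(L)}), \delta, N, L$ — is the delicate bookkeeping. A secondary subtlety is that the ``not very large $L$'' hypothesis is needed precisely to absorb these $L$-dependent products into the constants $C_1, C_2$ without them blowing up.
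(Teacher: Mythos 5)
Your proposal follows essentially the same route as the paper's proof: the paper also adapts the Cao--Gu pipeline by replacing ReLU homogeneity with Lipschitz continuity of the mixed activations (Lemmas~\ref{lemma:bound_for_perturbation}--\ref{lemma:lemma_B.3_in_GuQuanquan}), runs the one-pass constant-step SGD regret argument against a near-initialization reference point of radius $Rm^{-1/2}$ with $R\propto\sqrt{\bm y^\top(\bm K^{(L)})^{-1}\bm y}$ (Lemma~\ref{lemma:lemma_4.3_in_GuQuanquan}), invokes the online-to-batch conversion of \citet{cao2019generalization} for the $0$-$1$ error and the $\sqrt{\log(1/\delta)/N}$ term, and finishes with the Courant--Fischer bound $\bm y^\top(\bm K^{(L)})^{-1}\bm y\leq\bm y^\top\bm y/\lambda_{\min}(\bm K^{(L)})$. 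The only cosmetic difference is that you frame the surrogate-to-$0$-$1$ step partly via Rademacher complexity, whereas the paper relies purely on the martingale/online-to-batch argument; both yield the same concentration term.
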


{\bf Remark:} According to the courant minimax principle~\citep{10.5555/248979}: $\frac{1}{\lambda_{\min}(\bm{K}^{(L)})} =\lambda_{\max}(({\bm{K}^{(L)}})^{-1}) = \max \frac{\bm{y}^{\top}({\bm{K}^{(L)}})^{-1})\bm{y}}{\bm{y}^{\top}\bm{y}}$, that means $\bm{y}^{\top}(({\bm{K}^{(L)}})^{-1})\bm{y} \leq \frac{\bm{y}^{\top}\bm{y}}{\lambda_{\min}(\bm{K}^{(L)})}$, then the minimum eigenvalue plays a significant role in our analysis as well as our application on NAS.
The quantity $\bm{y}^{\top}(({\bm{K}^{(L)}})^{-1})\bm{y}$ can be independent of $N$ in some certain cases~\citep{arora2019fine}, leading to a classical $\mathcal{O}({N^{-1/2}})$ convergence rate for generalization.

\cref{thm:NTK_Generalization} gives an algorithm-dependent generalization error bound of DNNs defined by~\cref{eq:network} trained with SGD with different activation functions and skip connections.
If $m$ is large enough, the learning rate is infinitesimal, which means the generalization error bound mainly depends on the NTK matrix, similarly to \citet{cao2019generalization,du2019gradient}. Admittedly, our result is in an exponential increasing order of the depth. However, in practice, the depth $L$ during the search phrase is smaller than $20$, or even $10$~\citep{liu2018hierarchical, dong2021nats}. As we detail in~\cref{sec:discussion}, our results extend previously known results.

According to~\cref{thm:NTK_Generalization}, the generalization performance of DNNs is controlled by the minimum eigenvalue of the NTK matrix, which is in turn affected by different activation functions and skip connections, as discussed in~\cref{thm:lambda_min_inf_mixed}. Apart from the NTK matrix itself, the condition $m \geq \hat{m}$ is also affected by different activation functions, which implies that the required minimum width is different in these cases.

\subsection{Proof sketch}

Our work extends the proofs of \citet{NEURIPS2020_8abfe8ac,cao2019generalization} beyond ReLU, which is critical for enabling search across activations. 
The extension to other activation functions and skip connections is non-trivial due to non-linearity, inhomogeneity and nonmonotonicity.

To derive the upper and lower bounds on the minimum eigenvalue, we start from~\cref{lemma:NTK_matrix_recursive_form} on the NTK formula under the mixed activation functions and skip connections, and we transform the minimum eigenvalue estimation to the computation (estimation) of the bound $\bm G$, $\dot{\bm G}$ ($\lambda_{\min}(\bm G)$). The infinite-width and finite-width are included in~\cref{sec:infinitely_width} and~\ref{sec:finitely_width} respectively.
For the upper bound, we estimate the diagonal elements of $\bm G$ and use the property that the minimum eigenvalue is less than the mean of the diagonal elements of a matrix to prove.
For the lower bound, we use Hermite expansion. Combining these results concludes the proof.

To derive the generalization error bounds, we need a series of lemmas (see~\cref{sec:Relationship_NTK_Generalization}).  If the input weights are close, the output of each neuron with any activation function does not change too much (see~\cref{lemma:bound_for_perturbation}).
If the initilizations are close, the neural network output $f(\bm x; \bm W)$ is almost linear in $\bm W$ (see~\cref{lemma:lemma_4.1_in_GuQuanquan}), and the loss function $\ell[ y_i f(\bm{x}_i;\bm{W}) ]$ is almost a convex function of $\bm W$ for any $i \in [N]$ (see~\cref{lemma:lemma_4.2_in_GuQuanquan}).
Accordingly, the gradient and loss of the neural network can be upper bounded by Lemmas~\ref{lemma:lemma_B.3_in_GuQuanquan} and~\ref{lemma:lemma_4.3_in_GuQuanquan}, respectively, which concludes the proof when combined with some relevant results ~\citep{cao2019generalization,pmlr-v97-allen-zhu19a}.
Further discussion on the differences is deferred to \cref{sec:discussion}.
\section{Numerical Validation}
\label{sec:experiment}
To validate our theoretical results, we conduct a series of experiments on NAS. Firstly, we simulate the NTK matrices under different depths in~\cref{ssec:Simulation_of_NTK} to verify the relationship between the minimum eigenvalue of NTK and the network depth $L$ in~\cref{thm:lambda_min_inf_mixed}. In~\cref{ssec:DARTS_experiment} we use the DARTS algorithm~\citep{liu2019darts} to conduct experiments on activation function search and skip connection search under the search space of~\cref{eq:network}. Finally, we use the minimum eigenvalue of NTK to guide the training of NAS on the benchmark NAS-Bench-201~\citep{dong2020nasbench201}, with a comparison of recent NAS algorithms. Additional experiments on NAS-Bench-101~\citep{ying2019bench} and transfer learning are deferred to~\cref{ssec:bench101_experiment} and~\ref{ssec:transfer_learning_experiment}.

\subsection{DARTS experiment}
\label{ssec:DARTS_experiment}

In this section we employ a typical NAS algorithm, DARTS \citep{liu2019darts}, to assess our theoretical results on activation functions and skip connections. We select Fashion-MNIST~\citep{xiao2017fashion} as a standard benchmark. Details about Fashion-MNIST are shared in~\cref{ssec:dataset}.

{\bf Search space and search strategy:} Our search space is defined by~\cref{eq:problem_setting_parameter_space} on skip connections, activation functions, and weight parameters.
We follow the search strategy of \citet{liu2019darts} in a two-level scheme, one level is for weight parameter search $\bm W$ and the other level is for architecture search $\{\bm \alpha, \bm \sigma \} $, which results in the final optimal architecture $\{ \bm \alpha^*, \bm \sigma^*, \bm W^* \}$. 
Different from \citet{liu2019darts}, the activation function search and the skip connection search in our setting is independent.
To obtain $\bm \sigma^*$, we use the softmax function to normalize the weights and choose the specific activation function with the highest probability in each layer.
To obtain $\bm \alpha^*$, we initialize each entry $\alpha_l = 1/2$ ($l\in[L-2]$), constrain it to $[0,1]$ during training, and retain the skip connection when $\alpha^*_l > 1/2$. 

\begin{figure}[t]
\centering
    \subfigure[activation functions $\bm \sigma$]{\includegraphics[width=0.4\linewidth]{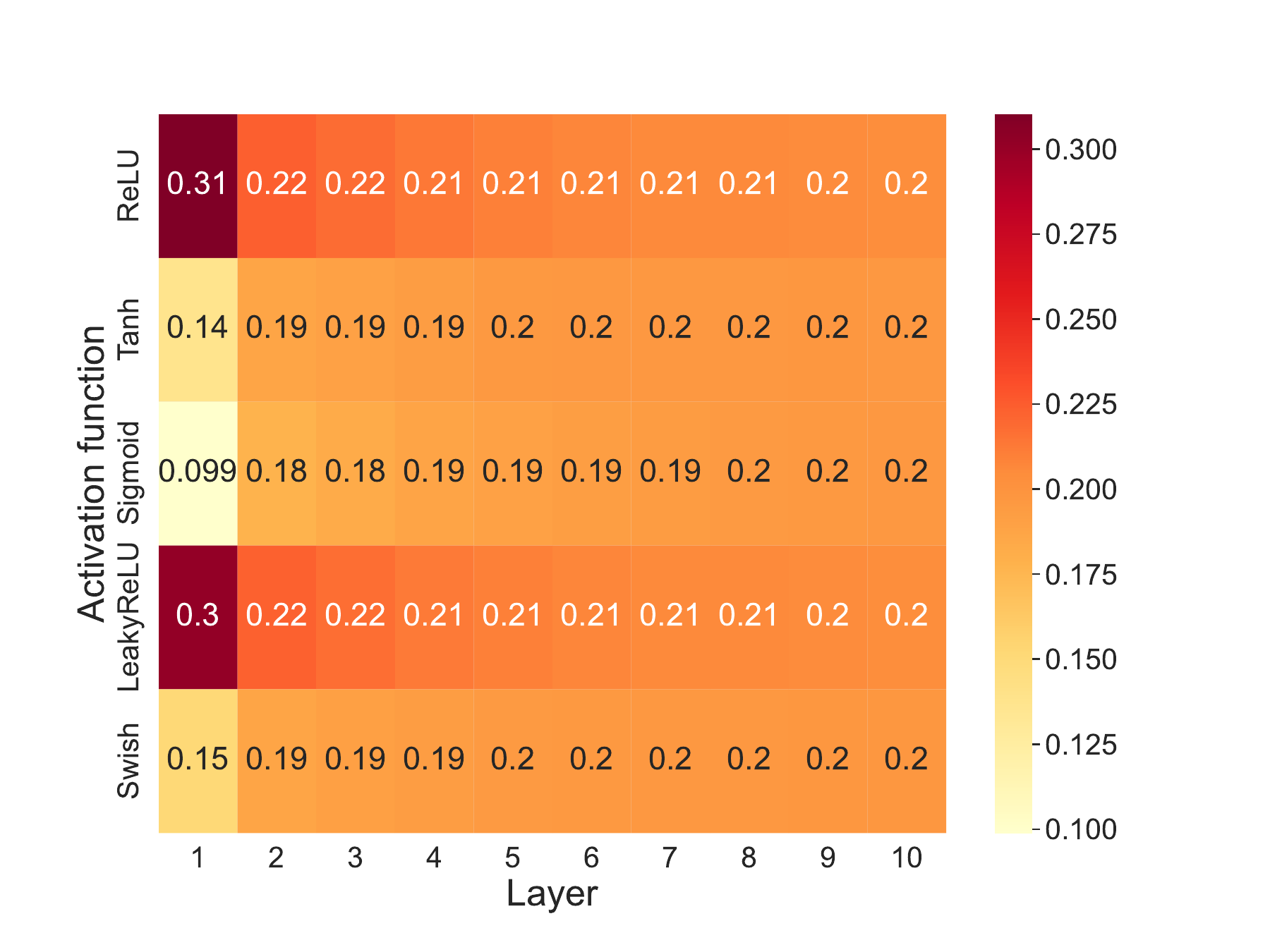}\label{fig:DARTS_heatmap1}\hspace{-1mm}}
    \subfigure[skip connections $\bm \alpha$]{\includegraphics[width=0.4\linewidth]{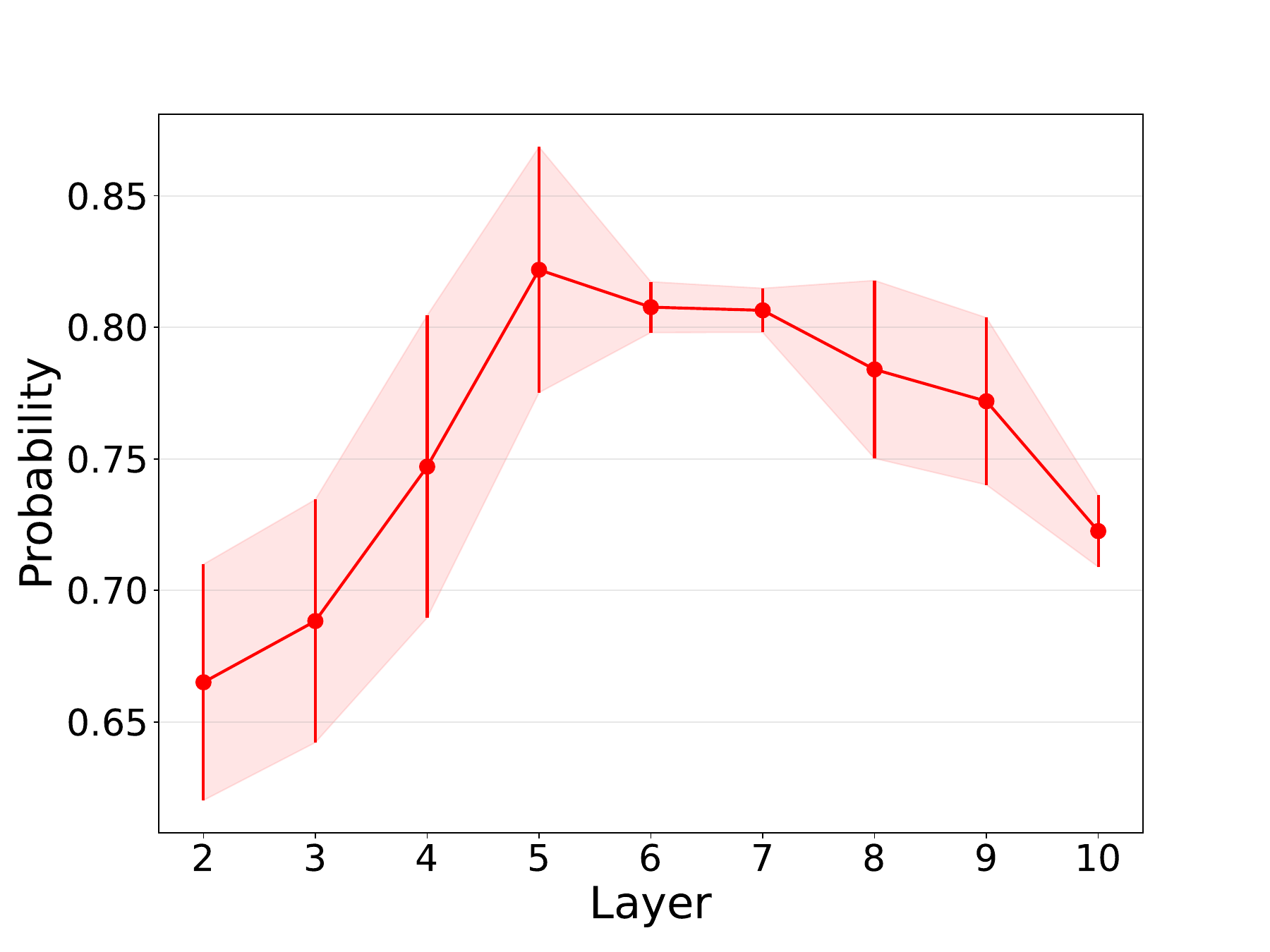}\vspace{-9mm}
    \label{fig:DARTS_heatmap2}}
\caption{Architecture search results on activation functions indicated by the probability of $\bm \sigma$ in (a) and skip connections indicated by $\bm \alpha$ in (b). We notice that for each layer, ReLU and LeakyReLU are selected a the higher probability.}
\label{fig:DARTS_heatmap}
\end{figure}

\begin{table*}[tb]
\centering
\caption{Results on CIFAR-10, CIFAR-100 and ImageNet-16 as part of NAS-Bench-201. The best performance is highlighted by {\bf bold}.
The results of NASWOT, TE-NAS and KNAS are reported from the corresponding papers. The results of ResNet, NAS-RL and DARTS are reported in ~\citep{pmlr-v139-xu21m}. The results illustrate that Eigen-NAS outperforms the prior art in CIFAR-100 and Imagenet-16. In particular, Eigen-NAS outperforms KNAS in all three cases when the same number of top-$k$ architectures are selected, i.e., $k=20$, and still achieves promising performance when smaller $k=5$ used, which we attribute to the more precise minimum eigenvalue estimation.} 
\begin{tabular}{l@{\hspace{0.25cm}} c@{\hspace{0.2cm}}c@{\hspace{0.2cm}}c@{\hspace{0.2cm}}c@{\hspace{0.2cm}} c} 
    \hline
    Type & Model/Algorithm & CIFAR-10 (\%) & CIFAR-100 (\%) & ImageNet-16 (\%)\\
    \hline
    w/o Search & ResNet ~\citep{7780459} & $\bm{93.97}$ & $70.86$ & $42.63$\\
    Search & NAS-RL ~\citep{45826}& $92.83$ & $70.71$ & $44.10$\\
    Gradient & DARTS ~\citep{liu2019darts} & $88.32$ & $67.34$ & $33.04$\\
    Train-free & NASWOT ~\citep{mellor2021neural} & $92.96$ & $70.03$ & $44.43$\\
    Train-free & TE-NAS ~\citep{chen2021neural} & $93.90$ & $71.24$ & $42.38$\\
    Train-free & KNAS ~\citep{pmlr-v139-xu21m} ($k=20$) & $93.38$ & $70.78$ & $44.63$\\
    Train-free & NASI (T)~\citep{shu2022nasi} & $93.08\pm0.24$ & $69.51\pm0.59$ & $40.87\pm0.85$\\
    Train-free & NASI (4T)~\citep{shu2022nasi} & $93.55\pm0.10$ & $71.20\pm0.14$ & $44.84\pm1.41$\\
    Train-free & \textbf{Eigen-NAS} ($k=20$) & $93.46\pm0.01$ & $\bm{71.42}\pm0.63$ & $\bm{45.54}\pm0.04$\\
    Train-free & \textbf{Eigen-NAS} ($k=5$) & $93.43\pm0.08$ & $69.92\pm1.82$ & $45.53\pm0.06$\\
    \hline
\end{tabular}
\label{tab:NAS_benchmark_experiment}
\end{table*}

{\bf NAS Results:} 
We conduct the experiment via DARTS on a feedforward neural network with $L=10$ and $m=1024$, with 5 runs.
After training, the probability of these activation functions and skip connections in each layer is reported in Figure~\ref{fig:DARTS_heatmap1} and~\ref{fig:DARTS_heatmap2}, respectively. We have the following findings:
Firstly, after the search process, LeakyReLU and ReLU are selected as the activations with the highest probability in each layer.
This coincides with our theoretical results in~\cref{thm:lambda_min_inf_mixed}. One minor difference is that the probability of LeakyReLU is slightly inferior to ReLU in practice.
The reason behind this could be the sparsity of ReLU ~\citep{dedios2020sparsity}.
Secondly, in the first layer, we observe the largest difference on the probability of various activation functions. As the network becomes deeper, the differences decrease with the last layers having no difference between different activation functions. This phenomenon matches our theory well. To be specific, in~\cref{thm:lambda_min_inf_mixed}, our result on the minimum eigenvalue largely depend on the first layer and its Hermite coefficient. Besides, this result also provides a justification on omitting the high-order terms while retaining the first layer activation terms. Thirdly, for the skip search result, we find that the skip connections are required in each layer when $L\leq 10$, as suggested by our theoretical results in~\cref{thm:lambda_min_inf_mixed}. It also verifies the results of~\citet{zhou2020theory}. We expect that the skip connections might not be required in each layer for deep neural networks, since their capacity can already be enough~\citep{7780459}; but we defer the related study to a future work.

Interestingly, the search strategy favors the activation functions and the skip connections with larger minimum eigenvalue of NTK, which enjoy better generalization performance.
This result also motivates us to study the following question: \emph{can the minimum eigenvalue of NTK guide the search process in NAS?}
We provide an affirmative answer in the next section with experimental validations.

\subsection{NAS-Bench-201 Experiment}
\label{ssec:NAS_201_experiment}

In this experiment, we use the minimum eigenvalue to guide NAS on NAS-Bench-201~\citep{dong2020nasbench201}. Each experiment is repeated 5 times, while it can run on a single GPU in a few hours. 

{\bf Benchmark and baselines:} NAS-Bench-201~\citep{dong2020nasbench201} is a commonly used benchmark for NAS algorithm evaluation, which includes three datasets: a) CIFAR-10~\citep{krizhevsky2014cifar}, b) CIFAR-100~\citep{krizhevsky2014cifar} and c) ImageNet-16 ~\citep{chrabaszcz2017downsampled} for image classification. Details on the datasets exist in~\cref{ssec:dataset}. Apart from that, we evaluate the proposed approach with some baselines including ResNet, DARTS, RL based algorithm and some train-free algorithms.

{\bf Algorithm procedure:} Our algorithm, called Eigen-NAS, also belongs in the train-free category. Eigen-NAS follows KNAS, which leverages the minimum eigenvalue of NTK to guide NAS. 
However, due to the $\mathcal{O}(N^3)$ time complexity of computing these eigenvalues, KNAS instead computes $\| \bm K \|_{\mathrm{F}}$. However, from the expression $\lambda_{\min} (\bm K) \leq \frac{1}{N}\sum_{i=1}^{N} K_{ii} \leq \| \bm K \|_{\mathrm{F}}$ we utilize the first inequality in Eigen-NAS to obtain a tighter (and more computationally efficient) bound to $\lambda_{\min}$. 
The computation cost of our method is $\mathcal{O}(N)$, which is less than computing the Frobenius norm ($\mathcal{O}(N^2)$). Sequentially, the top-$k$ best candidates architectures are chosen in KNAS and our Eigen-NAS, and then the best architecture is chosen by the validation error. Please refer to the results in~\cref{tab:NAS_benchmark_experiment}. 
Due to the page limit, the algorithm is located in~\cref{sec:additional_experiments}. \looseness-1

{\bf Results:} The experimental results in~\cref{tab:NAS_benchmark_experiment} verify that Eigen-NAS guided by the proposed metric above achieves the best performance on both the CIFAR-100 and ImageNet-16 datasets, and competitive performance on CIFAR-10, outperforming KNAS in all three cases when $k=20$ for both methods. Even when we consider a smaller $k=5$, Eigen-NAS can outperform KNAS, which we attribute to the more precise minimum eigenvalue estimation. 

\section{Conclusion}
\label{sec:conclusion}

In this work, we explore the relationship between the minimum eigenvalue of NTK and neural architecture search. We derive upper and lower bounds on the minimum eigenvalues of NTK for (in)finite residual networks under different mixtures of activation functions, and establish a connection between the minimum eigenvalues and the generalization properties of the special search space: activation function and skip connection search of NAS. Our theoretical results on various activation functions and mixed activation cases can also be a tool for deep learning theory researchers to prove generic results rather than studying a single architecture, e.g., ReLU networks. In addition, we use the minimum eigenvalue as a guide for the training of NAS in a train-free method, which greatly exceeds the efficiency of the classic NAS algorithm. When compared with existing train-free methods, our algorithm, called Eigen-NAS, achieves a higher accuracy. We posit that this will be useful for studying computationally efficient methods on NAS.

A core limitation is whether our proof framework can cover more general structures in NAS, such as the most commonly used convolutional neural networks (CNNs). Even though this seems possible, this is non-trivial due to the tensors that emerge. To be specific, it requires the element-recursive form of NTK matrices in~\citet{arora2019exact} to be transformed into a global-recursive form (similar to~\cref{lemma:NTK_matrix_recursive_form}), then analyze its minimum eigenvalue. Besides, the contraction operation of tensors, the locality and boundary effects of convolutional layer in CNNs make the analysis difficult. Therefore, we believe this is a topic on its own right.
Another limitation of our work is that it does not analyze the various algorithms proposed for searching through the search space. We believe that a deeper understanding of such algorithms, such as DARTS can provide further insights into how to design improved search spaces. 
In addition, the upper and lower bounds of the minimum eigenvalues of the NTK matrices for different activation functions given by ~\cref{thm:lambda_min_inf_mixed} have some overlaps, which means that our suggestions on activation functions selection based on these bound appear a bit vacuous in theory but still coincide with our experimental validations.
Maybe, a tighter bound without overlap for different activation functions is needed to address this theoretical issue.
 
\section*{Acknowledgements}
\label{sec:acks}

We are also thankful to the reviewers for providing constructive feedback. Research was sponsored by the Army Research Office and was accomplished under Grant Number W911NF-19-1-0404. This work was supported by Hasler Foundation Program: Hasler Responsible AI (project number 21043). This work was supported by SNF project – Deep Optimisation of the Swiss National Science Foundation (SNSF) under grant number 200021\_205011. This work was supported by Zeiss. This project has received funding from the European Research Council (ERC) under the European Union's Horizon 2020 research and innovation programme (grant agreement n° 725594 - time-data). Corresponding authors: Fanghui Liu and Zhenyu Zhu.

\newpage
\bibliography{literature}
\bibliographystyle{abbrvnat}

\newpage
\appendix
\onecolumn
\allowdisplaybreaks
\section*{Appendix introduction} 
\label{sec:appendix_intro}
The Appendix is organized as follows:
\begin{itemize}
    \item In ~\cref{sec:background}, we state the introductory notations and definitions.
    \item We prove~\cref{thm:lambda_min_inf_mixed} in~\cref{sec:infinitely_width}. We also provide the result when the residual network only has the same activation function.
    \item In~\cref{sec:finitely_width}, we extend the results of infinitely width to finite-width and provide the proof for them.
    \item In~\cref{sec:Relationship_NTK_Generalization}, we prove~\cref{thm:NTK_Generalization}.
    \item In~\cref{sec:discussion}, we discussion some key points of the proof and the motivation of the analysis.
    \item In \cref{sec:additional_experiments}, we detail our experimental settings, our Eigen-NAS algorithm as used in~\cref{ssec:NAS_201_experiment}. We conduct additional numerical validations.
    \item Finally, in~\cref{sec:nas_societal_impact}, we discuss the societal impact of this work.
\end{itemize}

\section{Background}
\label{sec:background}

\subsection{Symbols and Notation}
\label{sec:symbols_and_notations}
In the paper, vectors are indicated with bold small letters, matrices with bold capital letters. To facilitate the understanding of our work, we include the some core symbols and notation in \cref{table:symbols_and_notations}. 

\begin{table}[ht]

\caption{Core symbols and notations used in this project.}
\label{table:symbols_and_notations}
\small
\centering
\begin{tabular}{c | c | c}
\toprule
Symbol & Dimension(s) & Definition \\
\midrule
$\mathcal{N}(\mu,\sigma) $ & - & Gaussian distribution of mean $\mu$ and variance $\sigma$ \\
$1\left \{A\right \}$ & - & Indicator function for event $A$\\
$[L]$ & - & Shorthand of $\left \{ 1,2,\dots ,L \right \}$\\
$\oplus$ & - & Direct sum\\
$\mathcal{O}$, $o$, $\Omega$ and $\Theta$ & - & Standard Bachmann–Landau order notation\\
$\circ$ & - & Element-wise hadamard product\\
\midrule
$\left \| \bm{v} \right \|_2$ & - & Euclidean norms of vectors $\bm{v}$ \\
$\left \| \bm{M} \right \|_2$ & - & Spectral norms of matrices $\bm{M}$ \\
$\left \| \bm{M} \right \|_{\mathrm{F}}$ & - & Frobenius norms of matrices $\bm{M}$ \\
$\left \| \bm{M} \right \|_{\mathrm{\ast}}$ & - & Nuclear norms of matrices $\bm{M}$ \\
$\lambda(\bm{M})$ & - & Eigenvalues of matrices $\bm{M}$ \\
$\bm{M}^{[l]}$ & - & $l$-th row of matrices $\bm{M}$\\
$\bm{M}_{i,j}$ & - & $(i, j)$-th element of matrices $\bm{M}$\\
\midrule
$N$ & - & Size of the dataset \\
$d$ & - & Input size of the network \\
$L$ & - & Depth of the network \\
$m$ & - & Width of intermediate layer\\
$\alpha_l$ & $\mathbb{R}$ & A binary variable measures whether there is a skip connection in the $l$-th layer \\
$\sigma_l$ & - & The activation function of $l$-th layer \\
$\beta_1, \beta_2, \beta_3$ & $\mathbb{R}, \mathbb{R}, \mathbb{R}$ & Three constants defined in~\cref{tab:activation_functions} \\
$\mu_i(\sigma)$ & $\mathbb{R}$ & The $i$-th Hermite coefficient of the activation function $\sigma$\\
\midrule
$\bm{x}_i$ & $\mathbb{R}^{d}$ & The $i$-th data point \\
$y_i$ & $\mathbb{R}$ & The $i$-th target vector \\
$\bm{W}_1$ & $\mathbb{R}^{m \times d}$ & Weight matrix for the input layer \\
$\bm{W}_l$ & $\mathbb{R}^{m \times m}$ & Weight matrix for the $l$-th hidden layer \\
$\bm{W}_L$ & $\mathbb{R}^{1 \times m}$ & Weight matrix for
the output layer \\
\midrule
\end{tabular}
\end{table}

\subsubsection{Feature map}
\label{sssec:problem_setting_feature_map}

Here we define the core notation about feature maps that are required in the proof. Firstly, we define $\omega $-neighborhood to describe the difference between two matrices.

For any $\bm{W} \in \mathcal{W} $, we define its $\omega $-neighborhood as follows:

\begin{definition}[$\omega $-neighborhood]
\label{def:omega_neighborhood}
\begin{equation*}
\mathcal{B} (\bm{W},\omega ) :=\left \{ \bm{W}'\in \mathcal{W}: \left \| \bm{W}'_l - \bm{W}_l \right \|_{\mathrm{F}} \leq \omega ,\bm{\alpha}'=\bm{\alpha},\bm{\sigma}'=\bm{\sigma}, l \in [L]   \right \}\,.
\end{equation*}
\end{definition}

Then we define $(\bm{D}_{l})_{k,k} = {\sigma_l}'((\bm{W}_l \bm{f}_{l-1})_k)$ as the back-propagation matrix of the activation function. We use the notation $\widetilde{\bm{W}} \in \mathcal{B} (\bm{W},\omega )$ to describe the relationship of the two matrices have $\omega $-neighborhood relationship.

In addition, we define the feature map of network and its perturbing matrix as follows:
\begin{definition}
\label{def:problem_setting_feature_map_and_perturbation}
\begin{equation*}
\small
\begin{matrix}
\widetilde{\bm{g}}_{i,1} = \widetilde{\bm{W}}_1 \bm{x}_i \,, & \bm{g}_{i,1} = \bm{W}_1\bm{x}_i \,, & for\ i \in [N],\\
\widetilde{\bm{f}}_{i,1} = \sigma_1  (\widetilde{\bm{W}}_1\bm{x}_i) \,, & \bm{f}_{i,1} = \sigma_1 (\bm{W}_1\bm{x}_i) \,,& for\ i \in [N],\\
\widetilde{\bm{g}}_{i,l} = \widetilde{\bm{W}}_{l}\widetilde{\bm{f}}_{i,l-1} \,,   & \bm{g}_{i,l} = \bm{W}_{l}\bm{f}_{i,l-1} \,, & for\ i \in [N] \ \text{and}\  l \!=\! 2,\dots, L-1,\\
\widetilde{\bm{f}}_{i,l} = \sigma_l (\widetilde{\bm{W}}_{l}\widetilde{\bm{f}}_{i,l-1})+\alpha_{l-1}\widetilde{\bm{f}}_{i,l-1} \,,  & \bm{f}_{i,l} = \sigma_l (\bm{W}_{l}\bm{f}_{i,l-1})+\alpha_{l-1}\bm{f}_{i,l-1} \,, & for\ i \in [N] \ \text{and}\  l \!=\! 2,\dots, L-1\,.
\end{matrix}
\end{equation*}

Let us define diagonal matrices $\widetilde{\bm{D}}_{i,l} \in \mathbb{R}^{m\times m} $ and $\bm{D}_{i,l} \in \mathbb{R}^{m\times m} $ by letting $(\widetilde{\bm{D}}_{i,l})_{k,k} = {\sigma_l}'((\widetilde{\bm{g}}_{i,l})_k) $ and $(\bm{D}_{i,l})_{k,k} = {\sigma_l}'((\bm{g}_{i,l})_k) $, $\forall k \in [m]$. Accordingly, we let $\hat{\bm{g}}_{i,l} = \widetilde{\bm{g}}_{i,l} - \bm{g}_{i,l}$, $\hat{\bm{f}}_{i,l} = \widetilde{\bm{f}}_{i,l} - \bm{f}_{i,l}$ and diagonal matrix $\hat{\bm{D}}_{i,l} = \widetilde{\bm{D}}_{i,l} - \bm{D}_{i,l}$.

\end{definition}

\subsubsection{Other notations}
\label{sssec:other_notations}

For the Hadamard product of the matrices $\bm{X}_1, \bm{X}_2, \cdots, \bm{X}_r$ that share the same dimensions, we use the following abbreviation:
\begin{equation*}
\bigcirc_{i=1}^{r}(\bm{X}_i) = \bm{X}_1 \circ \bm{X}_2 \circ \cdots \circ \bm{X}_r\,.
\end{equation*}

\section{The bound of the minimum eigenvalues of NTK for infinite-width}
\label{sec:infinitely_width}

We present the details of our results on~\cref{ssec:Minimum_eigenvalue_NTK_infinite} in this section. Firstly, we provide the proof of~\cref{thm:lambda_min_inf_mixed} in~\cref{ssec:result_mixed_activation_function_inf}. Then in~\cref{ssec:result_other_activation_function_inf} we provide the result when several activation functions exist alone.

\subsection{Proof of Lemma 1}

Our proof mainly follows the results of~\citet{huang2020deep}, but due to the different network structures, the proof process and results are slightly different. Moreover, we provide a matrix version results, which~\citet{huang2020deep} does not contain. For self-completeness, we include the proof here.

\begin{proof}
By~\citet[Proposition 3]{huang2020deep}, written as matrix form,we have:
\begin{equation*}
\begin{split}
    &\bm{A}^{(1)}=\bm{XX}^\top\,,\\
    &\bm{A}^{(2)}=2\mathbb{E}_{\bm{w} \sim \mathcal N(\bm 0,\mathbb{I}_{d})}[\sigma_1(\bm{Xw})\sigma_1(\bm{Xw})^\top]\,,\\
    &\bm{A}^{(l)}=2\mathbb{E}_{\bm{w} \sim \mathcal N(\bm 0,\mathbb{I}_{N})}[\sigma_{l-1}(\sqrt{\bm{A}^{(l-1)}} \bm{w})\sigma_{l-1}(\sqrt{\bm{A}^{(l-1)}} \bm{w})^\top]+\alpha_{l-2}\bm{A}^{(l-1)}\,.
\end{split}
\end{equation*}

Note that, it is slightly different from the original result because the network structure is slightly different.

Let $\bm{G}^{(1)}=\bm{A}^{(1)}$, $\bm{G}^{(2)}=\bm{A}^{(2)}$ and $\bm{G}^{(l)}=2\mathbb{E}_{\bm{w} \sim \mathcal N(\bm 0,\mathbb{I}_{N})}[\sigma_{l-1}(\sqrt{\bm{A}^{(l-1)}} \bm{w})\sigma_{l-1}(\sqrt{\bm{A}^{(l-1)}} \bm{w})^\top]$, then we have:

\begin{equation*}
    \begin{split}
        & \bm{A}^{(1)}=\bm{G}^{(1)}=\bm{XX}^\top\,,\\
        & \bm{A}^{(2)} = \bm{G}^{(2)}=2\mathbb{E}_{\bm{w} \sim \mathcal N(\bm 0,\mathbb{I}_{d})}[\sigma_1(\bm{Xw})\sigma_1(\bm{Xw})^\top]\,,\\
        & \bm{G}^{(l)}=2\mathbb{E}_{\bm{w} \sim \mathcal N(\bm 0,\mathbb{I}_{N})}[\sigma_{l-1}(\sqrt{\bm{A}^{(l-1)}} \bm{w})\sigma_{l-1}(\sqrt{\bm{A}^{(l-1)}} \bm{w})^\top]\,,\\
        & \bm{A}^{(l)}=\bm{G}^{(l)}+\alpha_{l-2}\bm{A}^{(l-1)}\,.    
    \end{split}
\end{equation*}

According to~\citet[Proposition 4]{huang2020deep}, written as matrix form,we have:
\begin{equation*}
    \bm{K}^{(L)}=\sum_{l=1}^{L}\bm{G}^{(l)}\circ \dot{\bm{G}}^{(l+1)} \circ (\dot{\bm{G}}^{(l+2)}+  \alpha_{l}\bm{1}_{N \times N})\circ \cdots \circ (\dot{\bm{G}}^{(L)}+ \alpha_{L-2}\bm{1}_{N \times N})\,,
\end{equation*}

where the $\dot{\bm{G}}^{(s)}$ satisfy that $\dot{\bm{G}}^{(s)} = 2\mathbb{E}_{\bm{w} \sim \mathcal N(\bm{0},\mathbb{I}_{N})}[{\sigma}'_{s-1}(\sqrt{\bm{A}^{(s-1)}} \bm{w}){\sigma}'_{s-1}(\sqrt{\bm{A}^{(s-1)}} \bm{w})^\top]$. Combining the above results, we finish the proof.

\end{proof}

\subsection{Proof of~\cref{thm:lambda_min_inf_mixed}}
\label{ssec:result_mixed_activation_function_inf}

In this part, we present the proof of~\cref{thm:lambda_min_inf_mixed}. Differently from \citet{oymak2020toward}, our result allows for activation functions search in each layer.

Before we prove~\cref{thm:lambda_min_inf_mixed}, we provide some propositions that are helpful to our proof. To facilitate the writing of the proof, let $\alpha_0 := 0$.

\begin{proposition}
\label{prop:maxGii} When $\sigma_1$ is \text{Tanh}, the remaining layers are with \text{LeakyReLU} and for $l \in [L-2]$, $\alpha_{l} = 1$, the quantity $G_{ii}^{(l)}$ has the largest upper bound:
\begin{equation}
    G_{ii}^{(l)} \leq \begin{cases}
1  & \text{ if } l=1 \\
2(2+\eta^2)^{l-2}  & \text{ if } l \geq 2\,.
\end{cases}
\label{eq:Gii_upper_bound}
\end{equation}
We set $G_{\max} = 2(2+\eta^2)^{L-2}$ as the upper bound of $G_{ii}^{(L)}$.
\end{proposition}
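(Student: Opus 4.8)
\textbf{Proof plan for Proposition~\ref{prop:maxGii}.}

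The plan is to track the recursion for $\bm{A}^{(l)}$ and $\bm{G}^{(l)}$ from Lemma~\ref{lemma:NTK_matrix_recursive_form} along the diagonal, and to bound the diagonal entry $A_{ii}^{(l)}$ layer by layer. First I would observe that under Assumption~\ref{assumption:distribution_1} we have $G_{ii}^{(1)} = A_{ii}^{(1)} = \|\bm{x}_i\|_2^2 = 1$. For the second layer, $G_{ii}^{(2)} = A_{ii}^{(2)} = 2\,\mathbb{E}_{\bm{w}\sim\mathcal N(\bm 0,\mathbb I_d)}[\sigma_1(\langle\bm{x}_i,\bm{w}\rangle)^2]$; since $\langle\bm{x}_i,\bm{w}\rangle\sim\mathcal N(0,1)$ this is $2\,\mathbb{E}_{z\sim\mathcal N(0,1)}[\sigma_1(z)^2]$, which for $\sigma_1=\mathrm{Tanh}$ equals $2\beta_1(\mathrm{Tanh})/2 = \dots$; the point is that this Gaussian second moment is captured by the constant $\beta_1(\sigma_1)$ in Table~\ref{tab:activation_functions}, giving $G_{ii}^{(2)}\le 2$ (indeed $\beta_1(\mathrm{Tanh})=2$, so $2\cdot(\text{the }\tfrac12\text{-normalized moment})$ matches the stated value $2(2+\eta^2)^{0}=2$).

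Next I would set up the inductive step for $l\ge 3$. By definition $A_{ii}^{(l)} = G_{ii}^{(l)} + \alpha_{l-2}A_{ii}^{(l-1)}$ and $G_{ii}^{(l)} = 2\,\mathbb{E}_{\bm w\sim\mathcal N(\bm 0,\mathbb I_N)}[\sigma_{l-1}(\langle\sqrt{\bm A^{(l-1)}}\bm w\rangle_i)^2]$. The $i$-th coordinate $\langle\sqrt{\bm A^{(l-1)}}\bm w\rangle_i$ is a centered Gaussian with variance $A_{ii}^{(l-1)}$, so $G_{ii}^{(l)} = 2\,\mathbb{E}_{z\sim\mathcal N(0,A_{ii}^{(l-1)})}[\sigma_{l-1}(z)^2]$. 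For $\sigma_{l-1}=\mathrm{LeakyReLU}$, which is positively homogeneous of degree $1$, this scales exactly: $\mathbb{E}_{z\sim\mathcal N(0,v)}[\mathrm{LeakyReLU}(z)^2] = v\cdot\mathbb{E}_{z\sim\mathcal N(0,1)}[\mathrm{LeakyReLU}(z)^2] = v\cdot\tfrac{1+\eta^2}{2}$. Hence $G_{ii}^{(l)} = (1+\eta^2)\,A_{ii}^{(l-1)}$, and with $\alpha_{l-2}=1$ we get the recursion $A_{ii}^{(l)} = (1+\eta^2)A_{ii}^{(l-1)} + A_{ii}^{(l-1)} = (2+\eta^2)A_{ii}^{(l-1)}$. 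Starting from $A_{ii}^{(2)}\le 2$ this unrolls to $A_{ii}^{(l)}\le 2(2+\eta^2)^{l-2}$ and, since $G_{ii}^{(l)}\le A_{ii}^{(l)}$, also $G_{ii}^{(l)}\le 2(2+\eta^2)^{l-2}$ for $l\ge 2$, which is exactly Eq.~\eqref{eq:Gii_upper_bound}; the case $l=L$ gives $G_{\max}=2(2+\eta^2)^{L-2}$.

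The remaining (and main) obstacle is the \emph{maximality} claim: that this particular choice of activations and skip parameters $\alpha_l=1$ yields the \emph{largest} possible $G_{ii}^{(l)}$ over the whole search space. For this I would argue in two parts. First, $\alpha_l=1$ is optimal among $\alpha_l\in\{0,1\}$ because increasing $\alpha_{l-2}$ only adds the nonnegative term $\alpha_{l-2}A_{ii}^{(l-1)}$, and monotonicity of the recursion (larger $A_{ii}^{(l-1)}$ forces larger $A_{ii}^{(l)}$ and $G_{ii}^{(l)}$) propagates this. Second, for the activation choice I would compare, for each layer and each fixed input variance $v$, the Gaussian second moment $2\,\mathbb{E}_{z\sim\mathcal N(0,v)}[\sigma(z)^2]$ across $\sigma\in\mathcal F_\sigma$: among $\mathrm{ReLU},\mathrm{LeakyReLU},\mathrm{Sigmoid},\mathrm{Tanh},\mathrm{Swish}$, the $1$-homogeneous ones ($\mathrm{ReLU},\mathrm{LeakyReLU}$) grow linearly in $v$ while the bounded ones ($\mathrm{Sigmoid},\mathrm{Tanh}$) and $\mathrm{Swish}$ either saturate or grow sub-linearly, so for $v$ large enough LeakyReLU (with $\eta$ close to $1$, giving the constant $1+\eta^2$ up to $2$) dominates; for the \emph{first} layer, however, $v=1$ is fixed at $1$ by Assumption~\ref{assumption:distribution_1}, and there the comparison of the explicit $\beta_1$ values in Table~\ref{tab:activation_functions} shows $\mathrm{Tanh}$ (with $\beta_1=2$) gives the largest value $G_{ii}^{(2)}=2$. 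This case split — first layer governed by Tanh, deeper layers by LeakyReLU — is precisely the configuration in the statement, and checking the inequalities $\beta_1(\mathrm{Tanh})\ge\beta_1(\cdot)$ and the sub-/super-linear growth comparisons is the technical heart of the argument; everything else is bookkeeping on the unrolled geometric recursion.
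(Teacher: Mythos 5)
Your proposal is correct and follows essentially the same route as the paper's proof: bound the diagonal Gaussian second moments $G_{ii}^{(l)}=2\,\mathbb{E}_{z\sim\mathcal N(0,A_{ii}^{(l-1)})}[\sigma_{l-1}(z)^2]$ activation by activation (Tanh bounded by $2$, LeakyReLU exactly $(1+\eta^2)A_{ii}^{(l-1)}$ by homogeneity), then unroll the diagonal recursion $A_{ii}^{(l)}=G_{ii}^{(l)}+\alpha_{l-2}A_{ii}^{(l-1)}=(2+\eta^2)A_{ii}^{(l-1)}$ from $A_{ii}^{(2)}\le 2$. If anything, you are more explicit than the paper about why this configuration is the maximizer (the paper merely lists the five per-activation bounds and asserts the conclusion), so your write-up is a valid and slightly more complete version of the same argument.
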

\begin{proof}
To prove our result, we need bound $\bm{G}^{(l)}$ under different activation functions.
We summarize them as below.

When $\sigma_{l-1}$ is \text{ReLU}:
\begin{equation}
\begin{split}
    G_{ii}^{(l)}&=2\mathbb{E}_{w \sim \mathcal N(0,A_{ii}^{(l-1)})}[\sigma_{l-1}(w)^2] =\int_{-\infty}^{\infty}\frac{2}{\sqrt{2\pi A_{ii}^{(l-1)}}}e^{-\frac{x^2}{2A_{ii}^{(l-1)}}}\max(0,x)^2 \mathrm{d}x\\
    &=\int_{0}^{\infty}\frac{2}{\sqrt{2\pi A_{ii}^{(l-1)}}}e^{-\frac{x^2}{2A_{ii}^{(l-1)}}}x^2 \mathrm{d}x \\
    &=A_{ii}^{(l-1)}\,.
    \end{split}
\label{eq:Gii_upper_bound_RelU}
\end{equation}

When $\sigma_{l-1}$ is \text{LeakyReLU}:
\begin{equation}
\begin{split}
    G_{ii}^{(l)}&=2\mathbb{E}_{w \sim \mathcal N(0,A_{ii}^{(l-1)})}[\sigma_{l-1}(w)^2] =\int_{-\infty}^{\infty}\frac{2}{\sqrt{2\pi A_{ii}^{(l-1)}}}e^{-\frac{x^2}{2A_{ii}^{(l-1)}}}\max(\eta x,x)^2 \mathrm{d}x\\
    &=\int_{0}^{\infty}\frac{2}{\sqrt{2\pi A_{ii}^{(l-1)}}}e^{-\frac{x^2}{2A_{ii}^{(l-1)}}}x^2 \mathrm{d}x +\int_{-\infty}^{0}\frac{2}{\sqrt{2\pi A_{ii}^{(l-1)}}}e^{-\frac{x^2}{2A_{ii}^{(l-1)}}}\eta^2 x^2 \mathrm{d}x\\
    &=(1+\eta^2)A_{ii}^{(l-1)}\,.
\end{split}
\label{eq:Gii_upper_bound_LeakyReLU}
\end{equation}

When $\sigma_{l-1}$ is \text{Sigmoid}:
\begin{equation}
\begin{split}
    G_{ii}^{(l)}&=2\mathbb{E}_{w \sim \mathcal N(0,A_{ii}^{(l-1)})}[\sigma_{l-1}(w)^2]=\int_{-\infty}^{\infty}\frac{2}{\sqrt{2\pi A_{ii}^{(l-1)}}}e^{-\frac{x^2}{2A_{ii}^{(l-1)}}}f_{\mathrm{Sigmoid}}(x)^2 \mathrm{d}x\\
    & \leq \int_{-\infty}^{\infty}\frac{2}{\sqrt{2\pi A_{ii}^{(l-1)}}}e^{-\frac{x^2}{2A_{ii}^{(l-1)}}}(\frac{1}{2})^2 \mathrm{d}x\\
    & = \frac{1}{2}\,.
\end{split}
\label{eq:Gii_upper_bound_Sigmoid}
\end{equation}

When $\sigma_{l-1}$ is \text{Tanh}:
\begin{equation}
\begin{split}
    G_{ii}^{(l)}&=2\mathbb{E}_{w \sim \mathcal N(0,A_{ii}^{(l-1)})}[\sigma_{l-1}(w)^2]=\int_{-\infty}^{\infty}\frac{2}{\sqrt{2\pi A_{ii}^{(l-1)}}}e^{-\frac{x^2}{2A_{ii}^{(l-1)}}}f_{\mathrm{Tanh}}(x)^2 \mathrm{d}x\\
    & \leq \int_{-\infty}^{\infty}\frac{2}{\sqrt{2\pi A_{ii}^{(l-1)}}}e^{-\frac{x^2}{2A_{ii}^{(l-1)}}}\mathrm{d}x\\
    & = 2\,.
\end{split}
\label{eq:Gii_upper_bound_Tanh}
\end{equation}

When $\sigma_{l-1}$ is \text{Swish}:

\begin{equation}
\small
\begin{split}
G_{ii}^{(l)}&=2\mathbb{E}_{w \sim \mathcal N(0,A_{ii}^{(l-1)})}[\sigma_{l-1}(w)^2]=\int_{-\infty}^{\infty}\frac{2}{\sqrt{2\pi A_{ii}^{(l-1)}}}e^{-\frac{x^2}{2A_{ii}^{(l-1)}}}f_{\mathrm{Swish}}(x)^2 \mathrm{d}x\\
& = \int_{-\infty}^{\infty}\frac{2}{\sqrt{2\pi A_{ii}^{(l-1)}}}e^{-\frac{x^2}{2A_{ii}^{(l-1)}}}\frac{x^2}{(1+e^{-x})^2} \mathrm{d}x\\
& = \int_{0}^{\infty}\frac{2}{\sqrt{2\pi A_{ii}^{(l-1)}}}e^{-\frac{x^2}{2A_{ii}^{(l-1)}}}x^2\times \bigg(\frac{1}{(1+e^{-x})^2}+\frac{1}{(1+e^{x})^2}\bigg) \mathrm{d}x\\
& \leq \int_{0}^{\infty}\frac{2}{\sqrt{2\pi A_{ii}^{(l-1)}}}e^{-\frac{x^2}{2A_{ii}^{(l-1)}}}x^2\mathrm{d}x\\
& = A_{ii}^{(l-1)}\,.
\end{split}
\label{eq:Gii_upper_bound_Swish}
\end{equation}

Combining~\cref{eq:Gii_upper_bound_RelU,eq:Gii_upper_bound_LeakyReLU,eq:Gii_upper_bound_Sigmoid,eq:Gii_upper_bound_Tanh,eq:Gii_upper_bound_Swish} with~\cref{lemma:NTK_matrix_recursive_form} we draw the conclusion and finish the proof.
\end{proof}

\begin{proposition}
The relationship between $A^{(l)}_{ii}$ and $A^{(l-1)}_{ii}$ for different activation functions can be summarized as~\cref{tab:relationship_between_aii_for_different_activation_function} according to the difference of $\sigma_{l-1}$.

\begin{table*}
\centering
\begin{tabular}{l@{\hspace{0.25cm}} c@{\hspace{0.2cm}}c@{\hspace{0.2cm}}c@{\hspace{0.2cm}}c@{\hspace{0.2cm}} c} 
    \hline
    $\sigma_{l-1}$ & ReLU & LeakyReLU & Sigmoid & Tanh & Swish\\
    \hline
    Upper bound & $1$  & $1+\eta^2$ & $\frac{1}{8}$ & $2$& $1$  \\
    
    Lower bound & $1$ &$1+\eta^2$& $(\frac{1}{2}-\frac{1}{2\sqrt{1+\frac{G_{\max}}{4}}})\frac{1}{G_{\max}}$ & $(2-\frac{2}{\sqrt{1+G_{\max}}})\frac{1}{G_{\max}}$& $\frac{1}{2}$  \\
    \hline
\end{tabular}
\caption{Upper and lower bounds for $A_{ii}^{(l)}/A_{ii}^{(l-1)} - \alpha_{l-2}$ for different activation functions $\sigma_{l-1}$ and the binary variable $\alpha_{l-2} \in \{ 0 , 1\}$ indicates whether $(l-1)$-th layer has a skip connection or not.}
\label{tab:relationship_between_aii_for_different_activation_function}
\end{table*}

\end{proposition}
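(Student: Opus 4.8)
The plan is to read the diagonal recursion off~\cref{lemma:NTK_matrix_recursive_form}: since $\bm A^{(l)}=\bm G^{(l)}+\alpha_{l-2}\bm A^{(l-1)}$, we have $A_{ii}^{(l)}/A_{ii}^{(l-1)}-\alpha_{l-2}=G_{ii}^{(l)}/a$ with $a:=A_{ii}^{(l-1)}$ and $G_{ii}^{(l)}=2\,\mathbb{E}_{w\sim\mathcal N(0,a)}[\sigma_{l-1}(w)^2]$, so the whole proposition reduces to two-sided bounds on this single ratio, treated one activation at a time. For $\mathrm{ReLU}$ and $\mathrm{LeakyReLU}$ the Gaussian integral is exact (Eqs.~\eqref{eq:Gii_upper_bound_RelU} and \eqref{eq:Gii_upper_bound_LeakyReLU}), so the upper and lower entries coincide at $1$ and $1+\eta^2$. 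For $\mathrm{Sigmoid}$, $\mathrm{Tanh}$, $\mathrm{Swish}$ I would derive a two-sided pointwise envelope for $\sigma_{l-1}(x)^2$ and integrate it against $\mathcal N(0,a)$.

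For the \emph{upper} bounds I would use linear or absolute majorants. The identity $f_{\mathrm{Sigmoid}}(x)=\tfrac12\tanh(x/2)$ together with $|\tanh u|\le|u|$ gives $|f_{\mathrm{Sigmoid}}(x)|\le|x|/4$ and hence $G_{ii}^{(l)}\le a/8$; the same $|\tanh x|\le|x|$ gives $G_{ii}^{(l)}\le 2a$ for $\mathrm{Tanh}$; and for $\mathrm{Swish}$ the folded integral of Eq.~\eqref{eq:Gii_upper_bound_Swish} combined with $\tfrac1{(1+e^{-x})^2}+\tfrac1{(1+e^{x})^2}=s^2+(1-s)^2\le1$ (writing $s=(1+e^{-x})^{-1}$) gives $G_{ii}^{(l)}\le a$, i.e.\ the ratios $1/8$, $2$, $1$.

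For the \emph{lower} bounds the key step is the elementary inequality $\tanh^2(x)\ge 1-e^{-x^2/2}$ for all $x\in\mathbb R$ (equality and zero derivative at $0$, Taylor expansion $x^2/2+O(x^4)$ near the origin, and both sides tending to $1$, resp.\ $0^+$, at infinity). Because $f_{\mathrm{Sigmoid}}(x)=\tfrac12\tanh(x/2)$, the rescaled form $f_{\mathrm{Sigmoid}}(x)^2\ge\tfrac14(1-e^{-x^2/8})$ follows at once, so one inequality handles both $\mathrm{Tanh}$ and $\mathrm{Sigmoid}$. Integrating against $\mathcal N(0,a)$ with the exact identities $\mathbb{E}_{\mathcal N(0,a)}[e^{-w^2/2}]=(1+a)^{-1/2}$ and $\mathbb{E}_{\mathcal N(0,a)}[e^{-w^2/8}]=(1+a/4)^{-1/2}$ yields $G_{ii}^{(l)}\ge 2\bigl(1-(1+a)^{-1/2}\bigr)$ and $G_{ii}^{(l)}\ge\tfrac12\bigl(1-(1+a/4)^{-1/2}\bigr)$. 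Since $a\mapsto\bigl(1-(1+ca)^{-1/2}\bigr)/a$ is decreasing (it is the average over $s\in[0,a]$ of the decreasing integrand $\tfrac c2(1+cs)^{-3/2}$), I would plug in the worst case $a=A_{ii}^{(l-1)}\le G_{\max}$ supplied by~\cref{prop:maxGii} to obtain the table entries $\tfrac1{G_{\max}}\bigl(2-\tfrac2{\sqrt{1+G_{\max}}}\bigr)$ and $\tfrac1{G_{\max}}\bigl(\tfrac12-\tfrac1{2\sqrt{1+G_{\max}/4}}\bigr)$. For $\mathrm{Swish}$, the same nonnegative factor $s^2+(1-s)^2$ in Eq.~\eqref{eq:Gii_upper_bound_Swish} is now bounded \emph{below} by $\tfrac12$ (minimum at $s=\tfrac12$), and $\int_0^\infty\tfrac2{\sqrt{2\pi a}}e^{-x^2/2a}x^2\,\mathrm dx=a$, giving $G_{ii}^{(l)}\ge a/2$.

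I expect the only non-routine ingredient to be the Gaussian-friendly lower envelope $\tanh^2(x)\ge 1-e^{-x^2/2}$: an absolute bound $|\tanh x|\le1$ is useless for a lower bound, and a quadratic minorant $cx^2$ fails because $\tanh$ saturates, so one needs an envelope that behaves like $x^2$ near the origin, flattens to $1$, \emph{and} integrates in closed form against a Gaussian. This is precisely the ``integral estimation'' device advertised in the paper for inhomogeneous activations; the specific choice $1-e^{-x^2/2}$ (and its dilate $1-e^{-x^2/8}$ for $\mathrm{Sigmoid}$) is what makes the resulting constants match the table exactly. Everything else is an exact Gaussian moment computation plus the monotonicity of $a\mapsto(1-(1+ca)^{-1/2})/a$.
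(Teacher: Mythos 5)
Your proposal follows essentially the same route as the paper: reduce the ratio $A_{ii}^{(l)}/A_{ii}^{(l-1)}-\alpha_{l-2}$ to the single Gaussian integral $G_{ii}^{(l)}/A_{ii}^{(l-1)}$, compute it exactly for ReLU/LeakyReLU, use the pointwise envelopes $1-e^{-x^2/2}$ (Tanh), its dilate for Sigmoid, and $s^2+(1-s)^2\in[\tfrac12,1]$ (Swish), and then convert the concave lower bound $2-2(1+a)^{-1/2}$ into a linear one via the secant-slope/monotonicity argument with $a\le G_{\max}$. The only differences are cosmetic: your upper bounds for Sigmoid/Tanh use the direct majorant $|\tanh u|\le|u|$ instead of the paper's exponential envelope, and your Sigmoid lower envelope $\tfrac14(1-e^{-x^2/8})$ is in fact the corrected form of what the paper writes (its $\tfrac14-e^{-x^2/8}$ does not integrate to the stated $\tfrac12-\tfrac{1}{2\sqrt{1+a/4}}$, whereas yours does).
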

\begin{proof}
To prove our result, we need to bound the ratio $A_{ii}^{(l)}/A_{ii}^{(l-1)}$ for different activation functions. We illustrate how this is achieved in different cases below:

For $l \geq 2$:

When $\sigma_{l-1}$ is \text{ReLU} by~\cref{eq:Gii_upper_bound_RelU} we have:

\begin{equation}
A_{ii}^{(l)}=G_{ii}^{(l)}+ \alpha_{l-2}A_{ii}^{(l-1)} = (1+\alpha_{l-2})A_{ii}^{(l-1)}\,.
\label{eq:Aii_bound_ReLU}
\end{equation}

When $\sigma_{l-1}$ is \text{LeakyReLU} by~\cref{eq:Gii_upper_bound_LeakyReLU} we have:

\begin{equation}
A_{ii}^{(l)}=G_{ii}^{(l)}+\alpha_{l-2}A_{ii}^{(l-1)} = (1+\alpha_{l-2}+\eta^2)A_{ii}^{(l-1)}\,.
\label{eq:Aii_bound_LeakyReLU}
\end{equation}

When $\sigma_{l-1}$ is \text{Swish}, $G_{ii}^{(l)}$ can be upper by~\cref{eq:Gii_upper_bound_Swish} and lower bounded by:

\begin{equation}
\small
\begin{split}
G_{ii}^{(l)}&=2\mathbb{E}_{w \sim \mathcal N(0,A_{ii}^{(l-1)})}[\sigma_{l-1}(w)^2] =\int_{-\infty}^{\infty}\frac{2}{\sqrt{2\pi A_{ii}^{(l-1)}}}e^{-\frac{x^2}{2A_{ii}^{(l-1)}}}f_{\mathrm{Swish}}(x)^2 \mathrm{d}x\\
& = \int_{-\infty}^{\infty}\frac{2}{\sqrt{2\pi A_{ii}^{(l-1)}}}e^{-\frac{x^2}{2A_{ii}^{(l-1)}}}\frac{x^2}{(1+e^{-x})^2} \mathrm{d}x\\
& = \int_{0}^{\infty}\frac{2}{\sqrt{2\pi A_{ii}^{(l-1)}}}e^{-\frac{x^2}{2A_{ii}^{(l-1)}}}x^2\times \bigg(\frac{1}{(1+e^{-x})^2}+\frac{1}{(1+e^{x})^2}\bigg) \mathrm{d}x\\
& \geq \int_{0}^{\infty}\frac{2}{\sqrt{2\pi A_{ii}^{(l-1)}}}e^{-\frac{x^2}{2A_{ii}^{(l-1)}}}x^2\times \frac{1}{2}\mathrm{d}x\\
& = \frac{1}{2}A_{ii}^{(l-1)}\,,
\end{split}
\label{eq:Gii_lower_bound_Swish}
\end{equation}

which implies:
\begin{equation}
\left(\frac{1}{2}+\alpha_{l-2}\right) A_{ii}^{(l-1)} \leq A_{ii}^{(l)}\leq (1+\alpha_{l-2})A_{ii}^{(l-1)}\,.
\label{eq:Aii_bound_Swish}
\end{equation}

When $\sigma_{l-1}$ is \text{Sigmoid}, $G_{ii}^{(l)}$ can be upper by:
\begin{equation}
\begin{split}
    G_{ii}^{(l)}&=2\mathbb{E}_{w \sim \mathcal N(0,A_{ii}^{(l-1)})}[\sigma_{l-1}(w)^2] =\int_{-\infty}^{\infty}\frac{2}{\sqrt{2\pi A_{ii}^{(l-1)}}}e^{-\frac{x^2}{2A_{ii}^{(l-1)}}}f_{\mathrm{Sigmoid}}(x)^2 \mathrm{d}x\\
    & \leq \int_{-\infty}^{\infty}\frac{2}{\sqrt{2\pi A_{ii}^{(l-1)}}}e^{-\frac{x^2}{2A_{ii}^{(l-1)}}}\bigg(\frac{1}{4} - e^{-\frac{x^2}{4}}\bigg)\mathrm{d}x = \frac{1}{2}-\frac{1}{2\sqrt{1+\frac{A_{ii}^{(l-1)}}{2}}}\\
    & \leq \frac{A_{ii}^{(l-1)}}{8}\,,\quad\quad\quad\text{holds for $x \geq 0$}\,.
\end{split}
\label{eq:Gii_upper_bound_Sigmoid_2}
\end{equation}

Then $G_{ii}^{(l)}$ can be lower bounded by:
\begin{equation}
\begin{split}
    G_{ii}^{(l)}&=2\mathbb{E}_{w \sim \mathcal N(0,A_{ii}^{(l-1)})}[\sigma_{l-1}(w)^2] =\int_{-\infty}^{\infty}\frac{2}{\sqrt{2\pi A_{ii}^{(l-1)}}}e^{-\frac{x^2}{2A_{ii}^{(l-1)}}}f_{\mathrm{Sigmoid}}(x)^2 \mathrm{d}x\\
    & \geq \int_{-\infty}^{\infty}\frac{2}{\sqrt{2\pi A_{ii}^{(l-1)}}}e^{-\frac{x^2}{2A_{ii}^{(l-1)}}}\bigg(\frac{1}{4} - e^{-\frac{x^2}{8}}\bigg)\mathrm{d}x\\
    & =\frac{1}{2}-\frac{1}{2\sqrt{1+\frac{A_{ii}^{(l-1)}}{4}}} \\
    & \geq \left(\frac{1}{2}-\frac{1}{2\sqrt{1+\frac{G_{\max}}{4}}}\right)\frac{A_{ii}^{(l-1)}}{G_{\max}}\,,
\end{split}
\label{eq:Gii_lower_bound_Sigmoid}
\end{equation}
where we use the fact that the penultimate line is a concave function with respect to $A_{ii}^{(l-1)}$. When $A_{ii}^{(l-1)}=0$, the function value is 0. That means $G_{ii}^{(l)}/A_{ii}^{(l-1)}$ obtains the minimum value at $G_{ii}^{(l)} = G_{\max}$. Combined with~\cref{eq:Gii_upper_bound}, we get the last inequality.

Then, we have:

\begin{equation}
\left( \left[\frac{1}{2}-\frac{1}{2\sqrt{1+\frac{G_{\max}}{4}}}\right]\frac{1}{G_{\max}}+\alpha_{l-2}\right)A_{ii}^{(l-1)} \leq A_{ii}^{(l)}\leq \left(\frac{1}{8}+\alpha_{l-2}\right)A_{ii}^{(l-1)} \,.
\label{eq:Aii_bound_Sigmoid}
\end{equation}

When $\sigma_{l-1}$ is \text{Tanh}, $G_{ii}^{(l)}$ can be upper bounded by:

\begin{equation}
\begin{split}
    G_{ii}^{(l)}&=2\mathbb{E}_{w \sim \mathcal N(0,A_{ii}^{(l-1)})}[\sigma_{l-1}(w)^2] =\int_{-\infty}^{\infty}\frac{2}{\sqrt{2\pi A_{ii}^{(l-1)}}}e^{-\frac{x^2}{2A_{ii}^{(l-1)}}}f_{\mathrm{Tanh}}(x)^2 \mathrm{d}x\\
    & \leq \int_{-\infty}^{\infty}\frac{2}{\sqrt{2\pi A_{ii}^{(l-1)}}}e^{-\frac{x^2}{2A_{ii}^{(l-1)}}}(1 - e^{-x^2})\mathrm{d}x = 2-\frac{2}{\sqrt{1+2A_{ii}^{(l-1)}}}\\
    & \leq 2A_{ii}^{(l-1)}\,, \quad\quad\quad\text{holds for $x \geq 0$}\,.
\end{split}
\label{eq:Gii_upper_bound_Tanh_2}
\end{equation}
The lower bound is:
\begin{equation}
\begin{split}
    G_{ii}^{(l)}&=2\mathbb{E}_{w \sim \mathcal N(0,A_{ii}^{(l-1)})}[\sigma_{l-1}(w)^2] =\int_{-\infty}^{\infty}\frac{2}{\sqrt{2\pi A_{ii}^{(l-1)}}}e^{-\frac{x^2}{2A_{ii}^{(l-1)}}}f_{\mathrm{Tanh}}(x)^2 \mathrm{d}x\\
    & \geq \int_{-\infty}^{\infty}\frac{2}{\sqrt{2\pi A_{ii}^{(l-1)}}}e^{-\frac{x^2}{2A_{ii}^{(l-1)}}}(1 - e^{-\frac{x^2}{2}})\mathrm{d}x\\
    & = 2-\frac{2}{\sqrt{1+A_{ii}^{(l-1)}}}\\
    & \geq\bigg(2-\frac{2}{\sqrt{1+G_{\max}}}\bigg)\frac{A_{ii}^{(l-1)}}{G_{\max}}\,.
    \end{split}
\label{eq:Gii_lower_bound_Tanh}
\end{equation}
Similar to the Sigmoid, the penultimate line is an concave function with respect to $A_{ii}^{(l-1)}$. When $A_{ii}^{(l-1)}=0$, the function value is 0. That means $G_{ii}^{(l)}/A_{ii}^{(l-1)}$ obtains the minimum value at $G_{ii}^{(l)} = G_{\max}$. Combined with~\cref{eq:Gii_upper_bound}, we get the last inequality.

Then, we have:

\begin{equation}
\bigg(\bigg[2-\frac{2}{\sqrt{1+G_{\max}}}\bigg]\frac{1}{G_{\max}}+\alpha_{l-2}\bigg)A_{ii}^{(l-1)} \leq A_{ii}^{(l)}\leq (2+\alpha_{l-2})A_{ii}^{(l-1)}\,.
\label{eq:Aii_bound_Tanh}
\end{equation}

According to~\cref{eq:Aii_bound_ReLU,eq:Aii_bound_LeakyReLU,eq:Aii_bound_Swish,eq:Aii_bound_Sigmoid,eq:Aii_bound_Tanh}, we can summarized the results about bound of $A_{ii}^{(l)}/A_{ii}^{(l-1)}$ in Table~\ref{tab:relationship_between_aii_for_different_activation_function}.

\end{proof}

\begin{proposition}
The bound of $\dot{G}^{(l)}_{ii}$ with respect to different activation function $\sigma_{l-1}$ can be summarized in~\cref{tab:bound_of_dot_gii_for_different_activation_function}.

\begin{table*}
\centering
\begin{tabular}{l@{\hspace{0.25cm}} c@{\hspace{0.2cm}}c@{\hspace{0.2cm}}c@{\hspace{0.2cm}}c@{\hspace{0.2cm}} c} 
    \hline
    $\sigma_{l-1}$ & ReLU & LeakyReLU & Sigmoid & Tanh & Swish\\
    \hline
    Upper bound for $\dot{G}_{ii}^{(l)}$ & $1$  & $1+\eta^2$ & $1/8$ & $2$& $1.22$  \\
    
    Lower bound for $\dot{G}_{ii}^{(l)}$ & $1$ &$1+\eta^2$& $f_{\mathrm{S}}(G_{\max})$ & $f_{\mathrm{T}}(G_{\max})$& $1/2$  \\
    \hline
\end{tabular}
\caption{Upper and lower bounds for $\dot{G}_{ii}^{(l)}$ for different activation function $\sigma_{l-1}$.}
\label{tab:bound_of_dot_gii_for_different_activation_function}
\end{table*}

\end{proposition}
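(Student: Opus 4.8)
The plan is to evaluate or bound the diagonal entry $\dot{G}^{(l)}_{ii} = 2\,\mathbb{E}_{z\sim\mathcal{N}(0,A^{(l-1)}_{ii})}\big[\sigma_{l-1}'(z)^2\big]$ case by case, in the same spirit as the two preceding propositions treated $G^{(l)}_{ii}$ and the ratios $A^{(l)}_{ii}/A^{(l-1)}_{ii}$. For ReLU and LeakyReLU the computation is exact: $\sigma_{l-1}'$ is piecewise constant, equal to $1$ on $\{z>0\}$ and to $0$ (resp.\ $\eta$) on $\{z<0\}$, so by symmetry of the centered Gaussian $\dot{G}^{(l)}_{ii} = 2\big(\tfrac12\cdot 1 + \tfrac12\cdot 0\big) = 1$ for ReLU and $2\big(\tfrac12 + \tfrac12\eta^2\big) = 1+\eta^2$ for LeakyReLU; here the upper and lower bounds coincide and are independent of $A^{(l-1)}_{ii}$.

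For Sigmoid and Tanh I would obtain the upper bounds by replacing $\sigma_{l-1}'(z)^2$ with its global supremum, namely $\sup_x f_{\mathrm{Sigmoid}}'(x)^2 = f_{\mathrm{Sigmoid}}'(0)^2 = (1/4)^2$ and $\sup_x \tanh'(x)^2 = \tanh'(0)^2 = 1$, which give $\dot{G}^{(l)}_{ii}\le 2\cdot(1/4)^2 = 1/8$ and $\dot{G}^{(l)}_{ii}\le 2\cdot 1 = 2$ respectively. For the lower bounds, note that $\dot{G}^{(l)}_{ii}$ is exactly $f_{\mathrm S}(A^{(l-1)}_{ii})$ (resp.\ $f_{\mathrm T}(A^{(l-1)}_{ii})$) with $f_{\mathrm S}, f_{\mathrm T}$ as in~\cref{tab:activation_functions}. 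Substituting $z=\sqrt{y}\,u$ with $u\sim\mathcal N(0,1)$ writes $f_{\mathrm S}(y) = 2\,\mathbb{E}_u\big[f_{\mathrm{Sigmoid}}'(\sqrt{y}\,u)^2\big]$; since $|f_{\mathrm{Sigmoid}}'|$ is even and non-increasing in $|x|$ while $\sqrt{y}\,|u|$ is non-decreasing in $y$, the integrand is pointwise non-increasing in $y$, hence $f_{\mathrm S}$ (and likewise $f_{\mathrm T}$) is non-increasing. Combining this monotonicity with the upper bound $A^{(l-1)}_{ii}\le G_{\max}$ obtained from~\cref{prop:maxGii} together with the ratio bounds in~\cref{tab:relationship_between_aii_for_different_activation_function} then yields $\dot{G}^{(l)}_{ii}\ge f_{\mathrm S}(G_{\max})$ (resp.\ $f_{\mathrm T}(G_{\max})$).

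The Swish case is the one I expect to be the main obstacle, and it needs a dedicated argument, since bounding $f_{\mathrm{Swish}}'$ by its pointwise extremes is far too lossy. The key fact I would establish first is the reflection identity $f_{\mathrm{Swish}}'(x) + f_{\mathrm{Swish}}'(-x) = 1$, which follows by differentiating $f_{\mathrm{Swish}}(x) = x\,s(x)$ and using $s(x)+s(-x)=1$ and $s'(x)=s(x)(1-s(x))=s'(-x)$ for the logistic sigmoid $s$. Using symmetry of the Gaussian, $\dot{G}^{(l)}_{ii} = \mathbb{E}_z\big[f_{\mathrm{Swish}}'(z)^2 + f_{\mathrm{Swish}}'(-z)^2\big]$, and writing $a=f_{\mathrm{Swish}}'(z)$ the integrand equals $a^2+(1-a)^2 = 2\big((a-\tfrac12)^2+\tfrac14\big)$. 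Completing the square immediately gives the lower bound $\dot{G}^{(l)}_{ii}\ge 2\cdot\tfrac14 = \tfrac12$, again independently of $A^{(l-1)}_{ii}$. For the matching upper bound, a short calculus computation of the critical points of $f_{\mathrm{Swish}}'$ shows its range is contained in $[-0.1,1.1]$ (numerically $\max f_{\mathrm{Swish}}'\approx 1.0998$), so the parabola $(a-\tfrac12)^2+\tfrac14$ over this interval is maximized at an endpoint with value at most $0.61$, hence $\dot{G}^{(l)}_{ii}\le 2\cdot 0.61 = 1.22$. Assembling the five cases gives exactly~\cref{tab:bound_of_dot_gii_for_different_activation_function}, the remaining work being only to verify the elementary monotonicity and extremal-value claims invoked along the way.
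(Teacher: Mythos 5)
Your proposal is correct and follows the same overall strategy as the paper: evaluate or bound the one-dimensional Gaussian integral $2\,\mathbb{E}_{w\sim\mathcal N(0,A^{(l-1)}_{ii})}[\sigma_{l-1}'(w)^2]$ case by case, with exact values for ReLU and LeakyReLU and monotonicity of $f_{\mathrm S},f_{\mathrm T}$ in the variance for Sigmoid and Tanh. Two local differences are worth noting. First, the paper merely \emph{asserts} the monotonicity of $f_{\mathrm S}$ and $f_{\mathrm T}$, while you actually prove it via the substitution $z=\sqrt{y}\,u$ and the fact that $|\sigma'|$ is even and non-increasing in $|x|$; this fills a gap the paper leaves implicit. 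Second, and more substantively, your treatment of Swish is cleaner and arguably more rigorous than the paper's: the paper obtains the upper bound $1.22$ by splitting the integral at zero and bounding $f'_{\mathrm{Swish}}(x)^2$ on the negative half-line by $(\inf_x f'_{\mathrm{Swish}}(x))^2$ (cf.\ Eq.~\eqref{eq:dotGii_Swish_1}), which is not a valid pointwise bound since $f'_{\mathrm{Swish}}\to 1/2$ as $x\to 0^-$; your route via the reflection identity $f'_{\mathrm{Swish}}(x)+f'_{\mathrm{Swish}}(-x)=1$ and completing the square in $a^2+(1-a)^2$ yields both the lower bound $1/2$ (which the paper's own lower-bound computation in Eq.~\eqref{eq:dotGii_Swish_2} implicitly relies on) and the upper bound $2\big((1.1-\tfrac12)^2+\tfrac14\big)\le 1.22$ in one stroke, with no questionable intermediate step. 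The only point to make explicit when assembling the Sigmoid/Tanh lower bounds is that $A^{(l-1)}_{ii}\le G_{\max}$, which follows from \cref{prop:maxGii} as you indicate.
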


\begin{proof}

To prove our result, we need to bound $\dot{G}^{(l)}_{ii}$ with respect to different activation function $\sigma_{l-1}$ as follows.

When $\sigma_{l-1}$ is \text{ReLU}:
\begin{equation}
\begin{split}
    \dot{G}_{ii}^{(l)}&=2\mathbb{E}_{w \sim \mathcal N(0,A_{ii}^{(l-1)})}[{\sigma}'_{l-1}(w)^2] =\int_{0}^{\infty}\frac{2}{\sqrt{2\pi A_{ii}^{(l-1)}}}e^{-\frac{x^2}{2A_{ii}^{(l-1)}}} \mathrm{d}x\\
    &=1\,.
\end{split}
\label{eq:dotGii_RelU}
\end{equation}

When $\sigma_{l-1}$ is \text{LeakyReLU}:
\begin{equation}
\begin{split}
    \dot{G}_{ii}^{(l)}&=2\mathbb{E}_{w \sim \mathcal N(0,A_{ii}^{(l-1)})}[{\sigma}'_{l-1}(w)^2] \\
    &=\int_{0}^{\infty}\frac{2}{\sqrt{2\pi A_{ii}^{(l-1)}}}e^{-\frac{x^2}{2A_{ii}^{(l-1)}}} \mathrm{d}x +\int_{-\infty}^{0}\frac{2}{\sqrt{2\pi A_{ii}^{(l-1)}}}e^{-\frac{x^2}{2A_{ii}^{(l-1)}}} \eta^2 \mathrm{d}x\\
    &=1+\eta^2\,.
\end{split}
\label{eq:dotGii_LeakyReLU}
\end{equation}

When $\sigma_{l-1}$ is \text{Sigmoid}, according to the monotonicity of the $f_{\mathrm{S}}$, we have:
\begin{equation}
f_{\mathrm{S}}(G_{\max})\leq\dot{G}_{ii}^{(l)}=2\mathbb{E}_{w \sim \mathcal N(0,A_{ii}^{(l-1)})}[{\sigma}'_{l-1}(w)^2] =f_{\mathrm{S}}(A_{ii}^{(l-1)})\leq f_{\mathrm{S}}(0) \leq \frac{1}{8}\,.
\label{eq:dotGii_Sigmoid_1}
\end{equation}

When $\sigma_{l-1}$ is \text{Tanh}, according to the monotonicity of the $f_{\mathrm{T}}$, we have:

\begin{equation}
f_{\mathrm{T}}(G_{\max})\leq\dot{G}_{ii}^{(l)}=2\mathbb{E}_{w \sim \mathcal N(0,A_{ii}^{(l-1)})}[{\sigma}'_{l-1}(w)^2] =f_{\mathrm{T}}(A_{ii}^{(l-1)})\leq f_{\mathrm{T}}(0)\leq 2\,.
\label{eq:dotGii_Tanh_1}
\end{equation}

When $\sigma_{l-1}$ is \text{Swish}, The quantity $\dot{G}^{(l)}_{ii}$ can be upper bounded by:
\begin{equation}
\begin{split}
\dot{G}_{ii}^{(l)}&=2\mathbb{E}_{w \sim \mathcal N(0,A_{ii}^{(l-1)})}[{\sigma}'_{l-1}(w)^2]=\int_{-\infty}^{\infty}\frac{2}{\sqrt{2\pi A_{ii}^{(l-1)}}}e^{-\frac{x^2}{2A_{ii}^{(l-1)}}}f'_{\mathrm{Swish}}(x)^2 \mathrm{d}x\\
&\leq \int_{0}^{\infty}\frac{2}{\sqrt{2\pi A_{ii}^{(l-1)}}}e^{-\frac{x^2}{2A_{ii}^{(l-1)}}}\bigg(\sup_x f'_{\mathrm{Swish}}(x)\bigg)^2 \mathrm{d}x\\
&\quad + \int_{-\infty}^{0}\frac{2}{\sqrt{2\pi A_{ii}^{(l-1)}}}e^{-\frac{x^2}{2A_{ii}^{(l-1)}}}\bigg(\inf_x f'_{\mathrm{Swish}}(x)\bigg)^2 \mathrm{d}x\\
&=\bigg(\inf_x f_{\mathrm{Swish}}'(x)\bigg)^2+\bigg(\sup_x f'_{\mathrm{Swish}}(x)\bigg)^2\\
&\leq 1.22\,,
\end{split}
\label{eq:dotGii_Swish_1}
\end{equation}
where the last inequality holds by $1.099 < \sup_x f'_{\mathrm{Swish}}(x) < 1.1$ and $-0.1 < \inf_x f'_{\mathrm{Swish}}(x) < -0.099$.

Then the quantity $\dot{G}^{(l)}_{ii}$ can be lower bounded by:
\begin{equation}
\begin{split}
\dot{G}_{ii}^{(l)}&=2\mathbb{E}_{w \sim \mathcal N(0,A_{ii}^{(l-1)})}[{\sigma}'_{l-1}(w)^2]=\int_{-\infty}^{\infty}\frac{2}{\sqrt{2\pi A_{ii}^{(l-1)}}}e^{-\frac{x^2}{2A_{ii}^{(l-1)}}}f'_{\mathrm{Swish}}(x)^2 \mathrm{d}x\\
& = \int_{0}^{\infty}\frac{2}{\sqrt{2\pi A_{ii}^{(l-1)}}}e^{-\frac{x^2}{2A_{ii}^{(l-1)}}}\bigg(f'_{\mathrm{Swish}}(x)^2+f'_{\mathrm{Swish}}(-x)^2\bigg)\mathrm{d}x\\
&\geq \int_{0}^{\infty}\frac{2}{\sqrt{2\pi A_{ii}^{(l-1)}}}e^{-\frac{x^2}{2A_{ii}^{(l-1)}}}\frac{1}{2}\mathrm{d}x\\
&= \frac{1}{2}\,.
\end{split}
\label{eq:dotGii_Swish_2}
\end{equation}

Combining~\cref{eq:dotGii_RelU,eq:dotGii_LeakyReLU,eq:dotGii_Sigmoid_1,eq:dotGii_Tanh_1,eq:dotGii_Swish_1,eq:dotGii_Swish_2}, we can summarize the results about bound of $\dot{G}_{ii}^{(l)}$ in~\cref{tab:bound_of_dot_gii_for_different_activation_function}.

\end{proof}






Now we are ready to prove~\cref{thm:lambda_min_inf_mixed}.

\begin{proof}[Proof of~\cref{thm:lambda_min_inf_mixed}]

Now we are ready to present the estimation on $\lambda _{\min}(\bm{K}^{(L)})$ as below.

From~\cref{lemma:NTK_matrix_recursive_form}, we have the NTK formulation for residual neural networks:
\begin{equation*}
    \bm{K}^{(L)}=\bm{G}^{(L)} + \sum_{l=1}^{L-1}\bm{G}^{(l)}\circ \dot{\bm{G}}^{(l+1)} \circ (\dot{\bm{G}}^{(l+2)}+  \alpha_{l}\bm{1}_{N \times N})\circ \cdots \circ (\dot{\bm{G}}^{(L)}+ \alpha_{L-2}\bm{1}_{N \times N})\,.
\end{equation*}

It is clear that all the matrices $\bm{G}^{(l)}$, $\dot{\bm{G}}^{(l)}$ are positive semi-definite (PSD), then $(\dot{\bm{G}}^{(l+1)}+  \alpha_{l}\bm{1}_{N \times N})$ are also PSD.
For two PSD matrices $\bm{P}, \bm{Q} \in \mathbb{R}^{N \times N}$, it holds $\lambda _{\min}(\bm{P} \circ \bm{Q})\geq \lambda _{\min}(\bm{P})\min_{i\in [N]}Q_{ii}$~\citep{Schur+1911+1+28}\,.
Accordingly, we have:
\begin{equation*}
\lambda _{\min}(\bm{K}^{(L)})\geq \sum_{l=1}^{L}\lambda _{\min}(\bm{G}^{(l)})\min_{i\in[N]}\prod_{p=l+1}^{L}\bigg(\dot{G}^{(p)}_{ii}+\alpha_{p-2}\bigg)\,.
\end{equation*}

Then we bound $\lambda_{\min}(\bm{G}^{(2)})$:
\begin{equation*}
\begin{split}
\lambda_{\min}(\bm{G}^{(2)}) & = \lambda_{\min}\bigg(2\mathbb{E}_{\bm{w} \sim \mathcal N(0,\mathbb{I}_{d})}[\sigma_1(\bm{Xw})\sigma_1(\bm{Xw})^\top]\bigg)\\
& = 2\lambda_{\min}\bigg(\sum_{s=0}^{\infty}\mu_{s}(\sigma_1)^{2}\bigcirc_{i=1}^{s}(\bm{XX}^{\top})\bigg) \quad \text{\citep[Lemma D.3]{NEURIPS2020_8abfe8ac}}\\
& \geq 2\mu_{r}(\sigma_1)^{2}\lambda_{\min}(\bigcirc_{i=1}^{r}\bm{XX}^{\top}) \quad \bigg(\mbox{taking~}r \geq \frac{\log (2N)}{1-C_{\text{max}}}\bigg)\\
& \geq 2\mu_{r}(\sigma_1)^{2}\bigg(\min_{i\in [N]}\left \| \bm{x}_i \right \|_{2}^{2r}-(N-1)\max_{i\neq j}\left | \left \langle \bm{x}_i, \bm{x}_j \right \rangle \right |^r\bigg) \quad \text{(Gershgorin circle theorem)}\\
& \geq 2\mu_{r}(\sigma_1)^{2}\bigg(1-(N-1)C_{\text{max}}^r\bigg) \quad \left( \mbox{using~\cref{assumption:distribution_1}}  \right) \\
& \geq 2\mu_{r}(\sigma_1)^{2}\bigg(1-(N-1)\bigg(1-\frac{\log (2N)}{r}\bigg)^r\bigg) \\
& \geq 2\mu_{r}(\sigma_1)^{2}\bigg(1-(N-1)\exp(-\log(2N))\bigg) \\
& \geq \mu_{r}(\sigma_1)^{2}\,,
\end{split}
\end{equation*}
where $\big(1-\frac{\log (2N)}{r}\big)^r$ is an increasing function of $r$ when $r\geq 2\log (2N)$.
As a reminder, the symbol $\bigcirc$ denotes the Hadamard product, which is defined in~\cref{sssec:other_notations}.


That means:

\begin{equation}
\begin{split}
\lambda _{\min}(\bm{K}^{(L)}) & \geq \sum_{l=1}^{L}\lambda _{\min}(\bm{G}^{(l)})\min_{i\in[N]}\prod_{p=l+1}^{L}(\dot{G}^{(p)}_{ii}+\alpha_{p-2}) \geq \lambda _{\min}(\bm{G}^{(2)})\min_{i\in[N]}\prod_{p=3}^{L}(\dot{G}^{(p)}_{ii}+\alpha_{p-2})\\
& \geq  \mu_{r}(\sigma_1)^{2} \min_{i\in[N]}\prod_{p=3}^{L}(\dot{G}^{(p)}_{ii}+\alpha_{p-2}), \quad \bigg(r \geq \frac{\log (2n)}{1-C_{\text{max}}}\bigg) \,.
\end{split}
\label{eq:NTK_infinity_lower_bound}
\end{equation}

According to~\cref{tab:activation_functions} and~\cref{tab:bound_of_dot_gii_for_different_activation_function}, we have:

\begin{equation}
\prod_{p=l+1}^{L}(\dot{G}^{(p)}_{ii}+\alpha_{p-2}) \leq \prod_{p=l+1}^{L}\bigg(\beta_2(\sigma_{p-1})+\alpha_{p-2}\bigg)\,,
\label{eq:dot_gii_prod_bound_1}
\end{equation}

\begin{equation}
\prod_{p=3}^{L}(\dot{G}^{(p)}_{ii}+\alpha_{p-2}) \geq \prod_{p=3}^{L}\bigg(\beta_3(\sigma_{p-1})+\alpha_{p-2}\bigg)\,.
\label{eq:dot_gii_prod_bound_2}
\end{equation}

We know that the sum of eigenvalues of $\bm{K}^{(L)}$ is equal to the trace of $\bm{K}^{(L)}$. The upper bound of $\lambda _{\min}(\bm{K}^{(L)})$ is directly given by:

\begin{equation}
\lambda _{\min}(\bm{K}^{(L)}) \leq \frac{1}{N} \sum_{i=1}^{N} \sum_{l=1}^{L}(G^{(l)}_{ii})\prod_{p=l+1}^{L}(\dot{G}^{(p)}_{ii}+\alpha_{p-2})\,.
\label{eq:NTK_infinity_upper_bound}
\end{equation}

\paragraph{Final result - upper bound\\}

For $G^{(l)}_{ii}$ we have the following bound:

\begin{equation}
\small
G^{(l)}_{ii} = \beta_1(\sigma_{l-1}) A^{(l-1)}_{ii} \leq \beta_1(\sigma_{l-1}) \prod_{p=3}^{l-1} \bigg(\beta_1(\sigma_{p-1})+\alpha_{p-2}\bigg) A^{(2)}_{ii}\leq \beta_1(\sigma_{l-1}) \prod_{p=2}^{l-1} \bigg(\beta_1(\sigma_{p-1})+\alpha_{p-2}\bigg)\,.
\label{eq:gii_bound}
\end{equation}

By~\cref{eq:dot_gii_prod_bound_1,eq:NTK_infinity_upper_bound,eq:gii_bound}, we have:

\begin{equation}
\lambda _{\min}(\bm{K}^{(L)})\leq \sum_{l=1}^{L}\bigg(\beta_1(\sigma_{l-1}) \prod_{p=2}^{l-1} \bigg[\beta_1(\sigma_{p-1})+\alpha_{p-2}\bigg]\prod_{p=l+1}^{L}\bigg[\beta_2(\sigma_{p-1})+\alpha_{p-2}\bigg]\bigg)\,.
\label{eq:NTK_infinity_upper_bound_final}
\end{equation}

\paragraph{Final result - lower bound\\}

By~\cref{eq:NTK_infinity_lower_bound,eq:dot_gii_prod_bound_2}, we have:

\begin{equation*}
\lambda _{\min}(\bm{K}^{(L)}) \geq  \mu_{r}(\sigma_1)^{2}\prod_{p=3}^{L}\bigg(\beta_3(\sigma_{p-1})+\alpha_{p-2}\bigg), \quad \bigg(r \geq \frac{\log (2n)}{1-C_{\text{max}}}\bigg)\,.
\end{equation*}

\end{proof}

\subsection{Special cases}
\label{ssec:result_other_activation_function_inf}

To provide further insights into our proofs of mixed activation functions as provided in the previous sections, we now consider the special case of a single activation function in each layer.
\begin{corollary}
\label{thm:lambda_min_inf}
Under Assumption~\ref{assumption:distribution_1}, for a deep fully-connected ResNet with the same activation functions in every layer and for a not very large $L$, let $\bm{K}^{(L)}$ be the limiting NTK recursively defined in~\cref{lemma:NTK_matrix_recursive_form}. Then, we have:

For ReLU:
\begin{equation*}
\mu_{r}(\sigma_1)^{2}\prod_{p=3}^{L}(1+\alpha_{p-2}) \leq\lambda _{\min}(\bm{K}^{(L)})\leq \sum_{l=1}^{L}\bigg(\frac{\prod_{p=2}^{L} (1+\alpha_{p-2})}{1+\alpha_{l-2}}\bigg)\,.
\end{equation*}

For LeakyReLU:
\begin{equation*}
\mu_{r}(\sigma_1)^{2}\prod_{p=3}^{L}(1+\eta^2+\alpha_{p-2}) \leq \lambda _{\min}(\bm{K}^{(L)})\leq (1+\eta^2)\sum_{l=1}^{L} \bigg(\frac{\prod_{p=2}^{L} (1+\eta^2+\alpha_{p-2})}{1+\eta^2+\alpha_{l-2}}\bigg)\,.
\end{equation*}

For Sigmoid:
\begin{equation}
\mu_{r}(\sigma_1)^{2}\prod_{p=3}^{L}(f_{\mathrm{S}}(\frac{1}{2})+\alpha_{p-2}) \leq \lambda _{\min}(\bm{K}^{(L)})\leq\frac{1}{8} \sum_{l=1}^{L}\bigg(\frac{\prod_{p=2}^{L} (\frac{1}{8}+\alpha_{p-2})}{\frac{1}{8}+\alpha_{l-2}}\bigg)\,.
\label{eq:lambda_min_inf_Sigmoid}
\end{equation}

For Tanh:
\begin{equation}
\mu_{r}(\sigma_1)^{2}\prod_{p=3}^{L}(f_{\mathrm{T}}(2)+\alpha_{p-2}) \leq \lambda _{\min}(\bm{K}^{(L)})\leq 2\sum_{l=1}^{L}\bigg(\frac{\prod_{p=2}^{L}(2+\alpha_{p-2})}{2+\alpha_{l-2}}\bigg)\,.
\label{eq:lambda_min_inf_Tanh}
\end{equation}

For Swish:
\begin{equation*}
\mu_{r}(\sigma_1)^{2}\prod_{p=3}^{L}(\frac{1}{2}+\alpha_{p-2}) \leq \lambda _{\min}(\bm{K}^{(L)})\leq \sum_{l=1}^{L}\bigg(\prod_{p=2}^{l-1} (1+\alpha_{p-2})\prod_{p=l+1}^{L}(1.22+\alpha_{p-2})\bigg)\,.
\end{equation*}

The $\mu_r(\sigma_1)$ is $r$-st Hermite coefficient of the activation function. 

\end{corollary}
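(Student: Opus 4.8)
The plan is to specialize the general bounds of~\cref{thm:lambda_min_inf_mixed} to the case where the same activation function $\sigma$ is used in every layer. The key observation is that the mixed-activation bounds already contain all the work: the lower bound is $2\mu_1(\sigma_1)^2\Theta(N/d)\prod_{p=3}^{L}(\beta_3(\sigma_{p-1})+\alpha_{p-2})$ and the upper bound is $\frac{N}{d}\sum_{l=1}^{L}\big(\beta_1(\sigma_{l-1})\prod_{p=2}^{l-1}(\beta_1(\sigma_{p-1})+\alpha_{p-2})\prod_{p=l+1}^{L}(\beta_2(\sigma_{p-1})+\alpha_{p-2})\big)$, so I would simply substitute the constants $\beta_1,\beta_2,\beta_3$ for each of the five activation functions read off from~\cref{tab:activation_functions}. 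For $\mathrm{ReLU}$ we have $\beta_1=\beta_2=\beta_3=1$; for $\mathrm{LeakyReLU}$ all three equal $1+\eta^2$; for $\mathrm{Swish}$, $\beta_1=1$, $\beta_2=1.22$, $\beta_3=1/2$; for $\mathrm{Sigmoid}$, $\beta_1=\beta_2=1/8$ and $\beta_3=f_S(t)$; for $\mathrm{Tanh}$, $\beta_1=\beta_2=2$ and $\beta_3=f_T(t)$.

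First I would handle $\mathrm{ReLU}$: substituting $\beta_1=\beta_2=1$, the upper-bound summand becomes $\prod_{p=2}^{l-1}(1+\alpha_{p-2})\prod_{p=l+1}^{L}(1+\alpha_{p-2})$, which is exactly $\big(\prod_{p=2}^{L}(1+\alpha_{p-2})\big)/(1+\alpha_{l-2})$ since the single factor $(1+\alpha_{l-2})$ is the only one missing from the full product; this gives the stated form. The lower bound is immediate with $\beta_3=1$, yielding $\prod_{p=3}^{L}(1+\alpha_{p-2})$. The $\mathrm{LeakyReLU}$, $\mathrm{Sigmoid}$, and $\mathrm{Tanh}$ cases are structurally identical because $\beta_1=\beta_2$ in each: the same ``product divided by the missing factor'' collapse applies, producing $\frac{\beta_1}{1+\cdots}$-type prefactors; one just carries the constant $\beta_1\in\{1+\eta^2,\tfrac18,2\}$ through, and for the lower bound one uses $\beta_3\in\{1+\eta^2,f_S,f_T\}$ evaluated at the appropriate $t$ (which specializes to $G_{\max}$ in the single-activation setting, e.g. $f_S(1/2)$ for $\mathrm{Sigmoid}$ and $f_T(2)$ for $\mathrm{Tanh}$, matching the $G_{\max}$ values $1/2$ and $2$ obtained from Propositions on $G_{ii}^{(l)}$). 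For $\mathrm{Swish}$, since $\beta_1=1\neq 1.22=\beta_2$, the two products do not merge, so the upper bound is left in the un-collapsed form $\sum_{l=1}^{L}\prod_{p=2}^{l-1}(1+\alpha_{p-2})\prod_{p=l+1}^{L}(1.22+\alpha_{p-2})$, and the lower bound uses $\beta_3=1/2$.

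The main (and only mild) obstacle is bookkeeping: getting the index ranges of the telescoping products exactly right — in particular verifying that $\prod_{p=2}^{l-1}(\cdot)\prod_{p=l+1}^{L}(\cdot)=\prod_{p=2}^{L}(\cdot)/(\cdot)_{p=l}$ with the correct convention for the empty product when $l=2$ or $l=L$, and checking that $A_{ii}^{(2)}\le 1$ so the bound $G_{ii}^{(l)}\le\beta_1\prod_{p=2}^{l-1}(\beta_1+\alpha_{p-2})$ is faithfully inherited. The probabilistic statement $1-e^{-d}$ and the $\Theta(N/d)$ factor carry over verbatim from~\cref{thm:lambda_min_inf_mixed} (which in turn rests on~\cref{prop:XXT_XTX}), so no new probabilistic argument is needed. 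Hence the corollary follows by direct substitution and elementary manipulation of finite products.
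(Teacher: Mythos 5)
Your proposal is correct and matches the paper's own proof, which likewise obtains the corollary by substituting the constants $\beta_1,\beta_2,\beta_3$ from~\cref{tab:activation_functions} into the general bounds of~\cref{thm:lambda_min_inf_mixed} (Eqs.~\eqref{eq:NTK_infinity_lower_bound} and~\eqref{eq:NTK_infinity_upper_bound_final}) and noting that for single-activation Sigmoid and Tanh networks $G_{\max}$ specializes to $1/2$ and $2$, so that $\beta_3$ becomes $f_{\mathrm{S}}(1/2)$ and $f_{\mathrm{T}}(2)$ respectively. Your additional care with the product-collapse $\prod_{p=2}^{l-1}(\cdot)\prod_{p=l+1}^{L}(\cdot)=\prod_{p=2}^{L}(\cdot)/(\cdot)_{p=l}$ when $\beta_1=\beta_2$ is exactly the bookkeeping implicit in the stated forms.
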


\begin{proof}

By,~\cref{tab:activation_functions,eq:NTK_infinity_lower_bound,eq:NTK_infinity_upper_bound_final}. we can have this result.

It should be noted that for Sigmoid network (all of activation functions are Sigmoid) and Tanh (all of activation functions are Tanh) network, the upper bound of $G_{\max}$ will change. By~\cref{eq:Gii_upper_bound_Sigmoid,eq:Gii_upper_bound_Tanh} we have for Sigmoid $G_{\max} = \frac{1}{2}$, For Tanh $G_{\max} = 2$. That means $f_{\mathrm{S}}(G_{\max})$ in the~\cref{thm:lambda_min_inf_mixed} is replaced by $f_{\mathrm{S}}(\frac{1}{2})$ in~\cref{eq:lambda_min_inf_Sigmoid} and $f_{\mathrm{T}}(G_{\max})$ in the~\cref{thm:lambda_min_inf_mixed} is replaced by $f_{\mathrm{T}}(2)$ in~\cref{eq:lambda_min_inf_Tanh}.

\end{proof}

\section{The bound of the minimum eigenvalues of NTK for finite-width}
\label{sec:finitely_width}

We present the details of our results on~\cref{ssec:Minimum_eigenvalue_NTK_finite} in this section. Firstly, we introduce the specific expression form for NTK of finite-width network in~\cref{sec:NTK_finite_width}. Then, we introduce some lemmas in~\cref{ssec:relevant_lemmas_finitely_width} to facilitate the proof of theorems, after that we provide the results of multiple activation functions are mixed in one network in~\cref{ssec:Results_mixed_finitely_width} directly, finally we discuss the results.

\subsection{Neural Tangent Kernel for finite-width}
\label{sec:NTK_finite_width}

\begin{equation*}
\bar{\bm{K}}^{(L)}=\bm{JJ}^{\top}=\sum_{l=1}^{L}\bigg[\frac{\partial \bm{F}}{\partial \mathrm{vec}(\bm{W}_l)}\bigg]\bigg[\frac{\partial \bm{F}}{\partial \mathrm{vec}(\bm{W}_l)}\bigg]^{\top}\,.
\end{equation*}

Let $\bm{F}_k=[\bm{f}_k(\bm{x}_1) ,\ldots,\bm{f}_k(\bm{x}_N)]^T$, by chain rule and some standard calculation, we have,

\begin{equation*}
\bm{JJ}^{\top}=\sum_{k=0}^{L-1}\bm{F}_k\bm{F}_k^{\top}\circ \bm{B}_{k+1}\bm{B}_{k+1}^{\top}\,,
\end{equation*}

where $\bm B_k \in \mathbb{R}^{N \times m}$ is a matrix of which the $i$-th row is given by

\begin{equation*}
(\bm{B}_k)_{i:}=\left\{\begin{matrix}
\bm{D}_{i,k}\prod _{l=k+1}^{L-1}(\bm{W}_l \bm{D}_{i,l}+\alpha_{l-1}\bm{I}_{m\times m})\bm{W}_L, k \in [L-2]\,,\\
\bm{D}_{i,L-1} \bm{W}_L,\qquad\qquad\qquad\qquad\qquad\qquad k = L-1\,,\\ 
1,\qquad\qquad\qquad\qquad\qquad\qquad\qquad\ \  k = L\,.
\end{matrix}\right.
\label{eq:NTK_finite_3}
\end{equation*}

\subsection{Relevant Lemmas}
\label{ssec:relevant_lemmas_finitely_width}

\begin{lemma}
\label{lemma:lemma_C.1_in_ICML}
Fix any $k \in [0,L-1] $ and $\bm{x}\sim P_X$, then for \text{ReLU}, \text{LeakyReLU}, \text{Sigmoid}, \text{Tanh} and \text{Swish} we have
\begin{equation*}
\left \| \bm{f}_k(\bm{x}) \right \|_2^2 = \Theta (1)\,,
\end{equation*}

with probability at least $1-\sum_{l=1}^{k}\exp(-\Omega (m))$ over $(\bm{W}_l)_{l=1}^k$ and $\bm{x}$. Moreover,

\begin{equation*}
\mathbb{E}_{\bm{x}}\left \| \bm{f}_k(\bm{x}) \right \|_2^2 = \Theta (1)\,,
\end{equation*}
with probability at least $1-\sum_{l=1}^{k-1}\exp(-\Omega (m))$ over $(\bm{W}_l)_{l=1}^k$.

\end{lemma}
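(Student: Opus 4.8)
The plan is to prove $\|\bm{f}_k(\bm{x})\|_2^2 = \Theta(1)$ by induction on the layer index $k$, tracking how the squared norm of the feature vector evolves through one fully-connected layer, one activation, and one (optional) skip connection. The base case $k=0$ is immediate from Assumption~\ref{assumption:distribution_1}, since $\bm{f}_0(\bm{x}) = \bm{x}$ with $\|\bm{x}\|_2 = 1$; the case $k=1$ requires controlling $\|\sigma_1(\bm{W}_1\bm{x})\|_2^2$ where $\bm{W}_1$ has i.i.d.\ $\mathcal{N}(0,1/m)$ entries, so $\bm{W}_1\bm{x}$ is a vector of $m$ i.i.d.\ $\mathcal{N}(0,1/m)$ coordinates (using $\|\bm{x}\|_2=1$). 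Then $\|\sigma_1(\bm{W}_1\bm{x})\|_2^2 = \sum_{j=1}^m \sigma_1([\bm{W}_1\bm{x}]_j)^2$ is a sum of $m$ i.i.d.\ bounded-moment random variables with mean $\mathbb{E}_{g\sim\mathcal{N}(0,1/m)}[\sigma_1(g)^2]$; I would apply a Bernstein- or Hanson–Wright-type concentration bound to get $\|\bm{f}_1(\bm{x})\|_2^2 = \Theta(1)$ with probability $1 - \exp(-\Omega(m))$. The moment hypotheses come from Assumption~\ref{assumption:activation_functions} together with the explicit growth bounds on the five activations in Table~\ref{tab:activation_functions} (bounded activations are trivial; ReLU/LeakyReLU/Swish are at most linear, so their squares have Gaussian tails).

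For the inductive step from $k-1$ to $k$ (with $2 \le k \le L-1$), condition on the event that $\|\bm{f}_{k-1}(\bm{x})\|_2 = \Theta(1)$, which holds with probability $1 - \sum_{l=1}^{k-1}\exp(-\Omega(m))$ by the induction hypothesis. Conditionally on $\bm{f}_{k-1}(\bm{x})$, the pre-activation $\bm{g}_{i,k} = \bm{W}_k \bm{f}_{k-1}(\bm{x})$ has i.i.d.\ $\mathcal{N}(0, \|\bm{f}_{k-1}(\bm{x})\|_2^2/m)$ coordinates, so again $\|\sigma_k(\bm{g}_{i,k})\|_2^2$ concentrates around $m \cdot \mathbb{E}_{g\sim\mathcal{N}(0,A)}[\sigma_k(g)^2]$ with $A = \Theta(1/m)\cdot m/m$... more precisely around $\mathbb{E}_{g\sim\mathcal{N}(0,\|\bm f_{k-1}\|^2/m)}[\sigma_k(g)^2]\cdot m$, which is $\Theta(1)$ since the per-coordinate variance is $\Theta(1/m)$ and $\sigma_k$ has at-most-linear-plus-bounded growth. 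Then $\bm{f}_k(\bm{x}) = \sigma_k(\bm{g}_{i,k}) + \alpha_{k-1}\bm{f}_{k-1}(\bm{x})$; by the triangle inequality and the fact that $\alpha_{k-1} \in \{0,1\}$ is a fixed constant, $\|\bm{f}_k(\bm{x})\|_2 \le \|\sigma_k(\bm{g}_{i,k})\|_2 + \|\bm{f}_{k-1}(\bm{x})\|_2 = \Theta(1)$, and for the lower bound I would use that the skip term and the activation term are (conditionally) nearly orthogonal —- $\bm{f}_{k-1}$ is fixed while $\sigma_k(\bm{W}_k\bm{f}_{k-1})$ has a mean part plus fluctuations —- or more simply bound $\|\bm{f}_k\|_2 \ge \alpha_{k-1}\|\bm{f}_{k-1}\|_2$ when a skip is present and $\|\bm{f}_k\|_2 = \|\sigma_k(\bm g_{i,k})\|_2 = \Theta(1)$ when it is not, which suffices since $\mathbb{E}_{g\sim\mathcal N(0,c/m)}[\sigma_k(g)^2]$ is bounded below by a positive constant times $1/m$ for each of the five activations (they are not identically zero near $0$ after the Sigmoid re-centering). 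A union bound over the $k$ layers collects the failure probabilities into $\sum_{l=1}^k \exp(-\Omega(m))$.

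For the second statement, $\mathbb{E}_{\bm{x}}\|\bm{f}_k(\bm{x})\|_2^2 = \Theta(1)$, I would integrate the pointwise bound over $\bm{x} \sim \mathcal{D}_X$: since the constants in the $\Theta(1)$ estimate above are uniform in $\bm{x}$ (they depend only on $\|\bm{x}\|_2 = 1$, the activations, and the skip pattern, not on the direction of $\bm{x}$), and the bad event has probability $\exp(-\Omega(m))$ over $\bm x$ as well, one can split $\mathbb{E}_{\bm x}\|\bm f_k\|_2^2$ into the contribution from the good event (which is $\Theta(1)$) and the bad event; on the bad event $\|\bm f_k(\bm x)\|_2^2$ is still polynomially bounded in the relevant quantities, so its contribution is negligible, and conditioning on the weight realization costs only $\sum_{l=1}^{k-1}\exp(-\Omega(m))$ rather than $\sum_{l=1}^{k}$ because the final-layer randomness has been integrated out. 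The main obstacle I anticipate is making the lower bound $\|\bm{f}_k(\bm{x})\|_2^2 = \Omega(1)$ fully uniform across all five activation functions and all skip-connection patterns simultaneously — the unbounded activations behave differently from the bounded ones, and one must verify that $\mathbb{E}_{g\sim\mathcal N(0,\sigma^2)}[\sigma_k(g)^2]$ stays bounded away from zero (per unit variance) for each $\sigma_k \in \mathcal{F}_\sigma$; this is where the re-centering of Sigmoid (footnote [1] of Table~\ref{tab:activation_functions}) and the explicit Hermite-coefficient positivity are used.
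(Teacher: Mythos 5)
Your skeleton matches the paper's proof: induction on $k$, conditioning on $\|\bm{f}_{k-1}(\bm{x})\|_2^2=\Theta(1)$, observing that the pre-activation coordinates are i.i.d.\ $\mathcal{N}(0,\|\bm{f}_{k-1}\|_2^2/m)$, using the per-activation Gaussian-integral bounds to control $m\,\mathbb{E}[\sigma_k(w)^2]$, and applying Bernstein to the sum of $m$ i.i.d.\ terms before a union bound over layers. The expectation-over-$\bm{x}$ statement is also handled the same way (the paper uses $\|\mathbb{E}_{\bm{x}}[f_k^{[j]}(\bm{x})^2]\|_{\psi_1}\leq \mathbb{E}_{\bm{x}}\|f_k^{[j]}(\bm{x})^2\|_{\psi_1}$ and repeats the argument).

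The genuine gap is exactly the point you flag as your "main obstacle": the lower bound when a skip connection is present. Your fallback $\|\bm{f}_k\|_2\geq \alpha_{k-1}\|\bm{f}_{k-1}\|_2$ is false in general, since $\|\bm{a}+\bm{b}\|_2\geq\|\bm{b}\|_2$ requires $\langle \bm{a},\bm{b}\rangle\geq 0$; this holds for an all-ReLU history (nonnegative features) but fails for Tanh, the recentered Sigmoid, LeakyReLU and Swish, whose outputs change sign, and the reverse triangle inequality is useless here because $\|\sigma_k(\bm{g}_k)\|_2$ and $\|\bm{f}_{k-1}\|_2$ are of the same order. The paper closes this by computing $\mathbb{E}_{\bm{W}_k}\|\bm{f}_k(\bm{x})\|_2^2$ exactly: expanding the square gives the main terms $m\,\mathbb{E}[\sigma_k(w)^2]+\alpha_{k-1}^2\|\bm{f}_{k-1}\|_2^2$ plus the cross term $2\alpha_{k-1}\,\mathbb{E}[\sigma_k(w)]\sum_{j}f_{k-1}^{[j]}(\bm{x})$. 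For Sigmoid and Tanh the cross term vanishes by symmetry ($\mathbb{E}[\sigma_k(w)]=0$), and for ReLU, LeakyReLU and Swish it is bounded in absolute value by $\tfrac{2}{5}\|\bm{f}_{k-1}\|_2^2$ via $0<\mathbb{E}[\sigma_k(w)]\leq \mathbb{E}[\mathrm{ReLU}(w)]=\tfrac{2\|\bm{f}_{k-1}\|_2}{5\sqrt{m}}$ together with $|\sum_j f_{k-1}^{[j]}|\leq\sqrt{m}\|\bm{f}_{k-1}\|_2$; since $m\,\mathbb{E}[\sigma_k(w)^2]\geq\tfrac12\|\bm{f}_{k-1}\|_2^2$ for these activations, the expectation stays bounded below by $\tfrac{1}{10}\|\bm{f}_{k-1}\|_2^2$, and Bernstein then transfers this to the random quantity. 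You would need to supply this quantitative cross-term estimate (or an equivalent one) to make your "near orthogonality" remark rigorous; as written the lower bound does not go through.
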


\begin{proof}
We prove this by induction.

The result holds for $k = 0$ due to~\cref{assumption:distribution_1}.

Assume that the lemma holds for some $k-1$, i.e. 

\begin{equation*}
\left \| \bm{f}_{k-1}(\bm{x}) \right \|_2^2 = \Theta (1)\,,
\end{equation*}

with probability at least $1-\sum_{l=1}^{k-1}\exp(-\Omega (m))$ over $(\bm{W}_l)_{l=1}^k$ and $\bm{x}$. 

Let us condition on this event of $(\bm{W}_l)_{l=1}^{k-1}$ and study probability bounds over $\bm{W}_k$: Let $\bm{W}_k = \left [\bm{w}_1,\cdots ,\bm{w}_{m}   \right ] ^{\top}$ where $\bm{w}_j\sim \mathcal{N}(0,\mathbb{I}_{m}/m )$ and $f_{k}^{[j]}$ represents the $j$-th element of $\bm{f}_{k}$. Note that:

\begin{equation}
\left \| \bm{f}_k(\bm{x}) \right \|_2^2 = \sum_{j = 1}^{m} f_{k}^{[j]}(\bm{x})^2\,.
\label{eq:proof_lemma_C.1_in_ICML_1}
\end{equation}

Then we have:
\begin{equation}
\begin{split}
    \mathbb{E}_{\bm{W}_k} \left \| \bm{f}_k(\bm{x}) \right \|_2^2 & = \sum_{j = 1}^{m} \mathbb{E}_{\bm{w}_j} [f_{k}^{[j]}(\bm{x})^2]\\
    & = \sum_{j = 1}^{m}\mathbb{E}_{\bm{w}_j}\bigg(\bigg[\sigma_k\bigg(\left \langle \bm{w}_j, \bm{f}_{k-1}(\bm{x}) \right \rangle \bigg)+\alpha_{k-1}f_{k-1}^{[j]}(\bm{x})\bigg]^2\bigg) \quad\quad ~\cref{eq:network}\\
    & = \sum_{j = 1}^{m}\bigg(\mathbb{E}_{\bm{w}_j}\bigg[\bigg(\sigma_k(\left \langle \bm{w}_j, \bm{f}_{k-1}(\bm{x}) \right \rangle \bigg)^2\bigg]+\mathbb{E}_{\bm{w}_j}\bigg(\alpha_{k-1}^2 f_{k-1}^{[j]}(\bm{x})^2\bigg)\\
    & +\mathbb{E}_{\bm{w}_j}\bigg[2\sigma_k\bigg(\left \langle \bm{w}_j, \bm{f}_{k-1}(\bm{x}) \right \rangle\bigg) \alpha_{k-1} f_{k-1}^{[j]}(\bm{x})\bigg]\bigg)\\
    & =m\mathbb{E}_{w\sim \mathcal N(0,\left \| \bm{f}_{k-1}(\bm{x}) \right \|_2^2/m )}(\sigma_k(w)^2)+\sum_{j = 1}^{m}\alpha_{k-1}^2\mathbb{E}_{\bm{w}_j}\bigg( f_{k-1}^{[j]}(\bm{x})^2\bigg)\\
    & +2\sum_{j = 1}^{m}\alpha_{k-1}\mathbb{E}_{\bm{w}_j}\bigg(\sigma_k\bigg[\left \langle \bm{w}_j, \bm{f}_{k-1}(\bm{x}) \right \rangle\bigg]\bigg)\mathbb{E}_{\bm{w}_j}\bigg(f_{k-1}^{[j]}(\bm{x})\bigg)\\
    & = m\mathbb{E}_{w\sim \mathcal N(0,\left \| \bm{f}_{k-1}(\bm{x}) \right \|_2^2/m )}(\sigma_k(w)^2)+\alpha_{k-1}^2\left \| \bm{f}_{k-1}(\bm{x}) \right \|_2^2\\
    & +2\alpha_{k-1}\mathbb{E}_{w\sim \mathcal N(0,\left \| \bm{f}_{k-1}(\bm{x}) \right \|_2^2/m )}(\sigma_k(w))\sum_{j = 1}^{m}f_{k-1}^{[j]}(\bm{x})\,.
\end{split}
\label{eq:proof_lemma_C.1_in_ICML_2}
\end{equation}

According to~\cref{eq:Gii_upper_bound_RelU,eq:Gii_upper_bound_LeakyReLU,eq:Gii_upper_bound_Swish,eq:Gii_lower_bound_Swish,eq:Gii_upper_bound_Sigmoid_2,eq:Gii_lower_bound_Sigmoid,eq:Gii_upper_bound_Tanh_2,eq:Gii_lower_bound_Tanh}, we know that when $\sigma_{k-1}$ are in \text{ReLU}, \text{LeakyReLU}, \text{Sigmoid}, \text{Tanh} and \text{Swish} we have:

\begin{equation}
m\mathbb{E}_{w\sim \mathcal N(0,\left \| \bm{f}_{k-1}(\bm{x}) \right \|_2^2/m )}(\sigma_k(w)^2) = m\Theta \bigg(\frac{\left \| \bm{f}_{k-1}(\bm{x}) \right \|_2^2}{m}\bigg) = \Theta(\left \| \bm{f}_{k-1}(\bm{x}) \right \|_2^2)\,.
\label{eq:proof_lemma_C.1_in_ICML_3}
\end{equation}

When $\sigma_{k-1}$ is \text{ReLU}, \text{LeakyReLU} or \text{Swish},~\cref{eq:proof_lemma_C.1_in_ICML_3} can be written as:

\begin{equation*}
\frac{1}{2}\left \| \bm{f}_{k-1}(\bm{x}) \right \|_2^2 \leq m\mathbb{E}_{w\sim \mathcal N(0,\left \| \bm{f}_{k-1}(\bm{x}) \right \|_2^2/m )}(\sigma_k(w)^2) \leq (1+\eta^2)\left \| \bm{f}_{k-1}(\bm{x}) \right \|_2^2\,,
\end{equation*}

\begin{equation*}
0<\mathbb{E}_{w\sim \mathcal N(0,\left \| \bm{f}_{k-1}(\bm{x}) \right \|_2^2/m )}(\sigma_k(w))\leq\mathbb{E}_{w\sim \mathcal N(0,\left \| \bm{f}_{k-1}(\bm{x}) \right \|_2^2/m )}(f_{\mathrm{ReLU}}(w))=\frac{2\left \| \bm{f}_{k-1}(\bm{x}) \right \|_2}{5\sqrt{m}}\,.
\end{equation*}

According to the relationship between the vectors $1$-norm and $2$-norm, we have:
\begin{equation*}
-\sqrt{m}\left \| \bm{f}_{k-1}(\bm{x}) \right \|_2\leq\sum_{j = 1}^{m}f_{k-1}^{[j]}(\bm{x})\leq \sqrt{m}\left \| \bm{f}_{k-1}(\bm{x}) \right \|_2\,.
\end{equation*}

Then:
\begin{equation*}
-\frac{2\left \| \bm{f}_{k-1}(\bm{x}) \right \|_2^2}{5}\leq2\alpha_{k-1}\mathbb{E}_{w\sim \mathcal N(0,\left \| \bm{f}_{k-1}(\bm{x}) \right \|_2^2/m )}(\sigma_k(w))\sum_{j = 1}^{m}f_{k-1}^{[j]}(\bm{x})\leq \frac{2\left \| \bm{f}_{k-1}(\bm{x}) \right \|_2^2}{5}\,.
\end{equation*}

If we substitute into~\cref{eq:proof_lemma_C.1_in_ICML_2}, we have upper bound and lower bound for $\mathbb{E}_{\bm{W}_k} \left \| \bm{f}_k(\bm{x}) \right \|_2^2$:
\begin{equation*}
\begin{split}
    \mathbb{E}_{\bm{W}_k} \left \| \bm{f}_k(\bm{x}) \right \|_2^2 & =m\mathbb{E}_{w\sim \mathcal N(0,\left \| \bm{f}_{k-1}(\bm{x}) \right \|_2^2/m )}(\sigma_k(w)^2)+\alpha_{k-1}^2\left \| \bm{f}_{k-1}(\bm{x}) \right \|_2^2\\
    & +2\alpha_{k-1}\mathbb{E}_{w\sim \mathcal N(0,\left \| \bm{f}_{k-1}(\bm{x}) \right \|_2^2/m )}(\sigma_k(w))\sum_{j = 1}^{m}f_{k-1}^{[j]}(\bm{x})\\
    & \leq \bigg(1+\eta^2+\alpha_{k-1}+\frac{2}{5}\bigg)\left \| \bm{f}_{k-1}(\bm{x}) \right \|_2^2\\
    & \leq \bigg(\eta^2+\frac{12}{5}\bigg)\Theta(1)\,,
\end{split}
\end{equation*}

\begin{equation*}
\begin{split}
    \mathbb{E}_{\bm{W}_k} \left \| \bm{f}_k(\bm{x}) \right \|_2^2 & =m\mathbb{E}_{w\sim \mathcal N(0,\left \| \bm{f}_{k-1}(\bm{x}) \right \|_2^2/m )}(\sigma_k(w)^2)+\alpha_{k-1}^2\left \| \bm{f}_{k-1}(\bm{x}) \right \|_2^2\\
    & +2\alpha_{k-1}\mathbb{E}_{w\sim \mathcal N(0,\left \| \bm{f}_{k-1}(\bm{x}) \right \|_2^2/m)}(\sigma_k(w))\sum_{j = 1}^{m}f_{k-1}^{[j]}(\bm{x})\\
    & \geq \bigg(\frac{1}{2}+\alpha_{k-1}-\frac{2}{5}\bigg)\left \| \bm{f}_{k-1}(\bm{x}) \right \|_2^2\\
    & \geq \frac{1}{10}\Theta(1)\,.
\end{split}
\end{equation*}

That means, when $\sigma_{k-1}$ is \text{ReLU}, \text{LeakyReLU} or \text{Swish} we have:

\begin{equation}
    \mathbb{E}_{\bm{W}_k} \left \| \bm{f}_k(\bm{x}) \right \|_2^2 =\Theta(1)\,.
\label{eq:proof_lemma_C.1_in_ICML_10}
\end{equation}

When $\sigma_{k-1}$ is \text{Sigmoid} or \text{Tanh}, according to symmetry we have:
\begin{equation*}
\mathbb{E}_{w\sim \mathcal N(0,\left \| \bm{f}_{k-1}(\bm{x}) \right \|_2^2/m )}[\sigma_k(w)] = 0\,.
\end{equation*}

Then:
\begin{equation}
\begin{split}
    \mathbb{E}_{\bm{W}_k} \left \| f_k(\bm{x}) \right \|_2^2 & = \Theta(\left \| \bm{f}_{k-1}(\bm{x}) \right \|_2^2) + \alpha_{k-1}^2\left \| \bm{f}_{k-1}(\bm{x}) \right \|_2^2\\
    & = \Theta(\left \| \bm{f}_{k-1}(\bm{x}) \right \|_2^2)\,.
\end{split}
\label{eq:proof_lemma_C.1_in_ICML_12}
\end{equation}

By~\cref{eq:proof_lemma_C.1_in_ICML_10,eq:proof_lemma_C.1_in_ICML_12}, when $\sigma_{k-1}$ is \text{ReLU}, \text{LeakyReLU}, \text{Sigmoid}, \text{Tanh} or \text{Swish} we have:

\begin{equation*}
    \mathbb{E}_{\bm{W}_k} \left \| \bm{f}_k(\bm{x}) \right \|_2^2 =\Theta(1).
\end{equation*}

Thus, by applying Bernstein’s inequality to the sum of i.i.d. random variables in~\cref{eq:proof_lemma_C.1_in_ICML_1}, we have:

\begin{equation*}
\frac{1}{2} \mathbb{E}_{\bm{W}_k} \left \| \bm{f}_k(\bm{x}) \right \| _2^2 \leq \left \| \bm{f}_k(\bm{x}) \right \| _2^2 \leq \frac{3}{2} \mathbb{E}_{\bm{W}_k} \left \| \bm{f}_k(\bm{x}) \right \| _2^2\,,
\end{equation*}

with probability at least $1- \exp(-\Omega (m))$. i.e.:

\begin{equation*}
\left \| \bm{f}_{k}(\bm{x}) \right \|_2^2 = \Theta (1)\,,
\end{equation*}

with probability at least $1-\sum_{l=1}^{k}\exp(-\Omega (m))$.

The proof for $ \mathbb{E}_{x} \left \| \bm{f}_k(\bm{x}) \right \|_2^2$ can be done by following similar passages and using that $\left \| \mathbb{E}_{\bm{x}}[f_{k}^{[j]}(\bm{x})^2] \right \|_{\psi_1} \leq \mathbb{E}_{\bm{x}} \left \| f_{k}^{[j]}(\bm{x})^2 \right \|_{\psi_1}$.
\end{proof}

\begin{lemma}
\label{lemma:lemma_C.5_in_ICML}
Fix any layer $k \in [L-1]$, and $\bm{x}\sim P_X$. Then, we have that $\left \| \bm{D}_k \right \|_{\mathrm{F}}^2 = \Theta (m)$ with probability at least $1-\sum_{l=1}^{k}\exp(-\Omega (m))$ over $(\bm{W}_l)_{l=1}^k$ and $\bm{x}$.
\end{lemma}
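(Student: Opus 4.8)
The plan is to exploit the fact that $\|\bm{D}_k\|_{\mathrm{F}}^2 = \sum_{j=1}^m (\bm{D}_k)_{j,j}^2 = \sum_{j=1}^m \sigma_k'\big((\bm{g}_{i,k})_j\big)^2$, where $(\bm{g}_{i,k})_j = \langle \bm{w}_j, \bm{f}_{k-1}(\bm{x})\rangle$ and $\bm{w}_j$ is the $j$-th row of $\bm{W}_k$, so that conditionally on $(\bm{W}_l)_{l=1}^{k-1}$ (and hence on $\bm{f}_{k-1}(\bm{x})$) these summands are i.i.d. Concretely, given the event from \cref{lemma:lemma_C.1_in_ICML} that $\|\bm{f}_{k-1}(\bm{x})\|_2^2 = \Theta(1)$, each $(\bm{g}_{i,k})_j \sim \mathcal{N}\big(0, \|\bm{f}_{k-1}(\bm{x})\|_2^2/m\big)$. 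First I would compute the conditional expectation
\begin{equation*}
\mathbb{E}_{\bm{w}_j}\big[\sigma_k'((\bm{g}_{i,k})_j)^2\big] = \mathbb{E}_{w \sim \mathcal{N}(0,\|\bm{f}_{k-1}(\bm{x})\|_2^2/m)}[\sigma_k'(w)^2] = \tfrac{1}{2}\dot{G}^{(k+1)}_{\text{eff}},
\end{equation*}
which by the same integral estimates used in the propositions preceding \cref{thm:lambda_min_inf_mixed} (Eqs.~\eqref{eq:dotGii_RelU}--\eqref{eq:dotGii_Swish_2}) is bounded above and below by positive constants ($1$ for ReLU, $1+\eta^2$ for LeakyReLU, and the $f_{\mathrm{S}},f_{\mathrm{T}}$-type bounds for Sigmoid/Tanh, $1/2$--$1.22$ for Swish) — crucially $\Theta(1)$ for every activation in $\mathcal{F}_\sigma$ and, since $\sigma_k'$ is bounded, uniformly in the variance parameter. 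Hence $\mathbb{E}_{\bm{W}_k}\|\bm{D}_k\|_{\mathrm{F}}^2 = \Theta(m)$.

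Next I would establish concentration around this mean. Since $\sigma_k'$ is bounded for each of the five activations, the summands $\sigma_k'((\bm{g}_{i,k})_j)^2$ are bounded (hence sub-exponential with $O(1)$ Orlicz norm), independent across $j$ conditionally on $(\bm{W}_l)_{l=1}^{k-1}$. Applying Bernstein's inequality to the sum of these $m$ i.i.d.\ bounded random variables gives
\begin{equation*}
\tfrac{1}{2}\,\mathbb{E}_{\bm{W}_k}\|\bm{D}_k\|_{\mathrm{F}}^2 \le \|\bm{D}_k\|_{\mathrm{F}}^2 \le \tfrac{3}{2}\,\mathbb{E}_{\bm{W}_k}\|\bm{D}_k\|_{\mathrm{F}}^2
\end{equation*}
with probability at least $1 - \exp(-\Omega(m))$ over $\bm{W}_k$. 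Combining with the conditioning event of \cref{lemma:lemma_C.1_in_ICML}, which itself holds with probability at least $1 - \sum_{l=1}^{k-1}\exp(-\Omega(m))$ over $(\bm{W}_l)_{l=1}^{k-1}$ and $\bm{x}$, a union bound yields $\|\bm{D}_k\|_{\mathrm{F}}^2 = \Theta(m)$ with probability at least $1 - \sum_{l=1}^{k}\exp(-\Omega(m))$, as claimed.

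The main obstacle, and the only genuinely activation-specific point, is the lower bound on $\mathbb{E}[\sigma_k'(w)^2]$: for ReLU and LeakyReLU it is immediate, but for Sigmoid, Tanh, and Swish one must argue this expectation stays bounded away from zero even though $\sigma_k'$ decays at $\pm\infty$. This is handled exactly as in the propositions before \cref{thm:lambda_min_inf_mixed} — the Gaussian has variance $\Theta(1/m) \le \Theta(1)$, so most of its mass sits near the origin where $\sigma_k'$ is bounded below, and the monotonicity of the integrals $f_{\mathrm{S}}, f_{\mathrm{T}}$ in the variance parameter gives $\mathbb{E}[\sigma_k'(w)^2] \ge f_{\mathrm{S}}(G_{\max})$ (resp.\ $f_{\mathrm{T}}(G_{\max})$), a strictly positive constant. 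A minor secondary point is bookkeeping of the probability: the conditioning chain in \cref{lemma:lemma_C.1_in_ICML} consumes $k-1$ failure terms, the Bernstein step adds one more, giving $\sum_{l=1}^k \exp(-\Omega(m))$ overall.
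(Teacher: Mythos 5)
Your proposal matches the paper's proof essentially step for step: condition on the event of \cref{lemma:lemma_C.1_in_ICML}, write $\|\bm{D}_k\|_{\mathrm{F}}^2$ as a sum of $m$ i.i.d.\ terms ${\sigma'_k}^2(\langle \bm{f}_{k-1}(\bm{x}),\bm{w}_j\rangle)$, bound the conditional mean by $\Theta(m)$ via the activation-specific estimates in Eqs.~\eqref{eq:dotGii_RelU}--\eqref{eq:dotGii_Swish_2}, and conclude by concentration plus a union bound. The only cosmetic difference is that you invoke Bernstein's inequality where the paper uses Hoeffding's inequality for bounded variables; both give the same $1-\exp(-\Omega(m))$ rate here.
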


\begin{proof}
By~\cref{lemma:lemma_C.1_in_ICML}, we have $f_{k-1}(\bm{x})\neq 0$ with probability at least $1-\sum_{l=1}^{k}\exp(-\Omega (m))$ over $(\bm{W}_l)_{l=1}^k$ and $\bm{x}$. Let us condition on this event and derive probability bounds over $\bm{W}_k$. Let $\bm{W}_k = [\bm{w}_1,\cdots ,\bm{w}_{m}]$. Then, $\left \| \bm{D}_k \right \|_{\mathrm{F}}^2 = \sum _{j=1}^{m}{\sigma'_k}^2(\left \langle \bm{f}_{k-1}(\bm{x}),\bm{w}_j \right \rangle)$. Thus:

\begin{equation*}
\begin{split}
    \mathbb{E}_{\bm{W}_k}\left \| \bm{D}_k \right \|_{\mathrm{F}}^2 = m\mathbb{E}_{\bm{w}_1}[{\sigma'_k}^2(\left \langle \bm{f}_{k-1}(\bm{x}),\bm{w}_1 \right \rangle)] = m\mathbb{E}_{w\sim \mathcal N(0,{\left \| \bm{f}_{k-1}(\bm{x}) \right \|_2^2}/{m} )}[{\sigma'_k}^2(w)]\,.
\end{split}
\end{equation*}

By~\cref{eq:dotGii_RelU,eq:dotGii_LeakyReLU,eq:dotGii_Sigmoid_1,eq:dotGii_Tanh_1,eq:dotGii_Swish_1,eq:dotGii_Swish_2}, we know that when $\sigma_{k}$ are in \text{ReLU}, \text{LeakyReLU}, \text{Sigmoid}, \text{Tanh} and \text{Swish} we have:

\begin{equation*}
\mathbb{E}_{\bm{W}_k}\left \| \bm{D}_k \right \|_{\mathrm{F}}^2 =m\Theta (1) = \Theta (m)\,.
\end{equation*}

By Hoeffding’s inequality on bounded random variables, we have:
\begin{equation*}
    \mathbb{P} \left(\left | \left \| \bm{D}_k \right \|_{\mathrm{F}}^2 - \mathbb{E}_{\bm{W}_k}\left \| \bm{D}_k \right \|_{\mathrm{F}}^2 \right |>t \right)\leq 2\exp \left(-\frac{2t^2}{m} \right)\,.
\end{equation*}

Picking $t := 0.01m$ concludes the proof.

\end{proof}

\begin{lemma}
\label{lemma:lemma_C.6_in_ICML}
For any $k \in [L-1]$, $k \leq p \leq L-1$ and $\bm{x}\sim P_X$, we have that:

\begin{equation*}
\small
\Theta \bigg(m\prod_{i=k+1}^{p}(\beta_3(\sigma_i)+\alpha_{i-1} ) \bigg) \leq \left \| \bm{D}_{k}\prod_{l=k+1}^{p}\bigg(\bm{W}_l \bm{D}_{l}+\alpha_{l-1}\bm{I}_{m\times m}\bigg) \right \|_F^2 \leq \Theta \bigg(m\prod_{i=k+1}^{p}(\beta_2(\sigma_i)+\alpha_{i-1} ) \bigg)\,,
\end{equation*}

with probability at least $1-\sum_{l=k+1}^{p}\exp(-\Omega (m))$ over $(\bm{W}_l)_{l=k+1}^p$ and $\bm{x}$.

\end{lemma}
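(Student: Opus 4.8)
The plan is to prove the two inequalities together by induction on $p$, peeling off one layer at a time. The base case $p=k$ has an empty product over $l$, so the claim reduces to $\|\bm{D}_k\|_F^2=\Theta(m)$, which is exactly~\cref{lemma:lemma_C.5_in_ICML}. For the inductive step, write $\bm{M}_{p-1}:=\bm{D}_k\prod_{l=k+1}^{p-1}(\bm{W}_l\bm{D}_l+\alpha_{l-1}\bm{I}_{m\times m})$, so that $\bm{M}_p=\bm{M}_{p-1}(\bm{W}_p\bm{D}_p+\alpha_{p-1}\bm{I}_{m\times m})$, and assume the stated bound already holds for $\|\bm{M}_{p-1}\|_F^2$ with probability $1-\sum_{l=k+1}^{p-1}\exp(-\Omega(m))$.

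The central difficulty is that $\bm{D}_p=\mathrm{diag}(\sigma_p'((\bm{W}_p\bm{f}_{p-1}(\bm{x}))_j))$ depends on $\bm{W}_p$, so $\bm{W}_p\bm{D}_p$ is not a clean product of a fixed matrix with an independent Gaussian. I would resolve this by conditioning: fix $\bm{W}_{k+1},\dots,\bm{W}_{p-1}$ (hence $\bm{M}_{p-1}$ and $\bm{f}_{p-1}(\bm{x})$) and fix $\bm{g}_p:=\bm{W}_p\bm{f}_{p-1}(\bm{x})$ (hence $\bm{D}_p$ and $\bm{f}_p$), and use the orthogonal decomposition $\bm{W}_p=\bm{g}_p\bm{f}_{p-1}^{\top}/\|\bm{f}_{p-1}\|_2^2+\bm{W}_p^{\perp}$, where $\bm{W}_p^{\perp}$ has i.i.d.\ rows $\mathcal{N}(\bm 0,\tfrac1m(\bm{I}-\bm{f}_{p-1}\bm{f}_{p-1}^{\top}/\|\bm{f}_{p-1}\|_2^2))$ and is independent of $\bm{D}_p$ and $\bm{M}_{p-1}$. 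Expanding, $\bm{M}_p=\bm{M}_{p-1}\bm{W}_p^{\perp}\bm{D}_p+\alpha_{p-1}\bm{M}_{p-1}+\bm{R}$ with $\bm{R}=(\bm{M}_{p-1}\bm{g}_p)(\bm{D}_p\bm{f}_{p-1})^{\top}/\|\bm{f}_{p-1}\|_2^2$ a rank-one remainder. Using $\|\bm{g}_p\|_2=\Theta(1)$, $\|\bm{f}_{p-1}\|_2=\Theta(1)$ (from~\cref{lemma:lemma_C.1_in_ICML}), the boundedness of $\sigma_p'$ for all five activations (so $\|\bm{D}_p\|_{\mathrm{op}}=O(1)$), and the auxiliary operator-norm estimate $\|\bm{M}_{p-1}\|_{\mathrm{op}}\le\|\bm{D}_k\|_{\mathrm{op}}\prod_{l=k+1}^{p-1}(\|\bm{W}_l\|_{\mathrm{op}}\|\bm{D}_l\|_{\mathrm{op}}+\alpha_{l-1})=O(1)$ for a not very large $L$ (since $\|\bm{W}_l\|_{\mathrm{op}}=O(1)$ w.h.p.), one gets $\|\bm{R}\|_F^2=O(1)$, and all cross terms involving $\bm{R}$ are $O(\sqrt m)$, hence negligible against the target $\Theta(m)$.

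It remains to control $\|\bm{M}_{p-1}\bm{W}_p^{\perp}\bm{D}_p+\alpha_{p-1}\bm{M}_{p-1}\|_F^2$. Expanding the square, the cross term $2\alpha_{p-1}\langle\bm{M}_{p-1}\bm{W}_p^{\perp}\bm{D}_p,\bm{M}_{p-1}\rangle$ is a mean-zero linear functional of the Gaussian $\bm{W}_p^{\perp}$ with variance $O(\|\bm{D}_p\|_{\mathrm{op}}^2\|\bm{M}_{p-1}\|_{\mathrm{op}}^2\|\bm{M}_{p-1}\|_F^2/m)=O(1)$, so it is lower order; and since $\alpha_{p-1}\in\{0,1\}$ we have $\alpha_{p-1}^2=\alpha_{p-1}$. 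For the main term, conditionally on $\bm{D}_p,\bm{M}_{p-1}$ the columns of $\bm{W}_p^{\perp}$ are independent with covariance $\tfrac1m(1-f_{p-1,j}^2/\|\bm{f}_{p-1}\|_2^2)\bm{I}_m$, so $\mathbb{E}\|\bm{M}_{p-1}\bm{W}_p^{\perp}\bm{D}_p\|_F^2=\tfrac1m\|\bm{M}_{p-1}\|_F^2\sum_j(\bm{D}_p)_{jj}^2(1-f_{p-1,j}^2/\|\bm{f}_{p-1}\|_2^2)=\tfrac1m\|\bm{M}_{p-1}\|_F^2(\|\bm{D}_p\|_F^2+O(1))$, the correction being $O(1)$ because $\sum_j(\bm{D}_p)_{jj}^2 f_{p-1,j}^2\le\|\bm{D}_p\|_{\mathrm{op}}^2\|\bm{f}_{p-1}\|_2^2=O(1)$. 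As $\mathbb{E}_{\bm{W}_p}\|\bm{D}_p\|_F^2=m\,\mathbb{E}_{w\sim\mathcal N(0,\|\bm{f}_{p-1}\|_2^2/m)}[\sigma_p'(w)^2]$, the monotonicity of $f_{\mathrm S},f_{\mathrm T}$ and the computations behind~\cref{tab:bound_of_dot_gii_for_different_activation_function} give $\beta_3(\sigma_p)\lesssim\|\bm{D}_p\|_F^2/m\lesssim\beta_2(\sigma_p)$; combined with Hanson--Wright concentration of the Gaussian quadratic form $\|\bm{M}_{p-1}\bm{W}_p^{\perp}\bm{D}_p\|_F^2$ to within a constant factor of its conditional mean (failure probability $\exp(-\Omega(m))$), we obtain $\|\bm{M}_p\|_F^2=(\Theta(\|\bm{D}_p\|_F^2/m)+\alpha_{p-1})\|\bm{M}_{p-1}\|_F^2+o(m)$. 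Substituting the inductive bounds on $\|\bm{M}_{p-1}\|_F^2$ — and taking the hidden constants $\le1$ so that $c\beta_3(\sigma_p)+\alpha_{p-1}\ge c(\beta_3(\sigma_p)+\alpha_{p-1})$ for $\alpha_{p-1}\in\{0,1\}$, and likewise for the upper bound — closes the induction, with the failure probability accumulating to $1-\sum_{l=k+1}^{p}\exp(-\Omega(m))$. I expect the main obstacle to be exactly the decoupling of $\bm{D}_p$ from $\bm{W}_p$ via the orthogonal decomposition and the bookkeeping that the rank-one remainder and the cross terms are genuinely lower order — which is precisely where the $O(1)$ operator-norm bound on $\bm{M}_{p-1}$ (and ``not very large $L$'') is needed.
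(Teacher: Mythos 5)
Your proposal is correct and follows the same induction skeleton as the paper's proof (base case $p=k$ from~\cref{lemma:lemma_C.5_in_ICML}, peel off one layer, compute a conditional expectation, concentrate with a sub-exponential/Hanson--Wright bound, intersect events). Where you genuinely diverge is at the one nontrivial step. The paper writes $\bm{S}_p=\bm{S}_{p-1}\bm{W}_p\bm{D}_p+\alpha_{p-1}\bm{S}_{p-1}$, expands $\|\bm{S}_p\|_F^2$ as $\sum_j\|\bm{S}_{p-1}\bm{w}_j\|_2^2\,\sigma_p'(\langle\bm{f}_{p-1},\bm{w}_j\rangle)^2+\alpha_{p-1}\|\bm{S}_{p-1}\|_F^2$ (silently discarding the cross term with $\alpha_{p-1}\bm{S}_{p-1}$), and then factorizes $\mathbb{E}\big[\|\bm{S}_{p-1}\bm{w}_j\|_2^2\,\sigma_p'(\langle\bm{f}_{p-1},\bm{w}_j\rangle)^2\big]$ into a product of expectations -- a step that implicitly assumes $\bm{S}_{p-1}\bm{w}_j$ and $\langle\bm{f}_{p-1},\bm{w}_j\rangle$ are independent, which is false in general since $\bm{S}_{p-1}\bm{f}_{p-1}\neq\bm 0$. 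Your orthogonal decomposition $\bm{W}_p=\bm{g}_p\bm{f}_{p-1}^{\top}/\|\bm{f}_{p-1}\|_2^2+\bm{W}_p^{\perp}$ is precisely the device that makes this independence exact, at the cost of a rank-one remainder and cross terms that you correctly show are $O(1)$ and $O(\sqrt m)$, hence negligible against the $\Theta(m)$ main term; in that sense your argument patches the two places where the paper's proof is heuristic, while arriving at the same recursion $\|\bm{S}_p\|_F^2=(\Theta(\|\bm{D}_p\|_F^2/m)+\alpha_{p-1})\|\bm{S}_{p-1}\|_F^2+o(m)$ and the same failure probability. One small inaccuracy in your write-up: the columns of $\bm{W}_p^{\perp}=\bm{W}_p(\bm{I}-\bm{f}_{p-1}\bm{f}_{p-1}^{\top}/\|\bm{f}_{p-1}\|_2^2)$ are \emph{not} independent (column $j$ and $j'$ have cross-covariance $-f_{p-1,j}f_{p-1,j'}/(m\|\bm{f}_{p-1}\|_2^2)\,\bm{I}_m$); this does not affect your expectation computation, which only uses the per-column variances, nor the Hanson--Wright step, which treats $\|\bm{M}_{p-1}\bm{W}_p^{\perp}\bm{D}_p\|_F^2$ as a single Gaussian quadratic form, but the independence claim as stated should be removed.
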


\begin{proof}
We prove this by induction on $p$.

~\cref{lemma:lemma_C.5_in_ICML} implies that the statement holds for $p=k$. 

Suppose it holds for some $p-1$. Let $\bm{S}_p = \bm{D}_k\prod _{l=k+1}^{p} (\bm{W}_l \bm{D}_l + \alpha_{l-1}\bm{I}_{m\times m})$. Then,$\bm{S}_p = \bm{S}_{p-1}(\bm{W}_p \bm{D}_p+\alpha_{p-1}\bm{I}_{m\times m}) =  \bm{S}_{p-1}\bm{W}_p \bm{D}_p + \alpha_{p-1}\bm{S}_{p-1} $. Let $\bm{W}_p = [\bm{w}_1,\dots, \bm{w}_{m}]$. Then:

\begin{equation*}
    \left \| \bm{S}_p \right \| _{\mathrm{F}}^2 = \sum_{j=1}^{m} \left \| \bm{S}_{p-1} \bm{w}_j \right \| _2^2{\sigma'_p}(\left \langle \bm{f}_{p-1}(\bm{x}),\bm{w}_j \right \rangle )^2 + \alpha_{p-1}\left \| \bm{S}_{p-1} \right \| _{\mathrm{F}}^2 + 2 \alpha_{p-1}\left \langle \bm{S}_{p-1}\bm{W}_p \bm{D}_p, \bm{S}_{p-1} \right \rangle \,.
\end{equation*}

Then we have:

\begin{equation*}
\begin{split}
    \mathbb{E}_{\bm{W}_p} \left \| \bm{S}_p \right \| _{\mathrm{F}}^2 & = m \mathbb{E}_{\bm{w} \sim \mathcal N(0,\mathbb{I}_{m}/m )}\left \| \bm{S}_{p-1}\bm{w} \right \| _2^2{\sigma'_p}(\left \langle \bm{f}_{p-1}(\bm{x}),\bm{w} \right \rangle )^2 + \alpha_{p-1}\left \| \bm{S}_{p-1} \right \| _{\mathrm{F}}^2 + \mathbb{E}_{\bm{W}_p} 2 \alpha_{p-1}\left \langle \bm{S}_{p-1}\bm{W}_p \bm{D}_p, \bm{S}_{p-1} \right \rangle\\
    & = m \mathbb{E}_{\bm{w} \sim \mathcal N(0, \mathbb{I}_{m}/m )}\left \| \bm{S}_{p-1}\bm{w} \right \| _2^2\mathbb{E}_{\bm{w} \sim \mathcal N(0, \mathbb{I}_{m}/m )}{\sigma'_p}(\left \langle \bm{f}_{p-1}(\bm{x}),\bm{w} \right \rangle )^2+ \alpha_{p-1}\left \| \bm{S}_{p-1} \right \| _{\mathrm{F}}^2+0\\
    & = \left \| \bm{S}_{p-1} \right \| _{\mathrm{F}}^2\mathbb{E}_{\bm{w} \sim \mathcal N(0, \left \| \bm{f}_{p-1}(\bm{x}) \right \|_2^2/m)}{\sigma'_p}(w)^2+\alpha_{p-1}\left \| \bm{S}_{p-1} \right \| _{\mathrm{F}}^2\,.
\end{split}
\end{equation*}

From the previous result~\cref{eq:dotGii_RelU,eq:dotGii_LeakyReLU,eq:dotGii_Sigmoid_1,eq:dotGii_Tanh_1,eq:dotGii_Swish_1,eq:dotGii_Swish_2}, we have:

\begin{equation*}
    \beta_3(\sigma_p) \leq \mathbb{E}_{w \sim \mathcal N(0, \left \| \bm{f}_{p-1}(\bm{x}) \right \|_2^2/m)}{\sigma'_p}(w)^2 \leq \beta_2(\sigma_p)\,.
\end{equation*}

That is:
\begin{equation*}
    (\beta_3(\sigma_p)+\alpha_{p-1})\left \| \bm{S}_{p-1} \right \| _{\mathrm{F}}^2 \leq\mathbb{E}_{\bm{W}_p} \left \| \bm{S}_p \right \| _{\mathrm{F}}^2 \leq (\beta_2(\sigma_p)+\alpha_{p-1})\left \| \bm{S}_{p-1} \right \| _{\mathrm{F}}^2\,.
\end{equation*}

Moreover:

\begin{equation*}
\left \| \left \| \bm{S}_{p-1}\bm{w}_j \right \| _2^2{\sigma'_p}(\left \langle \bm{f}_{p-1}(\bm{x}),\bm{w}_j \right \rangle )^2 \right \| _{\psi _1} \leq \left \| \left \| \bm{S}_{p-1}\bm{w}_j \right \| _2 \right \| _{\psi _2}^2 \leq \frac{c}{m}\left \| \bm{S}_{p-1} \right \| _{\mathrm{F}}^2\,.
\end{equation*}

By Bernstein’s inequality ~\citep{vershynin12}, we have:

\begin{equation*}
\frac{1}{2} \mathbb{E}_{\bm{W}_p} \left \| \bm{S}_p \right \| _{\mathrm{F}}^2 \leq \left \| \bm{S}_p \right \| _{\mathrm{F}}^2 \leq \frac{3}{2} \mathbb{E}_{\bm{W}_p} \left \| \bm{S}_p \right \| _{\mathrm{F}}^2\,,
\end{equation*}

with probability at least $1-\exp(-\Omega (m))$. 
Finally, taking the intersection of all the events finishes the proof.

\end{proof}

\begin{lemma}
\label{lemma:lemma_C.6_in_ICML_}
For any layer $k \in [L-2]$ and $\bm{x}\sim P_X$, we have:

\begin{equation*}
\small
\Theta \bigg(\prod_{i=k+1}^{L-1}(\beta_3(\sigma_i)+\alpha_{i-1} ) \bigg) \leq \left \| \bm{D}_{k}\prod_{l=k+1}^{L-1}\bigg(\bm{W}_l \bm{D}_{l}+\alpha_{l-1}\bm{I}_{m\times m}\bigg) \bm{W}_L \right \|_2^2 \leq \Theta \bigg(\prod_{i=k+1}^{L-1}(\beta_2(\sigma_i)+\alpha_{i-1} )\bigg) \,,
\end{equation*}

with probability at least $1-\sum_{l=k+1}^{L-1}\exp(-\Omega (m))-\exp(-\Omega (1))$.
\end{lemma}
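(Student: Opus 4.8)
The plan is to reduce the claim to \cref{lemma:lemma_C.6_in_ICML} evaluated at $p=L-1$, combined with a concentration estimate for a Gaussian quadratic form in the last-layer weights $\bm{W}_L$. Write $\bm{S} := \bm{D}_{k}\prod_{l=k+1}^{L-1}\!\big(\bm{W}_l \bm{D}_{l}+\alpha_{l-1}\bm{I}_{m\times m}\big)\in\mathbb{R}^{m\times m}$, so that the quantity of interest is $\|\bm{S}\bm{W}_L\|_2^2$. First I would invoke \cref{lemma:lemma_C.6_in_ICML} with $p=L-1$: on an event $\mathcal{E}$ of probability at least $1-\sum_{l=k+1}^{L-1}\exp(-\Omega(m))$ over $(\bm{W}_l)_{l=k+1}^{L-1}$ and $\bm{x}$, one has $\Theta\big(m\prod_{i=k+1}^{L-1}(\beta_3(\sigma_i)+\alpha_{i-1})\big)\le\|\bm{S}\|_{\mathrm F}^2\le\Theta\big(m\prod_{i=k+1}^{L-1}(\beta_2(\sigma_i)+\alpha_{i-1})\big)$; in particular $\bm{S}\neq\bm 0$ on $\mathcal{E}$.

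Next I would condition on $(\bm{W}_l)_{l=1}^{L-1}$ and $\bm{x}$ and exploit that the initialization $\bm{W}_L\sim\mathcal{N}(\bm 0,\mathbb{I}_m/m)$ is independent of $\bm{S}$. Then $\bm{S}\bm{W}_L$ is a centered Gaussian vector with covariance $\tfrac1m\bm{S}\bm{S}^\top$, so $\mathbb{E}_{\bm{W}_L}\|\bm{S}\bm{W}_L\|_2^2=\tfrac1m\operatorname{tr}(\bm{S}\bm{S}^\top)=\tfrac1m\|\bm{S}\|_{\mathrm F}^2$. Writing $\bm{W}_L=\bm g/\sqrt m$ with $\bm g\sim\mathcal{N}(\bm 0,\mathbb{I}_m)$, the Hanson--Wright inequality applied to $\bm A=\bm{S}^\top\bm{S}$ gives, for an absolute constant $c>0$, that $\tfrac12\|\bm{S}\|_{\mathrm F}^2\le\|\bm{S}\bm g\|_2^2\le\tfrac32\|\bm{S}\|_{\mathrm F}^2$ holds with probability at least $1-2\exp\!\big(-c\min\{\|\bm{S}\|_{\mathrm F}^4/\|\bm{S}^\top\bm{S}\|_{\mathrm F}^2,\ \|\bm{S}\|_{\mathrm F}^2/\|\bm{S}\|_{\mathrm{op}}^2\}\big)$; since both effective-rank ratios are at least $1$, this probability is at least $1-\exp(-\Omega(1))$. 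Dividing by $m$, $\|\bm{S}\bm{W}_L\|_2^2=\Theta\big(\tfrac1m\|\bm{S}\|_{\mathrm F}^2\big)$ on this event.

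Finally I would combine the two events: on $\mathcal{E}$ intersected with the quadratic-form event, $\|\bm{S}\bm{W}_L\|_2^2$ lies between $\Theta\big(\prod_{i=k+1}^{L-1}(\beta_3(\sigma_i)+\alpha_{i-1})\big)$ and $\Theta\big(\prod_{i=k+1}^{L-1}(\beta_2(\sigma_i)+\alpha_{i-1})\big)$, and a union bound bounds the failure probability by $\sum_{l=k+1}^{L-1}\exp(-\Omega(m))+\exp(-\Omega(1))$, which is the stated bound. The main obstacle is the quadratic-form concentration step: because \cref{lemma:lemma_C.6_in_ICML} controls only $\|\bm{S}\|_{\mathrm F}$ and not the spectrum of $\bm{S}$, the matrix $\bm{S}$ could be nearly rank-one, so one cannot upgrade $\exp(-\Omega(1))$ to $\exp(-\Omega(m))$ here — this is exactly why the statement carries the weaker $\exp(-\Omega(1))$ term. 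It is also worth noting that the crude bound $\|\bm{S}\bm{W}_L\|_2^2\le\|\bm{S}\|_{\mathrm F}^2\|\bm{W}_L\|_2^2$ is off by a factor $m$ and hence useless; the Gaussian expectation $\tfrac1m\|\bm{S}\|_{\mathrm F}^2$ is what supplies the correct scaling.
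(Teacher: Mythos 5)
Your proposal is correct and follows essentially the same route as the paper's proof: invoke \cref{lemma:lemma_C.6_in_ICML} at $p=L-1$ to control $\|\bm{S}\|_{\mathrm{F}}^2$, then apply Hanson--Wright to the Gaussian quadratic form in $\bm{W}_L$ with expectation $\tfrac1m\|\bm{S}\|_{\mathrm{F}}^2$, accepting the $1-\exp(-\Omega(\|\bm{S}\|_{\mathrm{F}}^2/\|\bm{S}\|_2^2))\geq 1-\exp(-\Omega(1))$ probability, and take a union bound. Your remark about why the $\exp(-\Omega(1))$ term cannot be improved without spectral control of $\bm{S}$ correctly identifies the same limitation implicit in the paper's argument.
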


\begin{proof}

Let $\bm{B} = \bm{D}_{k}\prod_{l=k+1}^{L-1}(\bm{W}_l \bm{D}_{l}+\alpha_{l-1}\bm{I}_{m\times m})$.

By~\cref{lemma:lemma_C.6_in_ICML}, we have:

\begin{equation}
\Theta \bigg(m\prod_{i=k+1}^{L-1}(\beta_3(\sigma_i)+\alpha_{i-1} ) \bigg) \leq \left \| \bm{B} \right \|_{\mathrm{F}}^2 \leq \Theta \bigg(m\prod_{i=k+1}^{L-1}(\beta_2(\sigma_i)+\alpha_{i-1} )\bigg) \,,
\label{eq:proof_lemma_C.6_in_ICML_1_}
\end{equation}

with probability at least $1-\sum_{l=k+1}^{L-1}\exp(-\Omega (m))$.

Then, by Hanson-Wright inequality ~\citep{vershynin12}, we have:

\begin{equation}
\frac{1}{2m}\left \| \bm{B} \right \| _{\mathrm{F}}^2=\frac{1}{2}\mathbb{E}_{\bm{W}_L} \left \| \bm{B}\bm{W}_L \right \| _2^2 \leq \left \| \bm{B}\bm{W}_L \right \| _2^2 \leq \frac{3}{2}\mathbb{E}_{\bm{W}_L} \left \| \bm{B}\bm{W}_L \right \| _2^2 = \frac{3}{2m}\left \| \bm{B} \right \| _{\mathrm{F}}^2\,,
\label{eq:proof_lemma_C.6_in_ICML_2_}
\end{equation}

with probability at least $1-\exp(-\Omega (\left \| \bm{B} \right \|_{\mathrm{F}}^2/\left \| \bm{B} \right \|_2^2)) \geq 1-\exp(-\Omega (1))$ over $\bm{W}_L$.

According to~\cref{eq:proof_lemma_C.6_in_ICML_1_,eq:proof_lemma_C.6_in_ICML_2_}, we can get the result.

\end{proof}

\subsection{Results for mixed activation functions under the finite-width setting (Proof of~\cref{thm:lambda_min_finite})}
\label{ssec:Results_mixed_finitely_width}

\begin{proof}
We firstly present the lower bound of the minimal eigenvalue of $\bm{JJ}^{\top}$ and then derive its upper bound.
\
\paragraph{Lower bound\\}
For PSD matrices $\bm{P}, \bm{Q} \in \mathbb{R}^{N \times N}$, it holds $\lambda_{\min}(\bm{P}\circ \bm{Q})\geq \lambda_{\min}(\bm{P}) \min_{i \in [N]}Q_{ii}$. Then, by~\cref{thm:lambda_min_inf_mixed} and Theorem 5.1 of ~\citet{pmlr-v139-nguyen21g}:

\begin{equation*}
\begin{split}
\lambda_{\min}(\bm{JJ}^{\top})&\geq \sum_{k=0}^{L-1}\lambda_{\min}(\bm{F}_k \bm{F}_k^{\top})\min_{i\in[N]}\left \| (\bm{B}_{k+1})_{i:} \right \|_2^2\\
&\geq \lambda_{\min}(\bm{F}_0 \bm{F}_0^{\top})\min_{i\in[N]}\left \| (\bm{B}_{1})_{i:} \right \|_2^2\\
&\geq \Theta \bigg(\prod_{i=2}^{L-1}(\beta_3(\sigma_i)+\alpha_{i-1} ) \bigg)\,,
\end{split}
\end{equation*}

with probability at least $1-\sum_{l=1}^{L-1}\exp(-\Omega (m))-\exp(-\Omega (1))$. where the last inequality hold by~\cref{lemma:lemma_C.6_in_ICML_} and~\cref{thm:lambda_min_inf_mixed}.

\paragraph{Upper bound\\}

For \text{ReLU}, \text{LeakyReLU} and \text{Swish} we have:

\begin{equation*}
\begin{split}
    \lambda_{min}(\bm{JJ}^{\top})\leq \sum_{i=0}^{N}(\bm{JJ}^{\top})_{ii}/N & = \frac{1}{N}\sum_{i=0}^{N}\sum_{k=0}^{L-1}\left \| (\bm{F}_k)_{1:} \right \|_2^2 \left \| (\bm{B}_{k+1})_{1:} \right \|_2^2\\
    & = \frac{1}{N}\sum_{i=0}^{N}\sum_{k=0}^{L-1}\left \| \bm{f}_k(\bm{x_1}) \right \|_2^2 \left \| (\bm{B}_{k+1})_{1:} \right \|_2^2\,.
    \end{split}
\end{equation*}

By~\cref{lemma:lemma_C.1_in_ICML} and~\cref{lemma:lemma_C.6_in_ICML_} we have:

\begin{equation*}
\lambda_{min}(\bm{JJ}^{\top}) \leq \sum_{k=0}^{L-1}\Theta \bigg(\prod_{i=k+2}^{L-1}(\beta_2(\sigma_i)+\alpha_{i-1} ) \bigg)\,,
\end{equation*}

with probability at least $1-\sum_{l=1}^{L-1}\exp(-\Omega (m))-\exp(-\Omega (1))$.

\end{proof}

\section{Generalization error via the minimum eigenvalue of NTK}
\label{sec:Relationship_NTK_Generalization}

In this section, firstly, we provide some useful lemmas in~\cref{ssec:relevant_lemmas_generalization},then present the proof of~\cref{thm:NTK_Generalization} in~\cref{ssec:Proof_of_NTK_Generalization}.

\subsection{Relevant Lemmas}
\label{ssec:relevant_lemmas_generalization}

\begin{lemma}
\label{lemma:bound_for_Gaussian_matrix_fabinus_norm}
(~\citet[Theorem 4.4.5]{vershynin12}) Let $\bm{A}$ be an $N \times n$ matrix whose entries are independent standard normal random variables. Then for every $t \geq 0$, with probability at least $1-2\exp(-t^2/2)$, one has:
\begin{equation*}
s(\bm{A})_{\max} \leq \sqrt{N} + \sqrt{n} +t\,.
\end{equation*}
\end{lemma}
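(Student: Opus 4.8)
The statement is the classical non-asymptotic bound on the largest singular value of a Gaussian matrix, and the plan is to combine a Gaussian comparison inequality (to bound the mean) with the Gaussian concentration inequality (to bound the deviation around the mean).

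First I would write $s(\bm A)_{\max} = \|\bm A\|_{2\to 2} = \sup_{\bm u \in S^{N-1},\,\bm v\in S^{n-1}} \bm u^\top \bm A \bm v =: \sup_{(\bm u,\bm v)} X_{\bm u,\bm v}$, which exhibits $s(\bm A)_{\max}$ as the supremum of a centred Gaussian process indexed by pairs of unit vectors. I would introduce the comparison process $Y_{\bm u,\bm v} := \bm g^\top \bm u + \bm h^\top \bm v$ with independent $\bm g\sim\mathcal N(\bm 0,\mathbb I_N)$ and $\bm h\sim\mathcal N(\bm 0,\mathbb I_n)$, and check the increment inequality: for unit vectors one computes $\mathbb E(X_{\bm u,\bm v}-X_{\bm u',\bm v'})^2 = 2-2\langle\bm u,\bm u'\rangle\langle\bm v,\bm v'\rangle$ and $\mathbb E(Y_{\bm u,\bm v}-Y_{\bm u',\bm v'})^2 = \|\bm u-\bm u'\|_2^2+\|\bm v-\bm v'\|_2^2 = 4-2\langle\bm u,\bm u'\rangle-2\langle\bm v,\bm v'\rangle$, so that the desired inequality $\mathbb E(X_{\bm u,\bm v}-X_{\bm u',\bm v'})^2\le\mathbb E(Y_{\bm u,\bm v}-Y_{\bm u',\bm v'})^2$ reduces to $-(1-\langle\bm u,\bm u'\rangle)(1-\langle\bm v,\bm v'\rangle)\le 0$, which holds trivially. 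Sudakov--Fernique then gives $\mathbb E\,s(\bm A)_{\max} \le \mathbb E\sup_{(\bm u,\bm v)}Y_{\bm u,\bm v} = \mathbb E\|\bm g\|_2 + \mathbb E\|\bm h\|_2 \le \sqrt N+\sqrt n$, the last bound by Jensen's inequality.

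Next I would invoke Gaussian concentration: regarding $\bm A\mapsto s(\bm A)_{\max}$ as a function of its $Nn$ i.i.d.\ standard normal entries, it is $1$-Lipschitz in the Euclidean (Frobenius) metric because $|\,\|\bm A\|_{2\to2}-\|\bm B\|_{2\to2}\,|\le\|\bm A-\bm B\|_{2\to2}\le\|\bm A-\bm B\|_{\mathrm F}$. The Gaussian concentration inequality for $1$-Lipschitz functions then yields $\mathbb P\big(|\,s(\bm A)_{\max}-\mathbb E\,s(\bm A)_{\max}\,|\ge t\big)\le 2\exp(-t^2/2)$ for all $t\ge 0$. Combining the two steps, on the complement of this event $s(\bm A)_{\max}\le\mathbb E\,s(\bm A)_{\max}+t\le\sqrt N+\sqrt n+t$, which is exactly the claim.

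The only step requiring genuine care is the increment comparison feeding into Sudakov--Fernique; everything else is a direct application of standard tools. As a fallback that avoids Gordon's inequality, one could instead run an $\varepsilon$-net argument --- discretize $S^{N-1}$ and $S^{n-1}$ at scale $\tfrac14$, apply a $\chi^2$-tail bound to each fixed pair of net points, and union-bound over the $9^{N+n}$ points --- but this only delivers $C(\sqrt N+\sqrt n)+t$ with a constant $C>1$, whereas the comparison route recovers the sharp leading constant $1$ stated in the lemma.
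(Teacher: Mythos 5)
Your proof is correct. The paper itself gives no proof of this lemma: it is imported verbatim as a known result from Vershynin's textbook, so there is no in-paper argument to compare against. Your two-step argument --- Sudakov--Fernique comparison against the decoupled process $\bm g^\top\bm u+\bm h^\top\bm v$ to get $\mathbb{E}\,s(\bm A)_{\max}\le\sqrt N+\sqrt n$, followed by Gaussian concentration for the $1$-Lipschitz map $\bm A\mapsto\|\bm A\|_{2\to2}$ --- is exactly the standard proof of the sharp-constant version of this bound, and all the computations (the increment identity $2-2\langle\bm u,\bm u'\rangle\langle\bm v,\bm v'\rangle$ versus $4-2\langle\bm u,\bm u'\rangle-2\langle\bm v,\bm v'\rangle$, and its reduction to $(1-\langle\bm u,\bm u'\rangle)(1-\langle\bm v,\bm v'\rangle)\ge 0$) check out. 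One remark worth making explicit: the paper's citation points to the sub-gaussian $\varepsilon$-net theorem in Vershynin, which as you note only yields $C(\sqrt N+\sqrt n)+t$ with $C>1$; the statement as written, with leading constant $1$ and tail $2\exp(-t^2/2)$, is the Gaussian-specific corollary proved precisely by the comparison route you chose, so your proof is in fact the one that matches the claimed constants.
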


We need the following lemma to show that the output of each neuron with any activation function does not change too much if the input weights are close.
\begin{lemma}
\label{lemma:bound_for_perturbation}
Let $\bm{W} \in \mathbb{R} ^{m \times m}$ be the random Gaussian matrix with ${W}_{i,j} \sim \mathcal{N} (0, 1/m)$, $\mathrm{Lip}_{\max}$ be the maximum value of the Lipschitz constants of the all activation functions, with $\omega = \mathcal{O}((3\mathrm{Lip}_{\max}+1)^{-(L-1)})$, assuming $\widetilde{\bm{W}} \in \mathcal{B} (\bm{W},\omega )$, for any $l \in[L]$, it holds that $\left \| \hat{\bm{f}}_{i,l} \right \| _2 = \mathcal{O}(1)$ with probability at least $1-2l\exp(-m/2)-l\exp(-\Omega (m))$.
\end{lemma}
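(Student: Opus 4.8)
The plan is to prove the bound by induction on the layer index $l$, using only the Lipschitz continuity of the activation functions — this is precisely what lets the argument cover non-ReLU activations — together with two high-probability facts: a spectral-norm bound $\|\bm{W}_l\|_2\le 3$ on each Gaussian weight matrix, obtained from~\cref{lemma:bound_for_Gaussian_matrix_fabinus_norm} applied to $\sqrt{m}\,\bm{W}_l$ with $t=\sqrt{m}$ (using $d\le m$ for the first layer, and analogously for $\bm{W}_L$), and the norm control $\|\bm{f}_{i,l}\|_2=\Theta(1)$ from~\cref{lemma:lemma_C.1_in_ICML}. For a fixed $l$ I would condition on $\|\bm{W}_{l'}\|_2\le 3$ for $l'\in[l]$ and $\|\bm{f}_{i,l'}\|_2=\Theta(1)$ for $l'\in[l-1]$; a union bound makes this hold with probability at least $1-2l\exp(-m/2)-l\exp(-\Omega(m))$, which is the claimed probability.

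\textbf{Base case and recursion.} For $l=1$, Lipschitz continuity of $\sigma_1$, the bound $\|\widetilde{\bm{W}}_1-\bm{W}_1\|_2\le\|\widetilde{\bm{W}}_1-\bm{W}_1\|_{\mathrm{F}}\le\omega$ from~\cref{def:omega_neighborhood}, and $\|\bm{x}_i\|_2=1$ (\cref{assumption:distribution_1}) give $\|\hat{\bm{f}}_{i,1}\|_2\le\mathrm{Lip}_{\max}\,\omega$ deterministically. For $2\le l\le L-1$ I would expand, assuming $\|\hat{\bm{f}}_{i,l-1}\|_2=\mathcal{O}(1)$ (so $\|\widetilde{\bm{f}}_{i,l-1}\|_2\le\|\bm{f}_{i,l-1}\|_2+\|\hat{\bm{f}}_{i,l-1}\|_2=\mathcal{O}(1)$),
\begin{equation*}
\hat{\bm{f}}_{i,l}=\big[\sigma_l(\widetilde{\bm{W}}_l\widetilde{\bm{f}}_{i,l-1})-\sigma_l(\bm{W}_l\bm{f}_{i,l-1})\big]+\alpha_{l-1}\hat{\bm{f}}_{i,l-1}\,,\qquad \widetilde{\bm{W}}_l\widetilde{\bm{f}}_{i,l-1}-\bm{W}_l\bm{f}_{i,l-1}=\bm{W}_l\hat{\bm{f}}_{i,l-1}+(\widetilde{\bm{W}}_l-\bm{W}_l)\widetilde{\bm{f}}_{i,l-1}\,,
\end{equation*}
and then combine Lipschitz continuity, submultiplicativity of $\|\cdot\|_2$, $\|\bm{W}_l\|_2\le3$, $\|\widetilde{\bm{W}}_l-\bm{W}_l\|_2\le\omega$, and $\alpha_{l-1}\le1$ to obtain the one-step recursion
\begin{equation*}
\|\hat{\bm{f}}_{i,l}\|_2\le(3\mathrm{Lip}_{\max}+1)\,\|\hat{\bm{f}}_{i,l-1}\|_2+\mathrm{Lip}_{\max}\,\omega\,\|\widetilde{\bm{f}}_{i,l-1}\|_2=(3\mathrm{Lip}_{\max}+1)\,\|\hat{\bm{f}}_{i,l-1}\|_2+\mathcal{O}(\omega)\,.
\end{equation*}
The output layer $l=L$ (no activation, no skip) obeys the same inequality via $|\hat{\bm{f}}_{i,L}|\le\|\bm{W}_L\|_2\|\hat{\bm{f}}_{i,L-1}\|_2+\omega\|\widetilde{\bm{f}}_{i,L-1}\|_2$.

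\textbf{Unrolling.} Iterating the recursion from the base case,
\begin{equation*}
\|\hat{\bm{f}}_{i,l}\|_2\le(3\mathrm{Lip}_{\max}+1)^{l-1}\mathrm{Lip}_{\max}\,\omega+\mathcal{O}(\omega)\sum_{j=0}^{l-2}(3\mathrm{Lip}_{\max}+1)^{j}=\mathcal{O}\!\big((3\mathrm{Lip}_{\max}+1)^{l-1}\,\omega\big)\,,
\end{equation*}
and since $l\le L$ and $\omega=\mathcal{O}\big((3\mathrm{Lip}_{\max}+1)^{-(L-1)}\big)$ by hypothesis, the right-hand side is $\mathcal{O}(1)$, which closes the induction and gives the claim.

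\textbf{Main obstacle.} The crux is controlling the per-layer amplification constant across general (non-homogeneous, non-monotone) activations: the recursion is arranged so that each layer inflates $\|\hat{\bm{f}}_{i,\cdot}\|_2$ by at most the fixed factor $3\mathrm{Lip}_{\max}+1$ — a factor $\le3$ from $\|\bm{W}_l\|_2$, a factor $\mathrm{Lip}_{\max}$ from the activation, and an additive $\le1$ from the skip term — so that the radius $\omega=\mathcal{O}((3\mathrm{Lip}_{\max}+1)^{-(L-1)})$ is exactly calibrated to absorb the geometric growth over the $L-1$ hidden layers; the activation-free last layer and the skip terms are handled as minor variants of the same recursion, and the only probabilistic ingredients are the two high-probability estimates mentioned above.
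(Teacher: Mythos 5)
Your proposal is correct and follows essentially the same route as the paper's proof: induction over layers using only Lipschitz continuity, the decomposition $\widetilde{\bm{W}}_l\widetilde{\bm{f}}_{i,l-1}-\bm{W}_l\bm{f}_{i,l-1}=\bm{W}_l\hat{\bm{f}}_{i,l-1}+(\widetilde{\bm{W}}_l-\bm{W}_l)\widetilde{\bm{f}}_{i,l-1}$, the spectral bound $\|\bm{W}_l\|_2\le 3$ with $t=\sqrt{m}$, the $\Theta(1)$ feature-norm lemma, and a geometric unrolling absorbed by the choice of $\omega$. Your recursion constant $(3\mathrm{Lip}_{\max}+1)$ versus the paper's $(3+\omega)\mathrm{Lip}_{\max}+1$ is only a cosmetic rearrangement, and your explicit treatment of the output layer and the union-bound accounting for the stated probability are consistent with the paper.
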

\begin{proof}
\label{proof:bound_for_perturbation_ResNet}
We provide the estimation on $ \hat{\bm{f}}_{i,1}$ and $ \hat{\bm{f}}_{i,l}$ ($l=2,3,\cdots,L$) in~\cref{def:problem_setting_feature_map_and_perturbation}, respectively. 
Firstly, $ \hat{\bm{f}}_{i,1}$ admits:
\begin{equation*}
\begin{split}
\left \| \hat{\bm{f}}_{i,1} \right \| _2 & = \left \| \widetilde{\bm{f}}_{i,1} - \bm{f}_{i,1} \right \| _2 = \left \| \sigma_1(\widetilde{\bm{W}}_1\bm{x}_i) - \sigma_1(\bm{W}_1\bm{x}_i) \right \| _2\\
& \leq \mathrm{Lip}_{\sigma_{1}}\left \| \widetilde{\bm{W}}_1 - \bm{W}_1 \right \| _2 \left \|\bm{x}_i \right \| _2 \leq \omega \mathrm{Lip}_{\sigma_{1}}\\ 
& = \mathcal{O}(1)\,.
\end{split}
\end{equation*}

For $ \hat{\bm{f}}_{i,l}$ with $l=2,3,\dots,L$, we have:
\begin{equation}
\small
\begin{split}
\left \| \hat{\bm{f}}_{i,l} \right \| _2 & = \left \| \widetilde{\bm{f}}_{i,l} - \bm{f}_{i,l} \right \| _2\\
& = \left \| \sigma_{l}(\widetilde{\bm{W}}_l\widetilde{\bm{f}}_{i,l-1}) + \alpha_{l-1}\widetilde{\bm{f}}_{i,l-1} - \sigma_{l}(\bm{W}_l\bm{f}_{i,l-1})-\alpha_{l-1}\bm{f}_{i,l-1} \right \| _2\\
& \leq \left \| \sigma_{l}(\widetilde{\bm{W}}_l\widetilde{\bm{f}}_{i,l-1}) - \sigma_{l}(\bm{W}_l\bm{f}_{i,l-1}) \right \| _2 + \alpha_{l-1}\left \| \hat{\bm{f}}_{i,l-1} \right \| _2\\ 
& \leq \mathrm{Lip}_{\sigma_{l}}\left \| \widetilde{\bm{W}}_l\widetilde{\bm{f}}_{i,l-1} - \bm{W}_l\bm{f}_{i,l-1} \right \| _2 + \left \| \hat{\bm{f}}_{i,l-1} \right \| _2\quad\quad\text{[Lipschitz continuity of $\sigma_{l}$]}\\
& = \mathrm{Lip}_{\sigma_{l}}\left \| \bm{W}_l(\widetilde{\bm{f}}_{i,l-1} - \bm{f}_{i,l-1})+(\widetilde{\bm{W}}_l-\bm{W}_l)\widetilde{\bm{f}}_{i,l-1} \right \| _2 + \left \| \hat{\bm{f}}_{i,l-1} \right \| _2\\
& \leq \mathrm{Lip}_{\sigma_{l}}\left\{\left \| \bm{W}_l(\widetilde{\bm{f}}_{i,l-1} - \bm{f}_{i,l-1})\right \| _2+\left \| (\widetilde{\bm{W}}_l-\bm{W}_l)\widetilde{\bm{f}}_{i,l-1} \right \| _2\right\} + \left \| \hat{\bm{f}}_{i,l-1} \right \| _2\\ 
& \leq \mathrm{Lip}_{\sigma_{l}}\left\{\left \| \bm{W}_l\right \| _2 \left \|\widetilde{\bm{f}}_{i,l-1} - \bm{f}_{i,l-1}\right \|_2+\left \| \widetilde{\bm{W}}_l-\bm{W}_l\right \| _2 \left \|\widetilde{\bm{f}}_{i,l-1} \right \| _2\right\} + \left \| \hat{\bm{f}}_{i,l-1} \right \| _2\\
& \leq (\mathrm{Lip}_{\sigma_{l}}\left \| \bm{W}_l\right \| _2+1) \left \|\hat{\bm{f}}_{i,l-1}\right \|_2+\mathrm{Lip}_{\sigma_{l}}\omega\bigg  (\left \|\widetilde{\bm{f}}_{i,l-1} - \bm{f}_{i,l-1}\right \| _2 + \left \| \bm{f}_{i,l-1}\right \| _2\bigg)\\
& = \left\{\mathrm{Lip}_{\sigma_{l}}(\left \| \bm{W}_l\right \| _2+\omega)+1\right\} \left \|\hat{\bm{f}}_{i,l-1}\right \|_2+\mathrm{Lip}_{\sigma_{l}}\omega \left \| \bm{f}_{i,l-1}\right \| _2\,.
\end{split}
\label{eq:bound_for_perturbation_2}
\end{equation}

By~\cref{lemma:bound_for_Gaussian_matrix_fabinus_norm}, choosing $t = \sqrt{m}$, with probability at least $1-2\exp(-m/2)$, we have:

\begin{equation*}
\left \| \bm{W}_l\right \| _2 = s(\bm{W}_l)_{\max} \leq \frac{\sqrt{m} + \sqrt{m} +\sqrt{m}}{\sqrt{m}} = 3\,.
\end{equation*}

Then, $\|\hat{\bm{f}}_{i,l}\|_2$ in~\cref{eq:bound_for_perturbation_2} can be further upper bounded with probability at least $1-2l\exp(-m/2)-l\exp(-\Omega (m))$:
\begin{equation*}
\begin{split}
\left \| \hat{\bm{f}}_{i,l} \right \| _2 & \leq \bigg((3 + \omega) \mathrm{Lip}_{\max}+1\bigg) \left \| \hat{\bm{f}}_{i,l-1}\right \|_2+ \mathrm{Lip}_{\max} \omega \left \| \bm{f}_{i,l-1}\right \| _2\\
& \leq \bigg([(3 + \omega) \mathrm{Lip}_{\max}+1]^{l-1}-1\bigg)\bigg(\mathrm{Lip}_{\sigma_{1}}\omega + \frac{\mathrm{Lip}_{\max}\omega \left \| \bm{f}_{i,l-1}\right \| _2}{(3+\omega)\mathrm{Lip}_{\max}}\bigg)+\mathrm{Lip}_{\sigma_1}\omega \\
& \leq (3\mathrm{Lip}_{\max}+1)^{L-1}\Theta (1)\omega+\mathrm{Lip}_{\sigma_1}\omega\\
& = \mathcal{O}(1)\Theta (1)+ \mathcal{O}(1)\\
& = \mathcal{O}(1)\,,
\end{split}
\end{equation*}
where the second inequality holds by the recursion which conclude the proof.

\end{proof}

We also need the following lemma, demonstrating that the neural network function is almost linear in terms of its weights if the initializations are close to each other.
\begin{lemma}
\label{lemma:lemma_4.1_in_GuQuanquan}
Let $\bm{W}, \bm{W}' \in \mathcal{B} (\bm{W}^{(0)},\omega )$  with $\omega = \mathcal{O}((3\mathrm{Lip}_{\max}+1)^{-(L-1)})$, for any $i \in [N]$,
with probability at least $1-2(L-1)\exp(-m/2)-L\exp(-\Omega (m))-2/m$, we have:
\begin{equation*}
\left | f(\bm {x}_i;\bm{W}') - f(\bm {x}_i;\bm{W}) - \left \langle \nabla  f(\bm {x}_i;\bm{W}), \bm{W}' -\bm{W}  \right \rangle \right | = \mathcal{O}(1)\,.
\end{equation*}
\end{lemma}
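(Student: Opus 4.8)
The plan is to bound the linearization error by the triangle inequality, controlling each of the three quantities $f(\bm{x}_i;\bm{W}')$, $f(\bm{x}_i;\bm{W})$ and $\langle\nabla f(\bm{x}_i;\bm{W}),\bm{W}'-\bm{W}\rangle$ separately by $\mathcal{O}(1)$; the $\mathcal{O}(1)$ target is weak enough that no cancellation between the three terms is needed. (A sharper estimate via the integral remainder $\int_0^1\langle\nabla f(\bm{x}_i;\bm{W}+t(\bm{W}'-\bm{W}))-\nabla f(\bm{x}_i;\bm{W}),\,\bm{W}'-\bm{W}\rangle\,\mathrm{d}t$ is available but unnecessary here.) First I would record the spectral estimates that hold uniformly on $\mathcal{B}(\bm{W}^{(0)},\omega)$: applying~\cref{lemma:bound_for_Gaussian_matrix_fabinus_norm} with $t=\sqrt{m}$ gives $\|\bm{W}_l^{(0)}\|_2\le 3$ for every $l\in[L-1]$ with probability at least $1-2\exp(-m/2)$ each, hence $\|\bm{W}_l\|_2\le 3+\omega$ for every $\bm{W}\in\mathcal{B}(\bm{W}^{(0)},\omega)$; and $\bm{W}_L^{(0)}$ being a length-$m$ Gaussian vector satisfies $\|\bm{W}_L^{(0)}\|_2=\Theta(1)$ with probability $1-e^{-\Omega(m)}$, so $\|\bm{W}_L\|_2=\mathcal{O}(1)$ on the ball. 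Since $\omega=\mathcal{O}((3\mathrm{Lip}_{\max}+1)^{-(L-1)})$ is below an absolute constant, all these factors are $\mathcal{O}(1)$.

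Next I would bound the two function values. Writing $f(\bm{x}_i;\bm{W})=\langle\bm{W}_L,\bm{f}_{i,L-1}\rangle$ and splitting $\bm{W}_L=\bm{W}_L^{(0)}+(\bm{W}_L-\bm{W}_L^{(0)})$, the feature $\bm{f}_{i,L-1}$ (computed under $\bm{W}$) obeys $\|\bm{f}_{i,L-1}\|_2\le\|\bm{f}_{i,L-1}^{(0)}\|_2+\|\hat{\bm{f}}_{i,L-1}\|_2=\Theta(1)+\mathcal{O}(1)=\mathcal{O}(1)$ by~\cref{lemma:lemma_C.1_in_ICML} for the term at initialization and by~\cref{lemma:bound_for_perturbation} for the $\omega$-perturbation (whose hypothesis is exactly the choice of $\omega$ assumed here). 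Hence $|\langle\bm{W}_L-\bm{W}_L^{(0)},\bm{f}_{i,L-1}\rangle|\le\omega\,\mathcal{O}(1)=\mathcal{O}(1)$, while $\langle\bm{W}_L^{(0)},\bm{f}_{i,L-1}\rangle$ is, conditionally on $\bm{f}_{i,L-1}$ (a function of $\bm{W}_1,\dots,\bm{W}_{L-1}$ only, independent of $\bm{W}_L^{(0)}$), a mean-zero Gaussian of variance $\|\bm{f}_{i,L-1}\|_2^2/m=\Theta(1/m)$, so a Markov bound gives $|\langle\bm{W}_L^{(0)},\bm{f}_{i,L-1}\rangle|=\mathcal{O}(1)$ with probability $1-\mathcal{O}(1/m)$; this is the source of the $2/m$ term. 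The identical argument applies to $f(\bm{x}_i;\bm{W}')$.

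For the gradient term I would use Cauchy--Schwarz: $|\langle\nabla f(\bm{x}_i;\bm{W}),\bm{W}'-\bm{W}\rangle|\le\|\nabla f(\bm{x}_i;\bm{W})\|_{\mathrm{F}}\,\|\bm{W}'-\bm{W}\|_{\mathrm{F}}$, where $\|\bm{W}'-\bm{W}\|_{\mathrm{F}}\le 2\sqrt{L}\,\omega=\mathcal{O}(1)$ directly from the definition of $\mathcal{B}(\bm{W},\omega)$. For the other factor, the chain rule (as in~\cref{sec:NTK_finite_width}) gives $\nabla_{\bm{W}_l}f(\bm{x}_i;\bm{W})=\bm{b}_{i,l}\bm{f}_{i,l-1}^{\top}$ (an outer product), where $\bm{b}_{i,l}=\bm{D}_{i,l}\prod_{s=l+1}^{L-1}(\bm{W}_s\bm{D}_{i,s}+\alpha_{s-1}\bm{I}_{m})\bm{W}_L$; by submultiplicativity together with $\|\bm{D}_{i,s}\|_2\le\sup_x|\sigma_s'(x)|=\mathcal{O}(1)$ for each of the five activations, $\|\bm{W}_s\|_2\le 3+\omega$, $\alpha_{s-1}\le 1$ and $\|\bm{W}_L\|_2=\mathcal{O}(1)$, one gets $\|\bm{b}_{i,l}\|_2=\mathcal{O}(1)$ for a not-very-large $L$; combined with $\|\bm{f}_{i,l-1}\|_2=\mathcal{O}(1)$ (again~\cref{lemma:lemma_C.1_in_ICML} and~\cref{lemma:bound_for_perturbation}) this yields $\|\nabla f(\bm{x}_i;\bm{W})\|_{\mathrm{F}}^2=\sum_{l=1}^{L}\|\bm{b}_{i,l}\|_2^2\|\bm{f}_{i,l-1}\|_2^2=\mathcal{O}(1)$. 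Collecting the three bounds proves the claim; a union bound over all the events invoked above gives the stated probability $1-2(L-1)\exp(-m/2)-L\exp(-\Omega(m))-2/m$.

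The main obstacle I expect is handling the five activation functions simultaneously in the gradient estimate: for $\mathrm{ReLU}$ the derivative $\sigma'$ is discontinuous, so one cannot control $\bm{D}_{i,l}$ at a perturbed weight by a Lipschitz-in-$\sigma'$ argument. Instead the proof must only invoke (i) the uniform boundedness of each $\sigma_l'$ --- which suffices precisely because the $\mathcal{N}(0,1/m)$ initialization keeps $\|\bm{b}_{i,l}\|_2$ at $\Theta(1)$ rather than letting it grow with $m$ --- and (ii) the plain Lipschitz continuity of $\sigma_l$ itself, which is what~\cref{lemma:bound_for_perturbation} exploits to keep $\|\hat{\bm{f}}_{i,l}\|_2=\mathcal{O}(1)$; the exponential-in-$L$ constants produced by iterating these bounds are absorbed by the ``not very large $L$'' convention and by choosing $\omega$ exponentially small in $L$.
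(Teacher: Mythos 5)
Your proposal is correct for the statement as literally written, but it takes a genuinely different --- and strictly weaker --- route than the paper's. The paper does not bound the three quantities separately: it expands the linearization error into an explicit sum, $\sum_{l=1}^{L-1}\bm{W}_{L}\prod_{r=l+1}^{L-1}(\bm{D}_{i,r}\bm{W}_r+\alpha_{r-1}\bm{I})\bm{D}_{i,l}(\bm{W}'_l-\bm{W}_l)\bm{f}_{i,l-1}+\bm{W}'_L(\bm{f}'_{i,L-1}-\bm{f}_{i,L-1})$, in which every summand carries a factor of either $\bm{W}'_l-\bm{W}_l$ (norm at most $2\omega$) or $\bm{f}'_{i,L-1}-\bm{f}_{i,L-1}$ (norm $\mathcal{O}\bigl((3\mathrm{Lip}_{\max}+1)^{L-1}\omega\bigr)$ by \cref{lemma:bound_for_perturbation}), and then applies essentially the same spectral estimates you list: $\|\bm W_r\|_2\le 3$ from \cref{lemma:bound_for_Gaussian_matrix_fabinus_norm}, $\|\bm W_L\|_2\le\sqrt2$ via Chebyshev on the $\chi^2_m$ variable $m\|\bm W_L\|_2^2$ (this, not a Markov bound on the output, is where the paper's $2/m$ comes from), $\|\bm D_{i,r}\|_2\le\mathrm{Lip}_{\max}$, and $\|\bm f_{i,l-1}\|_2=\Theta(1)$. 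The payoff of the paper's decomposition is that its bound is really proportional to $\omega(3\mathrm{Lip}_{\max}+1)^{L-1}$, i.e.\ it can be driven to any prescribed accuracy by tightening the constant in $\omega$; this is what \cref{lemma:lemma_4.2_in_GuQuanquan} and \cref{lemma:lemma_4.3_in_GuQuanquan} implicitly rely on, since there the linearization error reappears as a tunable $\epsilon$ that gets multiplied by $N$ in the cumulative-loss bound. Your triangle-inequality route deliberately discards this structure: the contribution $|f(\bm x_i;\bm W)|+|f(\bm x_i;\bm W')|$ is genuinely $\mathcal{O}(1)$ and does not shrink with the radius, so while it certifies the displayed equation, it proves a statement that is vacuous for the lemma's downstream use. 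Your individual estimates are otherwise sound and coincide with the paper's auxiliary results (the gradient bound is \cref{lemma:lemma_B.3_in_GuQuanquan}; the feature bounds are \cref{lemma:lemma_C.1_in_ICML} plus \cref{lemma:bound_for_perturbation}); if you instead treat $f(\bm x_i;\bm W')-f(\bm x_i;\bm W)$ as a single difference, splitting off $\langle\bm W'_L,\bm f'_{i,L-1}-\bm f_{i,L-1}\rangle$ and the $(\bm W'_l-\bm W_l)$-weighted backpropagation terms, you recover the paper's argument with no extra machinery.
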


\begin{proof}
\label{proof:lemma_4.1_in_GuQuanquan}
We have the following expression:
\begin{equation}
\small
\begin{split}
    & \left | f(\bm {x}_i;\bm{W}') - f(\bm {x}_i;\bm{W}) - \left \langle \nabla  f(\bm {x}_i;\bm{W}), \bm{W}' -\bm{W}  \right \rangle \right | \\
    = & \left |\sum_{l=1}^{L-1}\bm{W}_{L}\prod_{r=l+1}^{L-1}(\bm{D}_{i,r}\bm{W}_r+\alpha_{r-1}\bm{I}_{m\times m})\bm{D}_{i,l}(\bm{W}'_l-\bm{W}_l)\bm{f}_{i,l-1}+\bm{W}'_L(\bm{f}'_{i,L-1}-\bm{f}_{i,L-1})  \right |\\
    \leq & \sum_{l=1}^{L-1}\left |\bm{W}_{L}\prod_{r=l+1}^{L-1}(\bm{D}_{i,r}\bm{W}_r +\alpha_{r-1}\bm{I}_{m\times m})\bm{D}_{i,l}(\bm{W}'_l-\bm{W}_l)\bm{f}_{i,l-1}\right |+\left | \bm{W}'_L(\bm{f}'_{i,L-1}-\bm{f}_{i,L-1})   \right |\\
    \leq & \sum_{l=1}^{L-1}\left \| \bm{W}_{L} \right \|_2\left \|  \prod_{r=l+1}^{L-1}(\bm{D}_{i,r}\bm{W}_r +\alpha_{r-1}\bm{I}_{m\times m})\bm{D}_{i,l}(\bm{W}'_l-\bm{W}_l)\bm{f}_{i,l-1} \right \|_2 \\
    & +\left \| \bm{W}'_L\right \|_2\left \| \bm{f}'_{i,L-1}-\bm{f}_{i,L-1} \right \|_2 \\
    \leq & \sum_{l=1}^{L-1}\left \| \bm{W}_{L} \right \|_2  \prod_{r=l+1}^{L-1}(\left \|  \bm{D}_{i,r}\right \|_2\left \|\bm{W}_r\right \|_2 +\alpha_{r-1})\left \|  \bm{D}_{i,l}\right \|_2\left \| \bm{W}'_l-\bm{W}_l \right \|_2 \left \|\bm{f}_{i,l-1} \right \|_2 +\left \| \bm{W}'_L\right \|_2\left \| \bm{f}'_{i,L-1}-\bm{f}_{i,L-1} \right \|_2\,.
\end{split}
\label{eq:proof_lemma_4.1_in_GuQuanquan_1}
\end{equation}

Here we require the derivative of the activation function ${\sigma}'$ is bound, i.e., $\left \|\bm{D}\right \|_2 \leq \mathrm{Lip}_{\max}$.
The considered activation functions in this paper satisfy this condition.

By~\cref{lemma:bound_for_Gaussian_matrix_fabinus_norm},~\cref{lemma:bound_for_perturbation} and~\cref{lemma:lemma_C.1_in_ICML} with probability at least $1-2(L-1)\exp(-m/2)-L\exp(-\Omega (m))$, we have $\left \| \bm{f}'_{i,L-1}-\bm{f}_{i,L-1} \right \|_2 \leq \mathcal{O}(1)$, $\left \|\bm{f}_{i,l-1} \right \|_2 = \Theta(1)$ and $\left \|\bm{W}_r\right \|_2 \leq 3 \quad \forall r \in [L-1]$.

Moreover, $m\left \| \bm{W}_{L} \right \|_2^2$ is a random Variables obey chi-square distribution with $m$ degrees of freedom. That means $\mathbb{E} (m\left \| \bm{W}_{L} \right \|_2^2)=m$ and $\mathbb{V}(m\left \| \bm{W}_{L} \right \|_2^2)=2m$. By Chebyshev's Inequality we have $P(|m\left \| \bm{W}_{L} \right \|_2^2-m|\geq m)\leq 2m/m^{2}$. i.e.:
\begin{equation*}
\left \| \bm{W}_{L} \right \|_2\leq \sqrt{2}\,,
\end{equation*}

with probability at least $1-2/m$.
 
Accordingly,~\cref{eq:proof_lemma_4.1_in_GuQuanquan_1} can be further upper bounded by:
\begin{equation*}
\begin{split}
    & \left | f(\bm {x}_i;\bm{W}') - f(\bm {x}_i;\bm{W}) - \left \langle \nabla  f(\bm {x}_i;\bm{W}), \bm{W}' -\bm{W}  \right \rangle \right | \\
    & \leq \sum_{l=1}^{L-1}(3\mathrm{Lip}_{\max} + 1)^{L-l-1}\omega\sqrt{2}\mathrm{Lip}_{\max}\Theta(1)  +(\sqrt{2}+\omega)\mathcal{O}(1)\\
    &  = \frac{(3\mathrm{Lip}_{\max}+1)^{L-1}-1}{3\mathrm{Lip}_{\max}}\omega\sqrt{2}\mathrm{Lip}_{\max}\Theta(1)  +(\sqrt{2}+\omega)\mathcal{O}(1)\\
    &  = \mathcal{O}(1)\,.
\end{split}
\end{equation*}
\end{proof}

We define $L_i(\bm{W})  = \ell[ y_i f(\bm{x}_i;\bm{W}) ]$, then the following lemma shows that, $L_i(\bm{W})$ is almost a convex function of $\bm{W}$ for any $i \in [N]$ 
if the initilizations are close.
\begin{lemma}
\label{lemma:lemma_4.2_in_GuQuanquan}
Let $\bm{W}, \bm{W}' \in \mathcal{B} (\bm{W}^{(0)},\omega )$  with $\omega = \mathcal{O}((3\mathrm{Lip}_{\max}+1)^{-(L-1)})$ 
, for any $i\in [N]$, it holds that:
\begin{equation*}
L_i(\bm{W}') \geq L_i(\bm{W}) + \left \langle \nabla_{\bm{W}} L_i(\bm{W}), \bm{W}'-\bm{W} \right \rangle -\mathcal{O}(1)\,,
\end{equation*}
with probability at least $1-2(L-1)\exp(-m/2)-L\exp(-\Omega (m))-2/m$.
\end{lemma}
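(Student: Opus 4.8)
The plan is to combine the convexity of the cross-entropy loss with the almost-linearity of the network output established in \cref{lemma:lemma_4.1_in_GuQuanquan}. Recall that $\ell(z) = \log[1+\exp(-z)]$ is convex with $|\ell'(z)| \le 1$ for all $z$, that $L_i(\bm{W}) = \ell[y_i f(\bm{x}_i;\bm{W})]$, and that for classification $y_i \in \{-1,1\}$. First I would apply the first-order convexity inequality for $\ell$ at the argument $z = y_i f(\bm{x}_i;\bm{W})$:
\begin{equation*}
L_i(\bm{W}') = \ell[y_i f(\bm{x}_i;\bm{W}')] \ge \ell[y_i f(\bm{x}_i;\bm{W})] + \ell'[y_i f(\bm{x}_i;\bm{W})]\, y_i \big(f(\bm{x}_i;\bm{W}') - f(\bm{x}_i;\bm{W})\big)\,.
\end{equation*}

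Second, I would linearize the increment of $f$. By \cref{lemma:lemma_4.1_in_GuQuanquan}, with probability at least $1-2(L-1)\exp(-m/2)-L\exp(-\Omega(m))-2/m$ there is a scalar $\epsilon_i$ with $|\epsilon_i| = \mathcal{O}(1)$ such that
\begin{equation*}
f(\bm{x}_i;\bm{W}') - f(\bm{x}_i;\bm{W}) = \langle \nabla_{\bm{W}} f(\bm{x}_i;\bm{W}), \bm{W}'-\bm{W}\rangle + \epsilon_i\,.
\end{equation*}
Substituting this and using the chain rule $\nabla_{\bm{W}} L_i(\bm{W}) = \ell'[y_i f(\bm{x}_i;\bm{W})]\, y_i\, \nabla_{\bm{W}} f(\bm{x}_i;\bm{W})$, the cross term regroups exactly into $\langle \nabla_{\bm{W}} L_i(\bm{W}), \bm{W}'-\bm{W}\rangle$, leaving
\begin{equation*}
L_i(\bm{W}') \ge L_i(\bm{W}) + \langle \nabla_{\bm{W}} L_i(\bm{W}), \bm{W}'-\bm{W}\rangle + \ell'[y_i f(\bm{x}_i;\bm{W})]\, y_i\, \epsilon_i\,.
\end{equation*}

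Finally, since $|\ell'| \le 1$, $|y_i| = 1$, and $|\epsilon_i| = \mathcal{O}(1)$, the residual term $\ell'[y_i f(\bm{x}_i;\bm{W})]\, y_i\, \epsilon_i$ is $\mathcal{O}(1)$ in magnitude, which gives the claimed inequality; the probability bound is inherited verbatim from \cref{lemma:lemma_4.1_in_GuQuanquan} since no fresh randomness is introduced. I do not expect a genuine obstacle here: the argument is a short convexity-plus-linearization computation, and all the heavy probabilistic work (controlling the Jacobian perturbation and the intermediate feature norms over the Gaussian weights) has already been carried out inside \cref{lemma:lemma_4.1_in_GuQuanquan} via \cref{lemma:bound_for_perturbation} and \cref{lemma:lemma_C.1_in_ICML}. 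The one point requiring care is purely bookkeeping --- making sure the first-order term in the Taylor expansion of $\ell$ is matched exactly by $\nabla_{\bm{W}} L_i$ through the chain rule, so that the only slack comes from the almost-linearity error $\epsilon_i$ and not from a mismatch of gradients.
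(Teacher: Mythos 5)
Your proposal is correct and follows essentially the same route as the paper's proof: first-order convexity of the cross-entropy loss, the chain rule to identify the cross term with $\langle \nabla_{\bm{W}} L_i(\bm{W}), \bm{W}'-\bm{W}\rangle$, and \cref{lemma:lemma_4.1_in_GuQuanquan} together with $|\ell'[y_i f(\bm{x}_i;\bm{W})]\cdot y_i|\leq 1$ to absorb the linearization error into the $\mathcal{O}(1)$ term, with the probability inherited unchanged. No meaningful difference from the paper's argument.
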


\begin{proof}
\label{proof:lemma_4.2_in_GuQuanquan}

By the convexity of $\ell(z)$, we have:
\begin{equation*}
\small
    L_i({\bm{W}}') - L_i(\bm{W}) =  \ell[y_i  f(\bm{x}_i;{\bm{W}}') ] - \ell[ y_i f(\bm{x}_i;\bm{W}) ] \geq  \ell'[ y_i f(\bm{x}_i;\bm{W})] \cdot y_i\cdot [ f(\bm{x}_i;{\bm{W}}') - f(\bm{x}_i;\bm{W}) ]\,.
\end{equation*}
Using the chain rule leads to:
\begin{equation*}
\sum_{l=1}^L \left \langle  \nabla_{\bm{W}_l} L_i(\bm{W}), \bm{W}_l' - \bm{W}_l \right \rangle  = \ell'[ y_i f(\bm{x}_i;\bm{W})] \cdot y_i\cdot \left \langle  \nabla f(\bm{x}_i;\bm{W}) , {\bm{W}}' - \bm{W} \right \rangle\,.
\end{equation*}
Combining the above two equations, by triangle inequality, we have:
\begin{equation*}
\small
\begin{split}
        \ell'[ y_i f(\bm{x}_i;\bm{W})]\cdot y_i\cdot[ f(\bm{x}_i;{\bm{W}}') - f(\bm{x}_i;\bm{W}) ] & \geq \ell'[ y_i f(\bm{x}_i;\bm{W})] \cdot y_i\cdot  \left \langle  \nabla f(\bm{x}_i;\bm{W}) , {\bm{W}}' - \bm{W}  \right \rangle   - \varepsilon \\
    & = \textstyle \sum_{l=1}^L  \left \langle  \nabla_{\bm{W}_l} L_i(\bm{W}), \bm{W}_l' - \bm{W}_l  \right \rangle  - \varepsilon\,,
\end{split}
\end{equation*}
where $\varepsilon := | \ell'[ y_i f(\bm{x}_i;\bm{W})] \cdot y_i\cdot [ f(\bm{x}_i;{\bm{W}}') - f(\bm{x}_i;\bm{W})  - \left \langle \nabla f(\bm{x}_i;\bm{W}) , {\bm{W}}' - \bm{W} \right \rangle ] |$.  Then by upper-bounding $\varepsilon$ with~\cref{lemma:lemma_4.1_in_GuQuanquan} and the fact that $| \ell'[ y_i f(\bm{x}_i;\bm{W})] \cdot y_i | \leq 1$, we have:
\begin{equation*}
\begin{split}
    L_i({\bm{W}}') - L_i(\bm{W}) & \geq \sum_{l=1}^L \left \langle \nabla_{\bm{W}_l} L_i(\bm{W}), \bm{W}_l' - \bm{W}_l \right \rangle - \varepsilon\\
     & = \sum_{l=1}^L \left \langle \nabla_{\bm{W}_l} L_i(\bm{W}), \bm{W}_l' - \bm{W}_l \right \rangle - \mathcal{O}(1)\,.
\end{split}
\end{equation*}
\end{proof}

We need the following lemma to show that, the gradient of the neural network function can be upper bounded under near initialization.
\begin{lemma}
\label{lemma:lemma_B.3_in_GuQuanquan}
Let $\bm{W} \in \mathcal{B} (\bm{W}^{(0)},\omega )$   with $\omega = \mathcal{O}((3\mathrm{Lip}_{\max}+1)^{-(L-1)})$, for any $i \in [N]$, with probability at least $1-2(L-l)\exp(-m/2)-l\exp(-\Omega (m))-2/m$, it holds that:
\begin{equation*}
\left \| \nabla_{\bm{W}_l} f(\bm {x}_i;\bm{W}) \right \| _2, \left \| \nabla_{\bm{W}_l} L_i(\bm{W}) \right \| _2 \leq  \Theta(3\mathrm{Lip}_{\max}+1)^{L-l}\,.
\end{equation*}

\end{lemma}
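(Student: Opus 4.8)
The plan is to first express $\nabla_{\bm{W}_l} f(\bm{x}_i;\bm{W})$ in closed form via the chain rule, then bound it factor by factor, controlling each weight matrix spectrally near initialization and each feature vector $\bm{f}_{i,l-1}$ in $\ell_2$-norm; the loss-gradient bound will then follow immediately because $|\ell'|$ is bounded. Concretely, by the recursion in Eq.~\eqref{eq:network} and the same chain-rule computation that produced Eq.~\eqref{eq:proof_lemma_4.1_in_GuQuanquan_1}, for $l\in[L-1]$ the gradient is the rank-one matrix
\[
\nabla_{\bm{W}_l} f(\bm{x}_i;\bm{W}) = \bm{b}_{i,l}\,\bm{f}_{i,l-1}^{\top},
\qquad
\bm{b}_{i,l} := \bm{D}_{i,l}\Big[\textstyle\prod_{r=l+1}^{L-1}\big(\bm{W}_r^{\top}\bm{D}_{i,r}+\alpha_{r-1}\bm{I}_{m\times m}\big)\Big]\bm{W}_L^{\top},
\]
while $\nabla_{\bm{W}_L} f(\bm{x}_i;\bm{W}) = \bm{f}_{i,L-1}$. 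Since both the spectral and Frobenius norms of a rank-one matrix $\bm{u}\bm{v}^{\top}$ equal $\|\bm{u}\|_2\|\bm{v}\|_2$, it suffices to bound $\|\bm{b}_{i,l}\|_2$ and $\|\bm{f}_{i,l-1}\|_2$, and by submultiplicativity
\[
\|\bm{b}_{i,l}\|_2 \le \|\bm{D}_{i,l}\|_2\prod_{r=l+1}^{L-1}\big(\|\bm{D}_{i,r}\|_2\|\bm{W}_r\|_2+\alpha_{r-1}\big)\|\bm{W}_L\|_2.
\]

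Next I would bound each factor: (i) $\|\bm{D}_{i,r}\|_2\le\mathrm{Lip}_{\max}$ deterministically, since the diagonal entries of $\bm{D}_{i,r}$ are values of $\sigma_r'$ and hence bounded by the Lipschitz constant of $\sigma_r$; (ii) $\|\bm{W}_r\|_2\le 3$ for $r\in[L-1]$, by applying~\cref{lemma:bound_for_Gaussian_matrix_fabinus_norm} with $t=\sqrt{m}$ to $\bm{W}_r^{(0)}$ and using $\|\bm{W}_r-\bm{W}_r^{(0)}\|_{\mathrm{F}}\le\omega$ together with the triangle inequality, each with probability at least $1-2\exp(-m/2)$; (iii) $\|\bm{W}_L\|_2\le\sqrt{2}$ with probability at least $1-2/m$, via the Chebyshev bound on the chi-square variable $m\|\bm{W}_L\|_2^2$ exactly as in the proof of~\cref{lemma:lemma_4.1_in_GuQuanquan}; (iv) $\|\bm{f}_{i,l-1}\|_2=\Theta(1)$, by combining~\cref{lemma:lemma_C.1_in_ICML} at initialization with~\cref{lemma:bound_for_perturbation} for the $\omega$-perturbation, since $\|\bm{f}_{i,l-1}\|_2\le\|\bm{f}_{i,l-1}^{(0)}\|_2+\|\hat{\bm{f}}_{i,l-1}\|_2=\Theta(1)+\mathcal{O}(1)$. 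Substituting gives
\[
\|\nabla_{\bm{W}_l}f(\bm{x}_i;\bm{W})\|_2 \le \sqrt{2}\,\mathrm{Lip}_{\max}\big(3\mathrm{Lip}_{\max}+1+\mathrm{Lip}_{\max}\omega\big)^{L-1-l}\,\Theta(1),
\]
and using $\omega=\mathcal{O}((3\mathrm{Lip}_{\max}+1)^{-(L-1)})$ to absorb the $\mathrm{Lip}_{\max}\omega$ term, then folding the leading $\mathrm{Lip}_{\max}$ into one additional factor of $(3\mathrm{Lip}_{\max}+1)$, this is $\Theta\big((3\mathrm{Lip}_{\max}+1)^{L-l}\big)$; the boundary cases $l=L-1$ (empty product) and $l=L$ (gradient equals $\bm{f}_{i,L-1}$) are covered directly.

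Finally, for the loss gradient I would use $\nabla_{\bm{W}_l}L_i(\bm{W}) = \ell'[y_i f(\bm{x}_i;\bm{W})]\,y_i\,\nabla_{\bm{W}_l}f(\bm{x}_i;\bm{W})$ and the bound $|\ell'[y_i f(\bm{x}_i;\bm{W})]\,y_i|\le 1$ (as already used in the proof of~\cref{lemma:lemma_4.2_in_GuQuanquan}), so that $\|\nabla_{\bm{W}_l}L_i(\bm{W})\|_2\le\|\nabla_{\bm{W}_l}f(\bm{x}_i;\bm{W})\|_2$ and the same estimate applies. The stated failure probability then follows from a union bound over the $L-1-l$ events in (ii), the single event in (iii), and the events underlying (iv), absorbing constants and noting that $\exp(-m/2)=\exp(-\Omega(m))$. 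The work is mostly bookkeeping rather than conceptual: the main obstacle is carrying the $\omega$-perturbation carefully through every factor so that the near-initialization weights and features retain their initialization-scale bounds, and verifying that the layerwise geometric growth collapses exactly into $(3\mathrm{Lip}_{\max}+1)^{L-l}$ rather than a strictly larger power.
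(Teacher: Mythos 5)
Your proposal is correct and follows essentially the same route as the paper: write the gradient in closed form via the chain rule, bound it as a product of $\|\bm{f}_{i,l-1}\|_2$, $\|\bm{W}_L\|_2$, the factors $\|\bm{D}_{i,r}\|_2\|\bm{W}_r\|_2+\alpha_{r-1}$, and $\|\bm{D}_{i,l}\|_2$, then invoke \cref{lemma:lemma_C.1_in_ICML}, \cref{lemma:bound_for_Gaussian_matrix_fabinus_norm}, the Chebyshev bound on $\|\bm{W}_L\|_2$, and $|\ell'\cdot y_i|\le 1$. Your treatment is if anything slightly more careful than the paper's (explicit rank-one structure, explicit triangle inequality for the $\omega$-perturbed weights, and explicit boundary cases $l=L-1,L$), and the exponent bookkeeping ($L-l$ versus $L-l-1$) matches the statement at least as well as the paper's own proof does.
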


\begin{proof}
\label{proof:lemma_B.3_in_GuQuanquan}

According to the triangle inequality and definition of operator norm, we have:

\begin{equation*}
\begin{split}
    \left \| \nabla_{\bm{W}_l} f(\bm {x} _i;\bm W  ) \right \| _2 & = \left \| \bm{f}_{i,l-1}\bm{W}_L\prod_{r=l+1}^{L-1}(\bm{D}_{i,r}\bm{W}_r+\alpha_{r-1}\bm{I}_{m\times m})\bm{D}_{i,l} \right \| _2\\
    & \leq \left \| \bm{f}_{i,l-1}\right \| _2 \left \|\bm{W}_L\prod_{r=l+1}^{L-1}(\bm{D}_{i,r}\bm{W}_r+\alpha_{r-1}\bm{I}_{m\times m})\bm{D}_{i,l} \right \| _2\\
    & \leq \left \| \bm{f}_{i,l-1}\right \| _2 \left \| \bm{W}_{L} \right \|_2  \prod_{r=l+1}^{L-1}(\left \|  \bm{D}_{i,r}\right \|_2\left \|\bm{W}_r\right \|_2 +\alpha_{r-1})\left \|  \bm{D}_{i,l}\right \|_2\,.
\end{split}
\end{equation*}

By~\cref{lemma:lemma_C.1_in_ICML} and~\cref{lemma:bound_for_Gaussian_matrix_fabinus_norm}, with probability at least $1-2(L-l-1)\exp(-m/2)-l\exp(-\Omega (m))-2/m$ we have $\left \| \bm{f}_{i,l-1}\right \| _2 = \Theta(1)$, $\left \| \bm{W}_i^{(0)}\right \| _2 \leq 3$ for $i = l+1,\cdots,L-1$, $\left \| \bm{W}_L^{(0)}\right \| _2 \leq \sqrt{2}$ and $\left \|\bm{D}\right \|_2 \leq \mathrm{Lip}_{\max}$ due to $\sigma'$ is bounded, then we have:

\begin{equation*}
    \left \| \nabla_{\bm{W}_l} f(\bm {x} _i;\bm W ) \right \| _2 \leq \Theta(1)(3\mathrm{Lip}_{\max}+1)^{L-l-1}\sqrt{2}\mathrm{Lip}_{\max} = \Theta(3\mathrm{Lip}_{\max}+1)^{L-l-1}\,,
\end{equation*}
which implies:
\begin{equation*}
\small
\left \| \nabla_{\bm{W}_l} L_i(\bm{W}) \right \| _2 \leq \left | {\ell}'[y_i\cdot f(\bm {x}_i;\bm{W})]\cdot y_i \right | \cdot \left \| \nabla_{\bm{W}_l} f(\bm {x}_i;\bm{W}) \right \| _2 \leq \left \| \nabla_{\bm{W}_l} f(\bm {x} _i;\bm W) \right \| _2 \leq \Theta(3\mathrm{Lip}_{\max}+1)^{L-l-1}\,,
\end{equation*}
where we use the fact that $| \ell'[ y_i f(\bm{x}_i;\bm{W})] \cdot y_i | \leq 1$.
\end{proof}

We need the following lemma to show that, the cumulative loss can be upper bounded under small changes on the parameters (i.e., weights).  
\begin{lemma}
\label{lemma:lemma_4.3_in_GuQuanquan}
    For any $\epsilon, \delta, R > 0$, there exists:
    \begin{equation*}
    m^{\star}= \frac{(3\mathrm{Lip}_{\max}+1)^{4L-4}L^2R^4}{4\varepsilon^2}\,,
    \end{equation*}
    such that if $m \geq m^* (\epsilon, \delta, R,L)$, then with probability at least $1 - \delta$ over the randomness of $\bm{W}^{(1)}$, for any $\bm{W}^* \in \mathcal{B} (\bm{W}^{(1)}, R m^{-1/2})$, Algorithm~\ref{alg:algorithm_DARTS} with $\gamma = \varepsilon/[m(3\mathrm{Lip}_{\max}+1)^{2L-2}]$, $N = LR^2(3\mathrm{Lip}_{\max}+1)^{2L-2}/(2\varepsilon^2)$, the cumulative loss can be upper bounded by:
    \begin{equation*}
        \sum_{i=1}^N L_i(\bm{W}^{(i)}) \leq \sum_{i=1}^N L_i(\bm{W}^{*}) + 3N\epsilon\,.
    \end{equation*}

\end{lemma}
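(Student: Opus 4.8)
The argument is the classical online-gradient-descent (regret) analysis applied to the almost-convex surrogate losses $L_i$, with the geometric lemmas above supplying the quantitative control. I would first fix $\omega := Rm^{-1/2}$ and show, by induction on $i$, that every iterate $\bm{W}^{(i)}$, $i\in[N+1]$, of Algorithm~\ref{alg:algorithm_DARTS} stays inside $\mathcal{B}(\bm{W}^{(1)}, c\omega)$ for an absolute constant $c$. Granting that the iterates produced so far lie in this ball, \cref{lemma:lemma_B.3_in_GuQuanquan} gives $\|\nabla_{\bm{W}_l} L_i(\bm{W}^{(i)})\|_2 \le \Theta((3\mathrm{Lip}_{\max}+1)^{L-l})$, hence $\|\nabla_{\bm{W}} L_i(\bm{W}^{(i)})\|_F \le \Theta((3\mathrm{Lip}_{\max}+1)^{L-1})$ after the geometric sum over layers; the total displacement after $N$ steps of size $\gamma$ is then at most $N\gamma\,\Theta((3\mathrm{Lip}_{\max}+1)^{L-1}) = \Theta\big(LR^2(3\mathrm{Lip}_{\max}+1)^{L-1}/(\varepsilon m)\big)$ with the prescribed $\gamma,N$, and this is $\le c\omega = cRm^{-1/2}$ as soon as $m$ exceeds the stated $m^{\star}$ (the displayed value of $m^{\star}$ is a convenient sufficient bound). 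Because $Rm^{-1/2} = \mathcal{O}((3\mathrm{Lip}_{\max}+1)^{-(L-1)})$ in this regime, all iterates and the comparator $\bm{W}^{*}$ lie in a common $\omega$-neighborhood of $\bm{W}^{(1)}$ on which \cref{lemma:lemma_B.3_in_GuQuanquan}, \cref{lemma:lemma_4.1_in_GuQuanquan} and \cref{lemma:lemma_4.2_in_GuQuanquan} are in force; taking a union bound of their failure probabilities over the $N$ iterations and absorbing the $2/m$ and $\exp(-\Omega(m))$ terms into $m \ge m^{\star}$ yields a single good event of probability $\ge 1-\delta$.

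On that event I would, second, upgrade the $\mathcal{O}(1)$ in \cref{lemma:lemma_4.2_in_GuQuanquan} to $\le\varepsilon$ by re-running the recursions in the proofs of \cref{lemma:bound_for_perturbation} and \cref{lemma:lemma_4.1_in_GuQuanquan} with the small radius $\omega = Rm^{-1/2}$ rather than a fixed constant: the linearization error is $\mathcal{O}((3\mathrm{Lip}_{\max}+1)^{L-1}\omega)$ up to factors polynomial in $L$, which is $\le\varepsilon$ for $m \ge m^{\star}$. Hence for every $i\in[N]$
\begin{equation*}
L_i(\bm{W}^{(i)}) - L_i(\bm{W}^{*}) \le \langle \nabla_{\bm{W}} L_i(\bm{W}^{(i)}), \bm{W}^{(i)} - \bm{W}^{*}\rangle + \varepsilon\,.
\end{equation*}
Plugging in the SGD update $\bm{W}^{(i+1)} = \bm{W}^{(i)} - \gamma\nabla_{\bm{W}} L_i(\bm{W}^{(i)})$ and using the polarization identity,
\begin{equation*}
\langle \nabla_{\bm{W}} L_i(\bm{W}^{(i)}), \bm{W}^{(i)} - \bm{W}^{*}\rangle = \frac{\|\bm{W}^{(i)}-\bm{W}^{*}\|_F^2 - \|\bm{W}^{(i+1)}-\bm{W}^{*}\|_F^2}{2\gamma} + \frac{\gamma}{2}\|\nabla_{\bm{W}} L_i(\bm{W}^{(i)})\|_F^2\,.
\end{equation*}

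Finally, summing over $i=1,\dots,N$, the telescoping group contributes at most $\|\bm{W}^{(1)}-\bm{W}^{*}\|_F^2/(2\gamma) \le LR^2/(2\gamma m)$, which equals $N\varepsilon$ for the prescribed $\gamma$ and $N$; the quadratic group is bounded by $\tfrac{\gamma}{2}\,N\,\Theta((3\mathrm{Lip}_{\max}+1)^{2L-2}) = N\varepsilon\,\Theta(1)/m \le N\varepsilon$ once $m \ge m^{\star}$; and the per-step $\varepsilon$ from the linearization contributes a further $N\varepsilon$. Adding the three terms gives $\sum_{i=1}^N L_i(\bm{W}^{(i)}) \le \sum_{i=1}^N L_i(\bm{W}^{*}) + 3N\varepsilon$, as claimed. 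The delicate part is the first step: the geometric lemmas are valid only inside the $\omega$-neighborhood, yet they are precisely what is used to prove the trajectory never leaves it, so the induction must invoke the gradient bound at step $i$ only after certifying that $\bm{W}^{(i)}$ is in the ball, and one must then reconcile the accumulated $\Theta(\cdot)$ constants and the $N$-fold union-bound failure probability with the stated $m^{\star}$; the remaining telescoping computation is routine.
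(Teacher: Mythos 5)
Your proposal is correct and follows essentially the same route as the paper's proof: an induction certifying that all SGD iterates remain in the $\omega$-neighborhood so that \cref{lemma:lemma_4.1_in_GuQuanquan,lemma:lemma_4.2_in_GuQuanquan,lemma:lemma_B.3_in_GuQuanquan} remain applicable, followed by the standard online-gradient-descent regret telescoping with the prescribed $\gamma$ and $N$, yielding the three $N\varepsilon$ contributions. Your explicit observation that the $\mathcal{O}(1)$ linearization error in \cref{lemma:lemma_4.2_in_GuQuanquan} must be sharpened to $\varepsilon$ by tracking its dependence on the radius $\omega$ is a step the paper performs only implicitly, so that remark strengthens rather than departs from the argument.
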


{\bf Remark:} Discussion on the required width $m$ refer to \cref{sec:discussion}.
\begin{proof}
Set $\omega = 1/(3\mathrm{Lip}_{\max}+1)^{L-1}$ such that the conditions on $\omega$ given in Lemmas~\ref{lemma:bound_for_perturbation},~\ref{lemma:lemma_4.1_in_GuQuanquan},~\ref{lemma:lemma_4.2_in_GuQuanquan} and~\ref{lemma:lemma_B.3_in_GuQuanquan} hold. It is easy to see that as long as $m \geq R^2 (3\mathrm{Lip}_{\max}+1)^{2L-2}$, we have $\bm{W}^* \in \mathcal{B} (\bm{W}^{(1)}, \omega)$. 
We now show that under our parameter choice, 
$\bm{W}^{(1)},\ldots,\bm{W}^{(N)} $ are inside $ \mathcal{B} (\bm{W}^{(1)}, \omega)$ as well. 

This result follows by simple induction. Clearly we have $\bm{W}^{(1)} \in \mathcal{B} (\bm{W}^{(1)}, \omega)$. Suppose that $ \bm{W}^{(1)},\ldots, \bm{W}^{(i)} \in \mathcal{B} (\bm{W}^{(1)}, \omega) $. Then by~\cref{lemma:lemma_B.3_in_GuQuanquan}, for $l\in [L]$, we have $\|\nabla_{\bm{W}_l} L_i(\bm{W}^{(i)})\|_2 \leq \Theta(3\mathrm{Lip}_{\max}+1)^{L-l-1}$.

Therefore:
\begin{align*}
    \big\| \bm{W}_l^{(i+1)} - \bm{W}_l^{(1)} \big\|_2 \leq \sum_{j  = 1}^i\big\| \bm{W}_l^{(j+1)} - \bm{W}_l^{(j)} \big\|_2 \leq \Theta((3\mathrm{Lip}_{\max}+1)^{L-l-1}\gamma N)\,.
\end{align*}
Plugging in our parameter choice 
$\gamma = \varepsilon/[m(3\mathrm{Lip}_{\max}+1)^{2L-2}]$, $N = LR^2(3\mathrm{Lip}_{\max}+1)^{2L-2}/(2\varepsilon^2)$
for some small enough absolute constant $\nu$ provides:
\begin{align*}
\big\| \bm{W}_l^{(i+1)} - \bm{W}_l^{(1)} \big\|_{\mathrm{F}} \leq\Theta\bigg(\sqrt{m}(3\mathrm{Lip}_{\max}+1)^{L-l-1}\frac{LR^2}{2m\varepsilon}\bigg) \leq \omega\,,
\end{align*}
where the last inequality holds as long as $m \geq (3\mathrm{Lip}_{\max}+1)^{4L-4}L^2R^4/(4\varepsilon^2)$. Therefore by induction we see that $\bm{W}^{(1)},\ldots,\bm{W}^{(N)} \in \mathcal{B} (\bm{W}^{(1)}, \omega)$. As a result, the conditions of Lemmas.~\ref{lemma:bound_for_perturbation},~\ref{lemma:lemma_4.1_in_GuQuanquan},~\ref{lemma:lemma_4.2_in_GuQuanquan} and~\ref{lemma:lemma_B.3_in_GuQuanquan} are satisfied for $\bm{W}^*$ and $\bm{W}^{(1)},\ldots, \bm{W}^{(N)}$.

In the following, we utilize the results of Lemmas~\ref{lemma:bound_for_perturbation},~\ref{lemma:lemma_4.1_in_GuQuanquan},~\ref{lemma:lemma_4.2_in_GuQuanquan} and~\ref{lemma:lemma_B.3_in_GuQuanquan} to prove the bound of cumulative loss. First of all, by~\cref{lemma:lemma_4.2_in_GuQuanquan}, we have:
\begin{align*}
    L_i(\bm{W}^{(i)}) - L_i(\bm{W}^{*}) &\leq  \left \langle  \nabla_{\bm{W}} L_i(\bm{W}^{(i)}), \bm{W}^{(i)} - \bm{W}^*  \right \rangle  + \epsilon \\
    &= \sum_{l=1}^L \frac{\left \langle  \bm{W}_l^{(i)} - \bm{W}_l^{(i+1)}, \bm{W}_l^{(i)} - \bm{W}_l^*  \right \rangle  }{ \gamma } + \epsilon\,.
\end{align*}
Note that for the matrix inner product we have the equality $2\left \langle \bm{A},\bm{B} \right \rangle = \left \| \bm{A} \right \| _{\mathrm{F}}^2 + \left \| \bm{B} \right \| _{\mathrm{F}}^2 - \left \| \bm{A-B} \right \| _{\mathrm{F}}^2$. Applying this equality to the right hand side above provides:
\begin{align*}
    L_i(\bm{W}^{(i)}) - L_i(\bm{W}^{*}) \leq \sum_{l=1}^L \frac{ \| \bm{W}_l^{(i)} - \bm{W}_l^{(i+1)} \|_{\mathrm{F}}^2 + \| \bm{W}_l^{(i)} - \bm{W}_l^* \|_{\mathrm{F}}^2 - \| \bm{W}_l^{(i+1)} - \bm{W}_l^* \|_{\mathrm{F}}^2 } {2\gamma} + \epsilon\,.
\end{align*}
By~\cref{lemma:lemma_B.3_in_GuQuanquan}, for $l\in [L]$ we have $\| \bm{W}_l^{(i)} - \bm{W}_l^{(i+1)} \|_{\mathrm{F}} \leq \gamma\sqrt{m}\|\nabla_{\bm{W}_l} L_i(\bm{W}^{(i)})\|_2 \leq \Theta (\gamma\sqrt{m}(3\mathrm{Lip}_{\max}+1)^{L-l-1})$. 

Therefore:
\begin{align*}
\small
    L_i(\bm{W}^{(i)}) - L_i(\bm{W}^{*}) \leq \sum_{l=1}^L \frac{ \| \bm{W}_l^{(i)} - \bm{W}_l^* \|_{\mathrm{F}}^2 - \| \bm{W}_l^{(i+1)} - \bm{W}_l^* \|_{\mathrm{F}}^2 } {2\gamma} + \Theta ((3\mathrm{Lip}_{\max}+1)^{2L-2}\gamma m) + \epsilon\,.
\end{align*}
Telescoping over $i = 1,\ldots, N$, we obtain:
\begin{align*}
    \frac{1}{N}\sum_{i=1}^N L_i(\bm{W}^{(i)}) &\leq \frac{1}{N}\sum_{i=1}^N L_i(\bm{W}^{*}) + \sum_{l=1}^L \frac{ \| \bm{W}_l^{(1)} - \bm{W}_l^* \|_{\mathrm{F}}^2 } {2N\gamma} + \Theta ((3\mathrm{Lip}_{\max}+1)^{2L-2}\gamma m) + \epsilon\\
    &\leq \frac{1}{N}\sum_{i=1}^N L_i(\bm{W}^{*}) + \frac{ L R^{2} } {2\gamma mN} + \Theta ((3\mathrm{Lip}_{\max}+1)^{2L-2}\gamma m) + \epsilon\,,
\end{align*}
where in the first inequality we simply remove the term $-\| \bm{W}_l^{(N+1)} - \bm{W}_l^* \|_{\mathrm{F}}^2/(2\gamma) $ to obtain an upper bound, the second inequality follows by the assumption that $\bm{W}^*\in \mathcal{B} (\bm{W}^{(1)},Rm^{-1/2})$. 
Plugging in the parameter choice 
$\gamma = \varepsilon/[m(3\mathrm{Lip}_{\max}+1)^{2L-2}]$, $N = LR^2(3\mathrm{Lip}_{\max}+1)^{2L-2}/(2\varepsilon^2)$, then:
\begin{align*}
    \frac{1}{N}\sum_{i=1}^N L_i(\bm{W}^{(i)}) &\leq \frac{1}{N}\sum_{i=1}^N L_i(\bm{W}^{*}) + 3\epsilon\,,
\end{align*}
which finishes the proof.
\end{proof}

\subsection{Proof of~\cref{thm:NTK_Generalization}}
\label{ssec:Proof_of_NTK_Generalization}

\begin{proof}

By Lemmas~\ref{lemma:lemma_4.1_in_GuQuanquan},~\cref{lemma:lemma_4.3_in_GuQuanquan} and Theorem 3.3, Lemma 4.4, Corollary 3.10 in ~\citep{cao2019generalization}, let $C_1(L) = \sqrt{L}/(3\mathrm{Lip}_{\max}+1)^{L-1}$ and $C_2(L) = \sqrt{L}(3\mathrm{Lip}_{\max}+1)^{L-1}$, bring in our $\gamma$ and $N$ with a not very large $L$, we have:

\begin{equation*}
\mathbb{E}[\ell_{\mathcal{D} }^{0-\!1}\!(\hat{\bm{W}}\!)] \!\leq\! \tilde{\mathcal{O} }\! \left(\! C_2\sqrt{\frac{\bm{y}^{\top}  ({\bm{K}^{(L)}})^{-1} \bm{y}}{N}} \!\right) + \mathcal{O}\!\left( \! \sqrt{\frac{\log(1/\delta )}{N} } \! \right)\!.
\end{equation*}

\end{proof}

\section{Discussion on the key points and the motivation of the NTK analysis}
\label{sec:discussion}

In this section, we discuss the motivation and few key points in the proof of this paper and we also explain how the proof differs from previous results.

\textbf{The motivation for studying the minimum eigenvalue of NTK}:

To make this clearer, we provide an illustrative example on the significance of the minimum eigenvalue. Let us consider the square loss $\Phi(\theta) = \frac{1}{2}\sum_{i=1}^{n} \left \| f(x_i) - y_i \right \|^2$. A simple calculation shows that $\Phi(\theta) \leq \frac{[\nabla \Phi(\theta)]^2}{2\lambda_{\min}(K)}$. Thus if the minimum eigenvalue of NTK is strictly greater than 0, then minimizing the gradient on the LHS will drive the loss to zero. The larger the minimum eigenvalue, the smaller the loss.

Therefore, in this work, we are using the minimum eigenvalue to derive the generalization bound of NAS.

\textbf{Key points in the proof}:
\begin{itemize}
    \item \emph{Minimum eigenvalue}: Our proof framework is motivated by \citet{pmlr-v139-nguyen21g} on minimal eigenvalue of NTK of ReLU neural networks. However, our proofs differ from them in two aspects. Firstly, as we discussed in~\cref{sec:introduction}, extension to mixed activation functions is non-trivial due to the special properties of $\mathrm{ReLU}$. More importantly, we remark that the lower bound of the minimal eigenvalue of NTK in \citep[Theorem 3.2]{pmlr-v139-nguyen21g} holds with probability at least $1-N e^{-\Omega(d)} - N^2 e^{-\Omega\left(dN^{-\frac{2}{r-0.5}}\right)}$, where $r \geq 2$ is some constant.
    It can be found that, this concentration probability decreases as the number of training data increases. Thus, it could be negative for a large $N$.
    This is due to the use of Gershgorin circle theorem leading to a loose probability estimation. Instead, in this paper, we do not use this theorem, and we develop a tighter estimation based on~\citet{10.1214/ECP.v19-3807} under the assumption of isotropic data distribution. Accordingly, we achieve the reasonable $1- e^{-d}$ probability, \emph{c.f.} \cref{thm:lambda_min_inf_mixed}.
    \item \emph{Generalization}: Our proof framework is based on ~\citet{cao2019generalization} for generalization guarantees of deep ReLU neural networks requiring $m = \Omega(L^{56})$. Their results cannot be directly extended to other activation functions as the nice homogeneity and the derivative property of ReLU are used in their proof.
    To make our result feasible to various activation functions, we employ Lipschitz continuous properties of all activation functions, and achieve the generalization guarantees with $m = \Omega(4^{4L})$, \emph{c.f.} \cref{thm:NTK_Generalization} and~\cref{lemma:lemma_4.3_in_GuQuanquan}.
    Admittedly, our result is in an exponential increasing order of the depth. However, in practice, the depth of neural networks in NAS is usually smaller than $20$, or even $10$~\citep{liu2018hierarchical, dong2021nats}, which leads to $4^{4L} \ll L^{56}$ in this case when compared to their result.
    This result makes our theory reasonable and fair for NAS.
\end{itemize}

\section{Auxiliary numerical validations}
\label{sec:additional_experiments}

\subsection{Dataset details and algorithm}
\label{ssec:dataset}
We describe here the datasets that we have used for the numerical validation of our theory. Those are the following five datasets:

\begin{enumerate}
    \item \emph{Fashion-MNIST}~\citep{xiao2017fashion} includes grayscale images of clothing. The training set consists of $60,000$ examples and the test set of $10,000$ examples. The resolution of each image is $28\times 28$, with each image belonging to one of the $10$ classes. 

    \item \emph{MNIST}~\citep{726791} includes handwritten digits images. MNIST has a training set of $60,000$ examples and a test set of $10,000$ examples. The resolution of each image is $28\times 28$.

    \item \emph{CIFAR-10 and CIFAR-100~\citep{krizhevsky2014cifar}} depicts images of natural scenes. CIFAR-100 has a training set of $50,000$ examples and a test set of $10,000$ examples. The resolution of each RGB image is $32\times 32$.
    
    \item \emph{ImageNet-16}~\citep{chrabaszcz2017downsampled} is the down-sampled version of ImageNet~\citep{deng2009imagenet} with image size $16\times16$ on $120$ classes. 
\end{enumerate}

Our Eigen-NAS algorithm used in~\cref{ssec:NAS_201_experiment} is summarized as below.

\begin{algorithm}[H]
\caption{Eigen-NAS Algorithm}
\label{alg:algorithm_NAS}
\begin{algorithmic}
\REQUIRE Search space $\mathcal{S}$, training data $\mathcal{D}_{tr} = \{ (\bm x_i, y_i)_{i=1}^N \}$, validation data $\mathcal{D}_{val} = \{ (\bm x_j, y_j)_{j=1}^{N_v} \}$.\\
Initialize max\_iteration $ = M$\\
Initialize candidate set $\mathcal{C} = []$\\
\FOR{search\_iteration in $1, 2, \dots, $max\_iteration}
\STATE{Randomly sample architecture $s$ from search space $\mathcal{S}$.\\Compute $Eigen := \text{minimum eigenvalue of NTK}$.\\$\mathcal{C}.\text{append}(s, Eigen)$\\update $\mathcal{C}$ to kept top-K best architectures}
\ENDFOR \\
$s^{\star} = best_{s}(\mathcal{C}, \mathcal{D}_{tr}, \mathcal{D}_{val})$ \# Choose the best architecture
based on validation error after training 20 epochs.\\
\textbf{Output} $s^{\star}$
\end{algorithmic}
\end{algorithm}

\subsection{Compared algorithms}

We provide a thorough comparison with the following baselines: 

\begin{enumerate}[leftmargin=3.3mm]
    \itemsep-0.2em 
    \item \textit{Classical network:} ResNet ~\citep{7780459}, which is the default baseline used widely in image-related tasks. 
    \item \textit{Reinforcement learning based algorithm:} NAS-RL~\citep{45826} with the validation accuracy as a reward, which is an classical and representative NAS Algorithm.
    \item \textit{Differentiable algorithm:} DARTS~\citep{liu2019darts}\footnote{We directly use the results from ~\citet{pmlr-v139-xu21m}.}, which is the earliest and basic gradient-based NAS algorithm.
    \item \textit{Train-free algorithms using metrics to guide NAS:} A new type of NAS algorithm, they use some special metrics to pick models directly from candidate models. Common Train-free algorithms are: NASWOT~\citep{mellor2021neural} using the output of ReLU; TE-NAS~\citep{chen2021neural} leveraging the spectrum of NTK and linear partition of the input space; KNAS~\citep{pmlr-v139-xu21m} employing the Frobenius norm of NTK. Our Eigen-NAS algorithm also belongs to this type.
\end{enumerate}

\subsection{Training/test accuracy of DNNs by NAS}

Here we evaluate the classification results with 5 runs of the obtained architecture by DARTS under varying widths $m\in \{64, 128, 256, 512, 1024\}$ and depths $L \in \{5, 10\}$ on Fashion-MNIST.
\cref{fig:DARTS_experiment} shows that nearly $90\%$ accuracy is achieved on the test set under different depth and width settings.
The result is competitive on FC/residual networks within $10$ layers and without training tricks, e.g., data augmentation, batch norm and drop out. 
We find that when compared to the depth, the network width also contributes on test accuracy. As suggested by~\cref{eq:problem_setting_parameter_space}, the amount of parameters in the neural network is approximately proportional to the depth, but squared to the width. 

\begin{figure}[t]
\centering
    \includegraphics[width=0.9\linewidth]{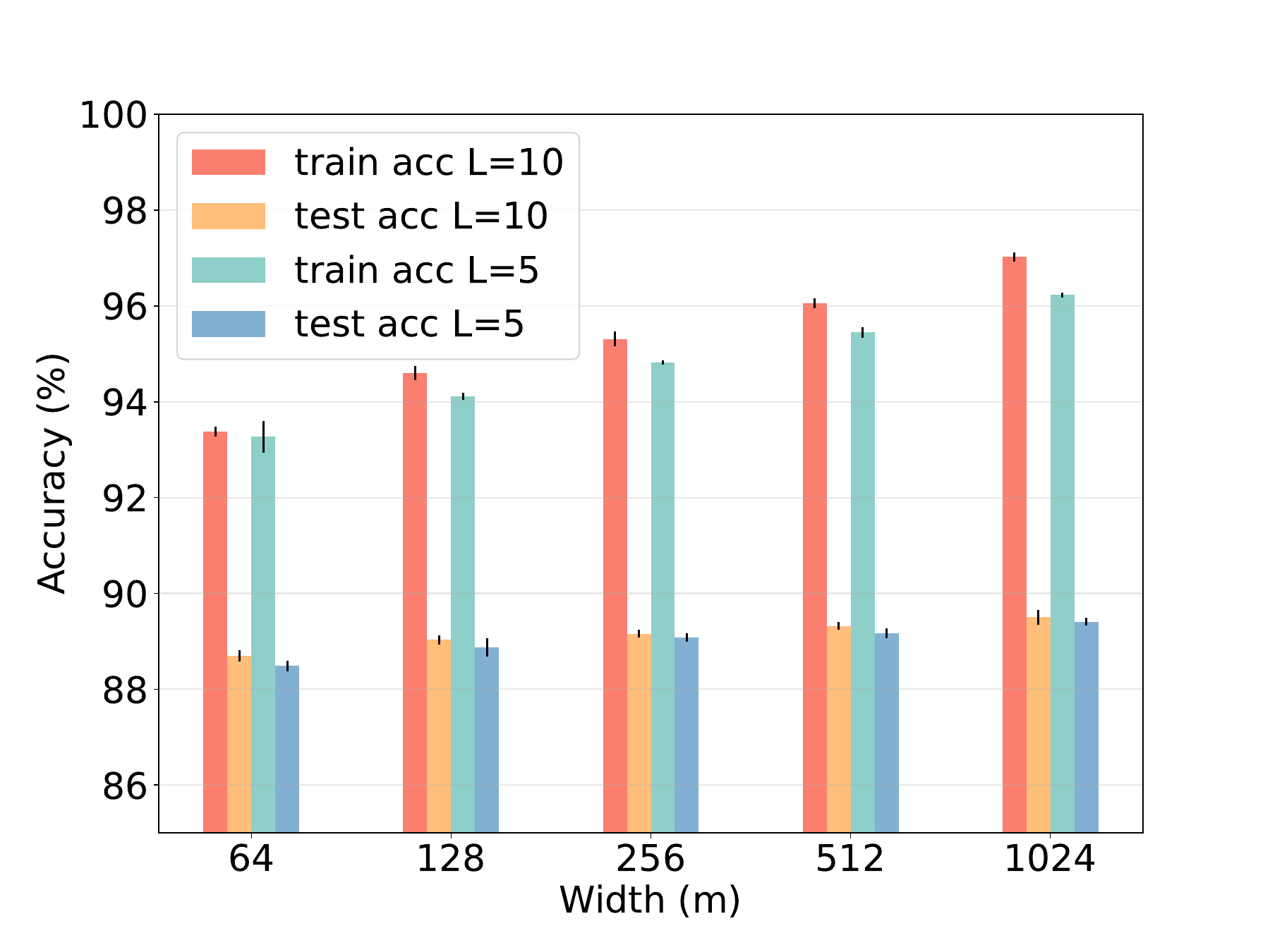}\vspace{-5mm}
\caption{The accuracy of neural networks by NAS under different widths and depths.}
\label{fig:DARTS_experiment}
\end{figure}

\subsection{Simulation of minimum eigenvalues of NTK}
\label{ssec:Simulation_of_NTK}

We calculate the minimum eigenvalue of each NTK matrix under different architectures with activation functions, skip connections and depths, according to~\cref{lemma:NTK_matrix_recursive_form}. 
We consider four special cases on skip connections: a) no skip connections with $\bm \alpha = \bm 0$, i.e., fully connected neural network in~\cref{fig:NTK_simulationa}; b) skip connections between all consecutive layers with $\bm \alpha = \bm 1$,  in~\cref{fig:NTK_simulationb}; c) the alpha of the first half (of the network) is $1$ and the alpha of the second half is $0$ shown in~\cref{fig:NTK_simulationc}; d) The alpha of the first half is $0$ and the alpha of the second half is $1$. The results are shown in~\cref{fig:NTK_simulationd}.

\cref{fig:NTK_simulationa} indicates that as the network depth increases, the minimum eigenvalue of NTK will become larger when LeakyReLU, ReLU, Swish and Tanh employed, but Sigmoid leads to a decreasing minimum eigenvalue, which is consistent with the upper bound shown in~\cref{thm:lambda_min_inf_mixed}. 
The LeakyReLU, ReLU and Swish generate the fastest increasing rate of depth, while Tanh and Sigmoid are slow, which coincides with the derived lower bound in~\cref{thm:lambda_min_inf_mixed} and previous work \cite{bachspaper}. \cref{fig:NTK_simulationb} shows that, under the skip connection, the tendency of the minimum eigenvalue of NTK is similar to that of FC neural networks when various activation functions are employed. However, the specific values and the growth rate are significantly larger than FC neural networks.
This result is consistent with the conclusion we state in~\cref{thm:lambda_min_inf_mixed} about skip layers leading to the increase of minimum eigenvalue of NTK with respect to the depth. Moreover,~\cref{fig:NTK_simulation_2} show similar growth speed.

Then, we plot the comparison figure of NTK under above two settings and two settings in main paper for the same activation function in~\cref{fig:DARTS_experiment_}. In addition to reconfirming the order between different activation functions, we can also see that the effect of adding an activation layer in the second half of the $10$-layer neural network is better than the first half of the neural network. This verifies the experimental results in~\cref{fig:DARTS_heatmap2}.

\begin{figure}[t]
\centering
    \subfigure[no skip connections]{\includegraphics[width=0.495\linewidth]{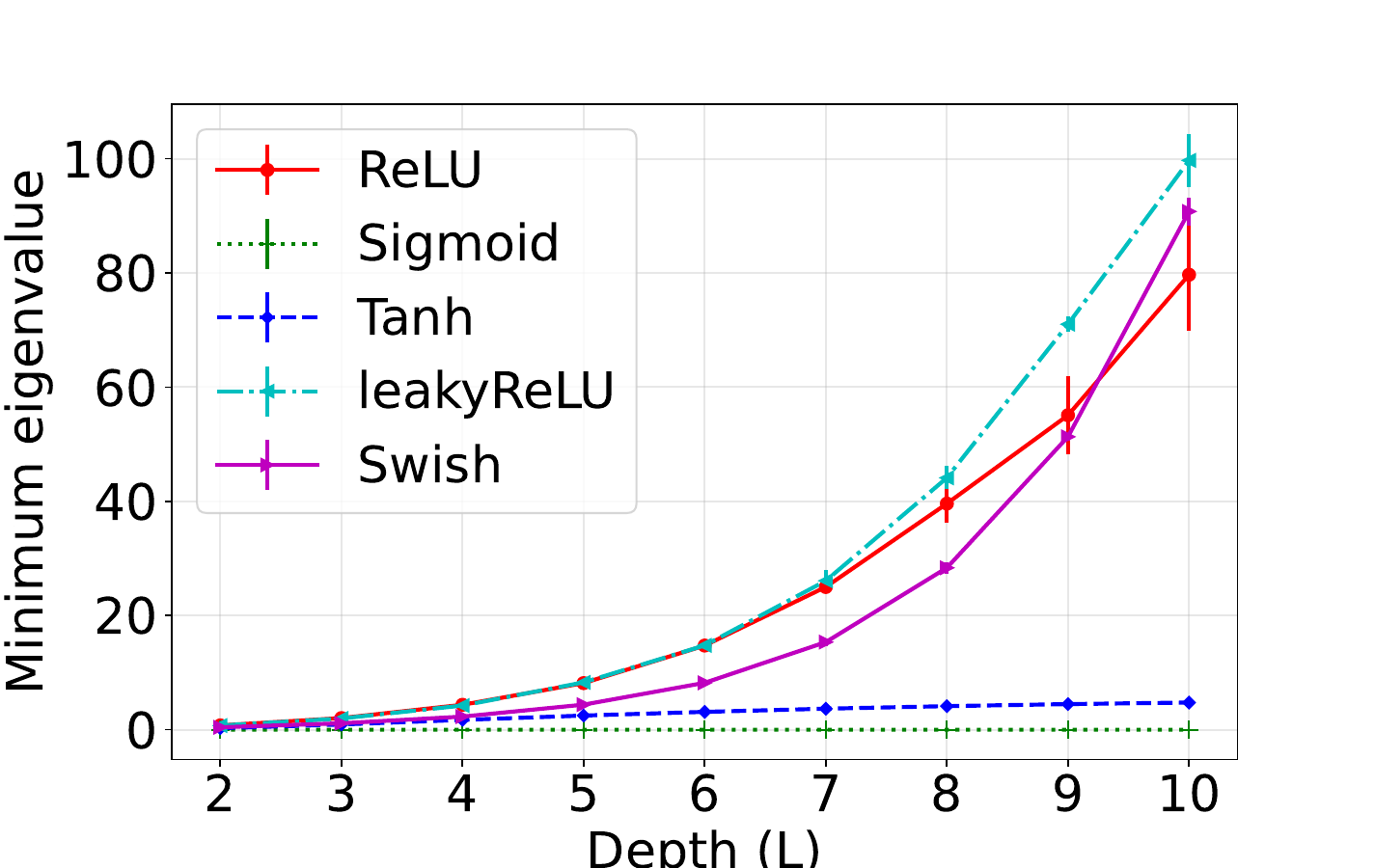}\label{fig:NTK_simulationa}\hspace{-2mm}}
    \subfigure[skip connections]{\includegraphics[width=0.495\linewidth]{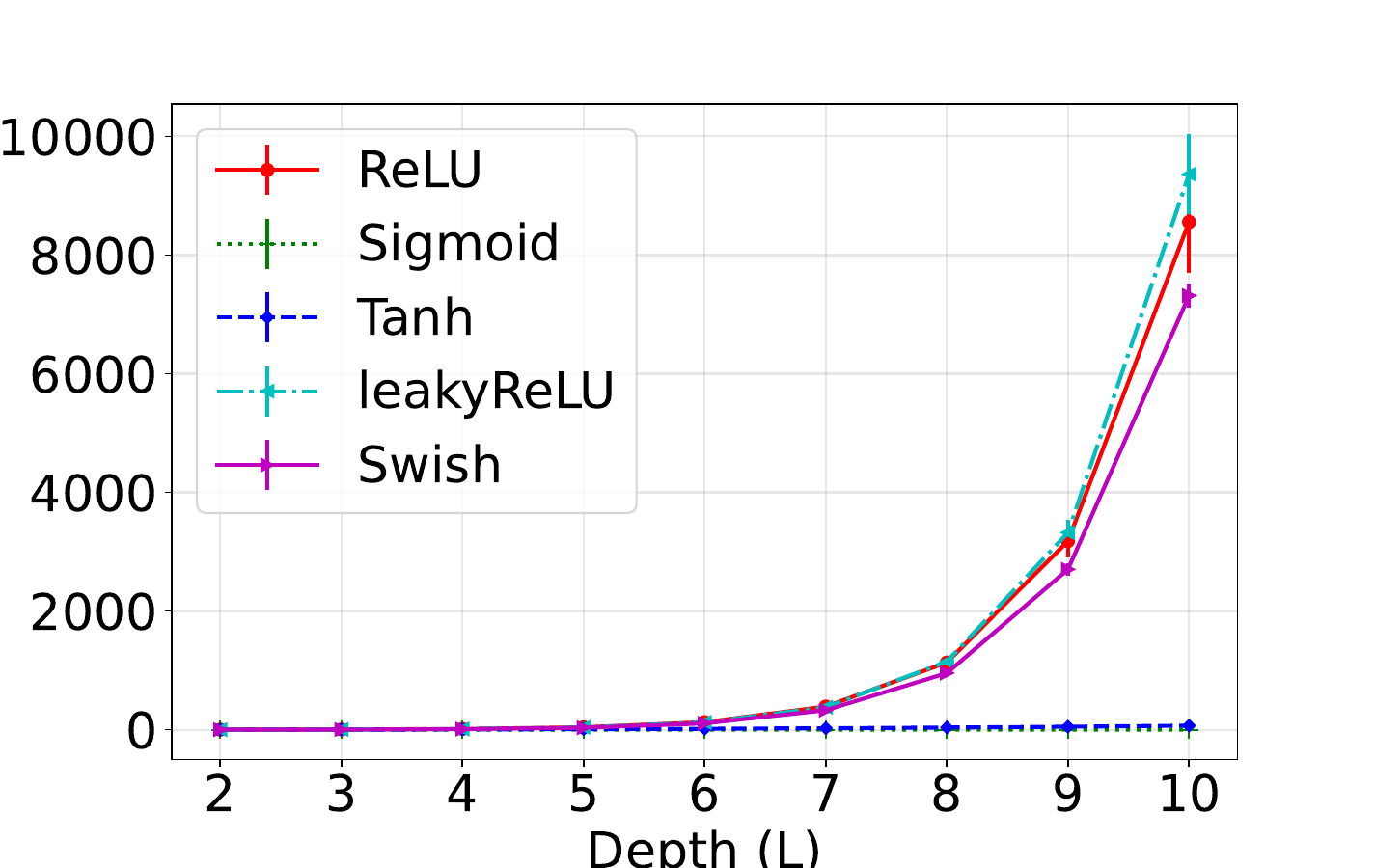}\label{fig:NTK_simulationb}}\vspace{-3mm}
\caption{Minimum eigenvalue of NTK \emph{vs.} depth ($L$) under various activation functions with/without skip connections in each layer. }
\label{fig:NTK_simulation}
\end{figure}

\begin{figure}[t]
\centering
    \subfigure[the first half has a skip layer]{\includegraphics[width=0.48\linewidth]{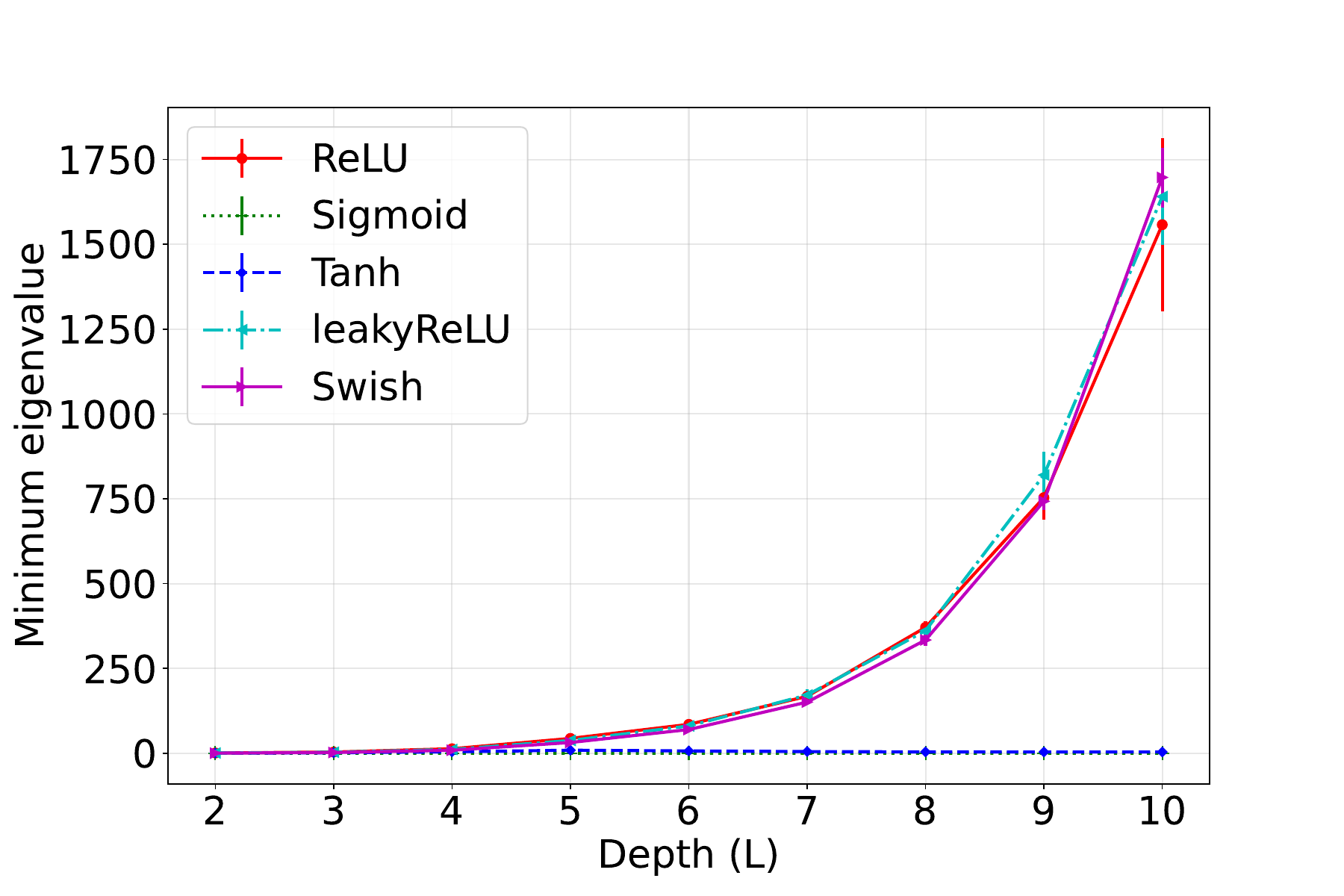}\label{fig:NTK_simulationc}\hspace{-2mm}}
    \subfigure[the second half has a skip layer]{\includegraphics[width=0.48\linewidth]{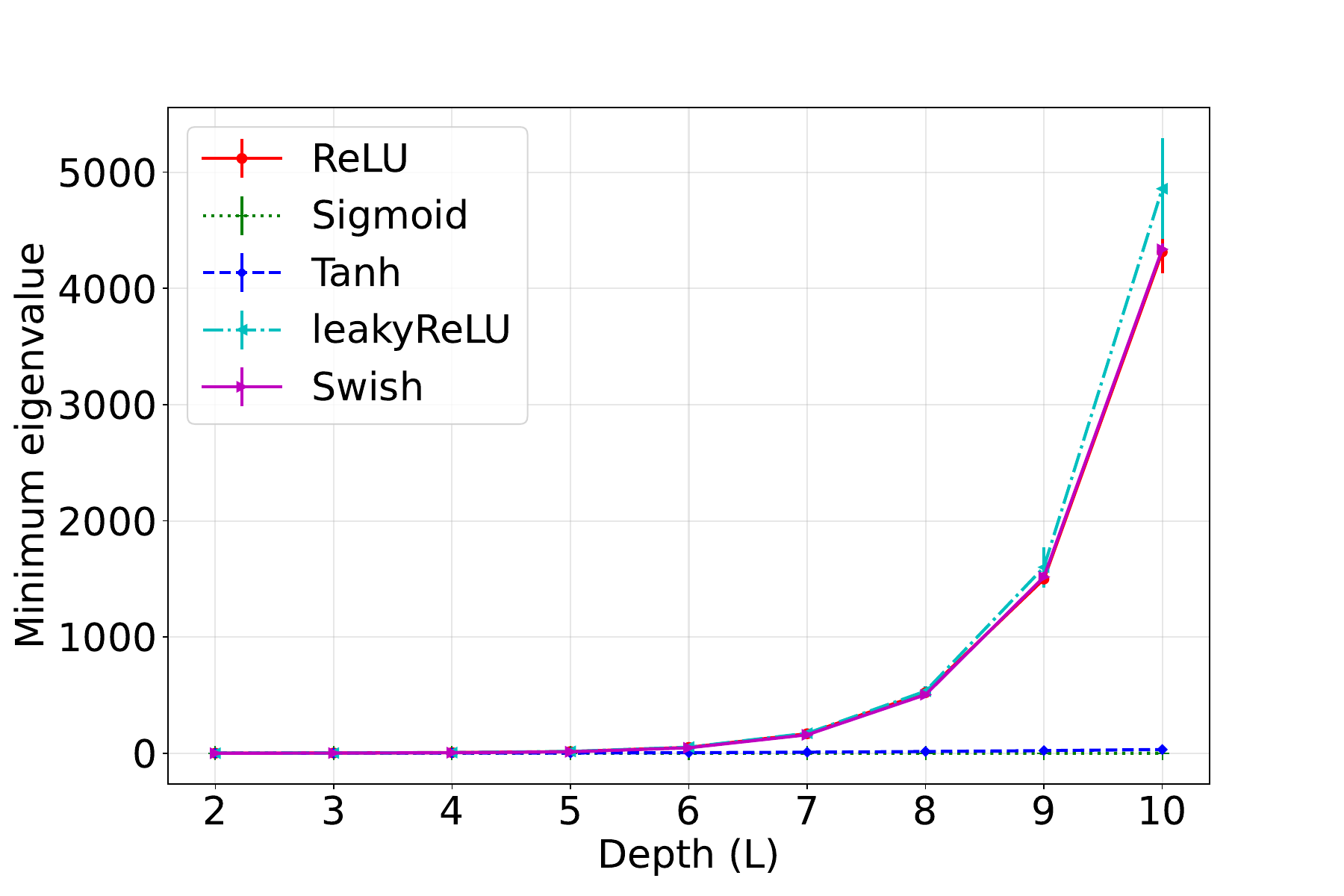}\label{fig:NTK_simulationd}}\vspace{-3mm}
\caption{Minimum eigenvalue of NTK \emph{vs.} depth ($L$) under various activation functions with/without skip connections in each layer. }
\label{fig:NTK_simulation_2}
\end{figure}

\begin{figure}[t]
\centering
    \subfigure[ReLU]{\includegraphics[width=0.495\linewidth]{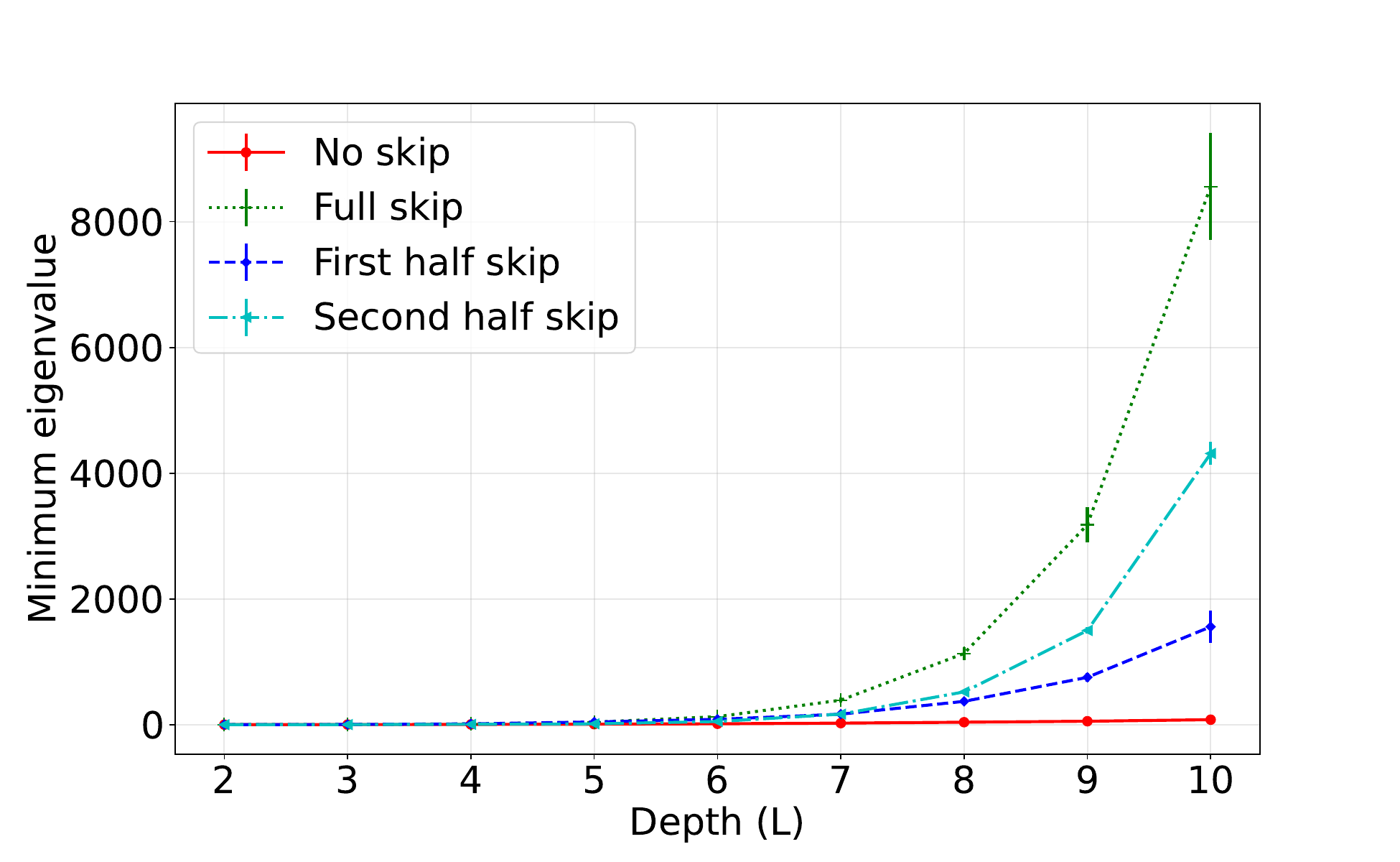}\label{fig:DARTS_experiment_ReLU}\hspace{-2mm}}
    \subfigure[LeakyReLU]{\includegraphics[width=0.495\linewidth]{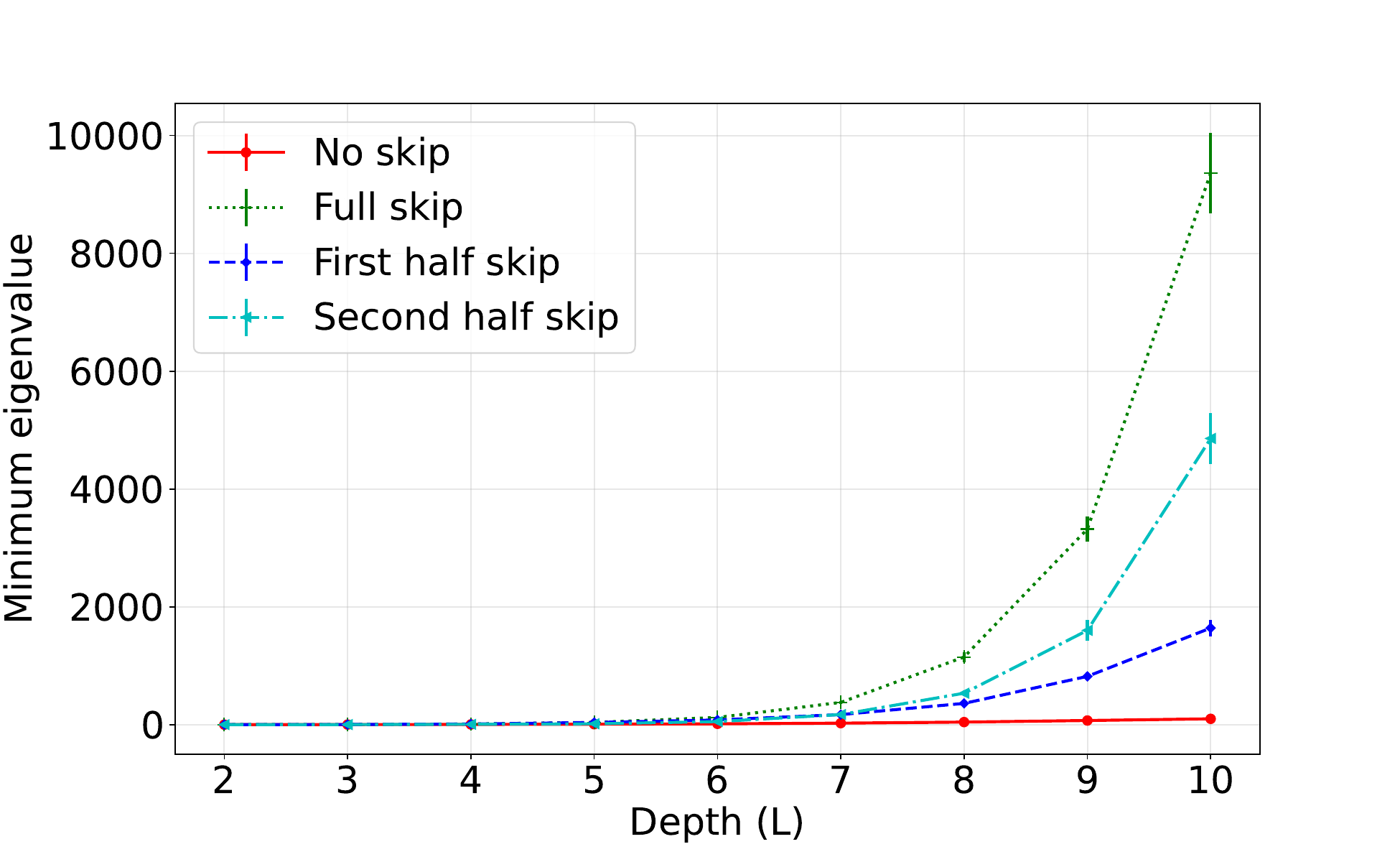}\label{fig:DARTS_experiment_LeakyReLU}}\vspace{-2mm}\\
    \subfigure[Sigmoid]{\includegraphics[width=0.33\linewidth]{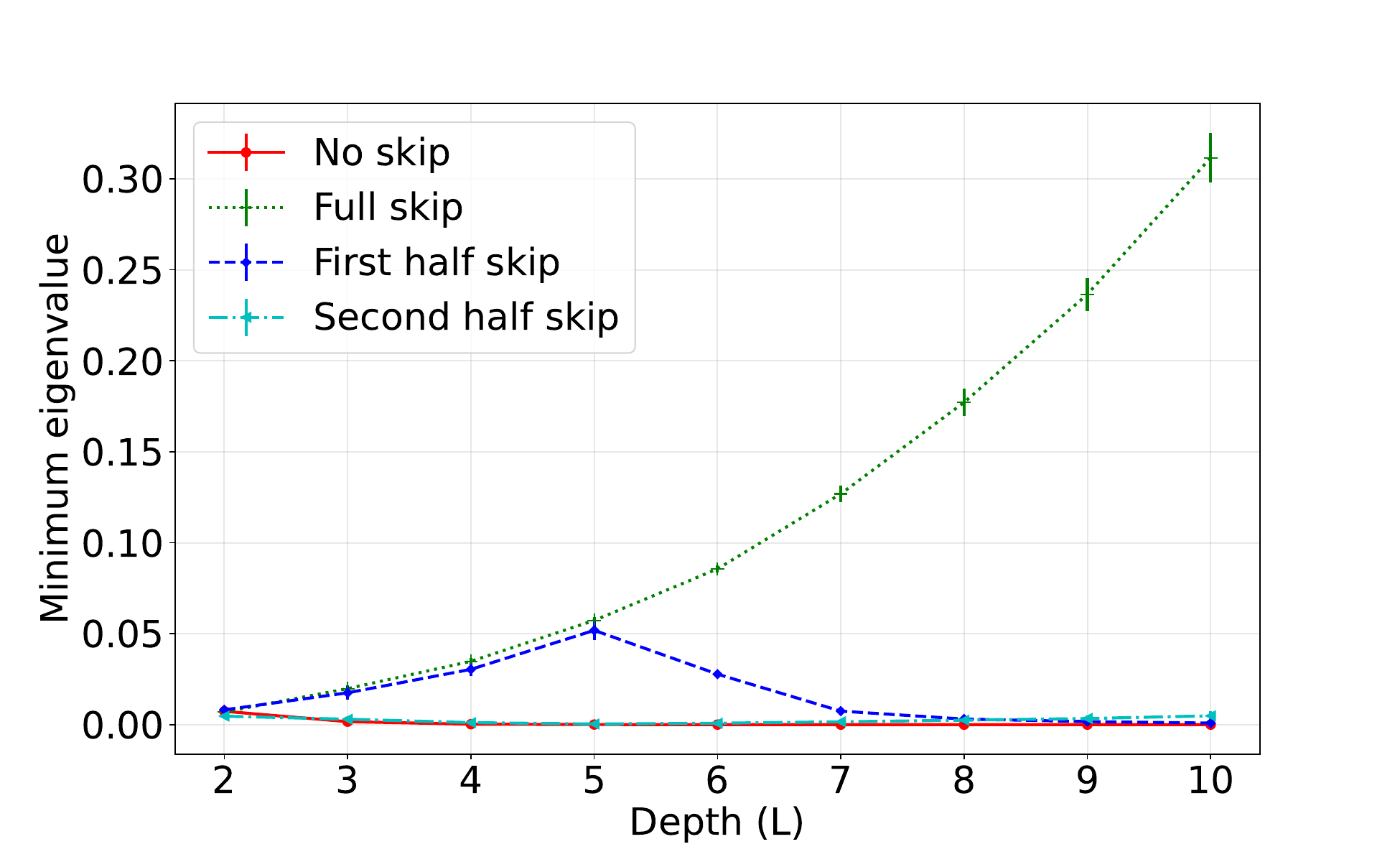}\label{fig:DARTS_experiment_Sigmoid}\hspace{-2mm}}
    \subfigure[Tanh]{\includegraphics[width=0.33\linewidth]{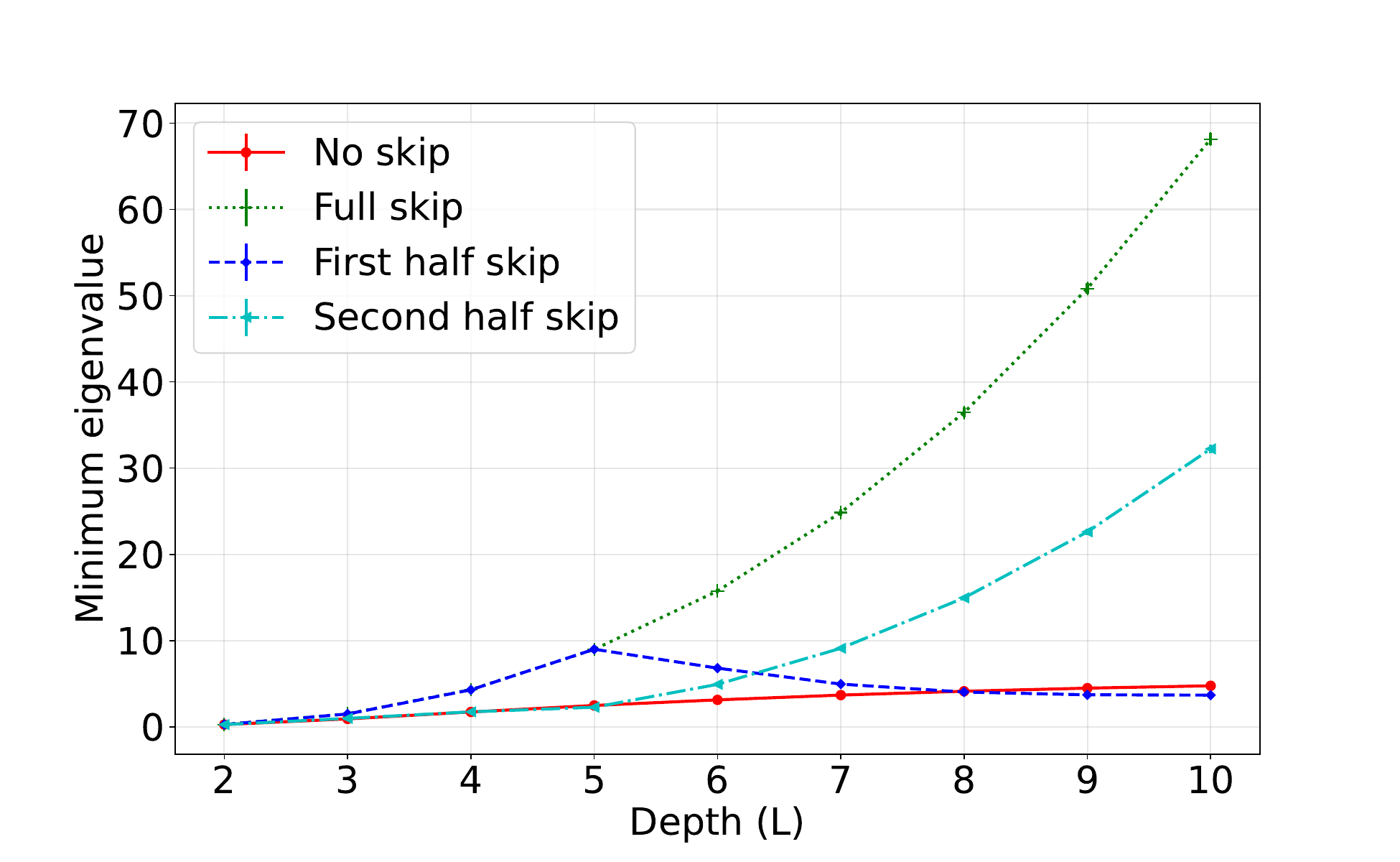}\label{fig:DARTS_experiment_Tanh}}\vspace{-2mm}
    \subfigure[Swish]{\includegraphics[width=0.33\linewidth]{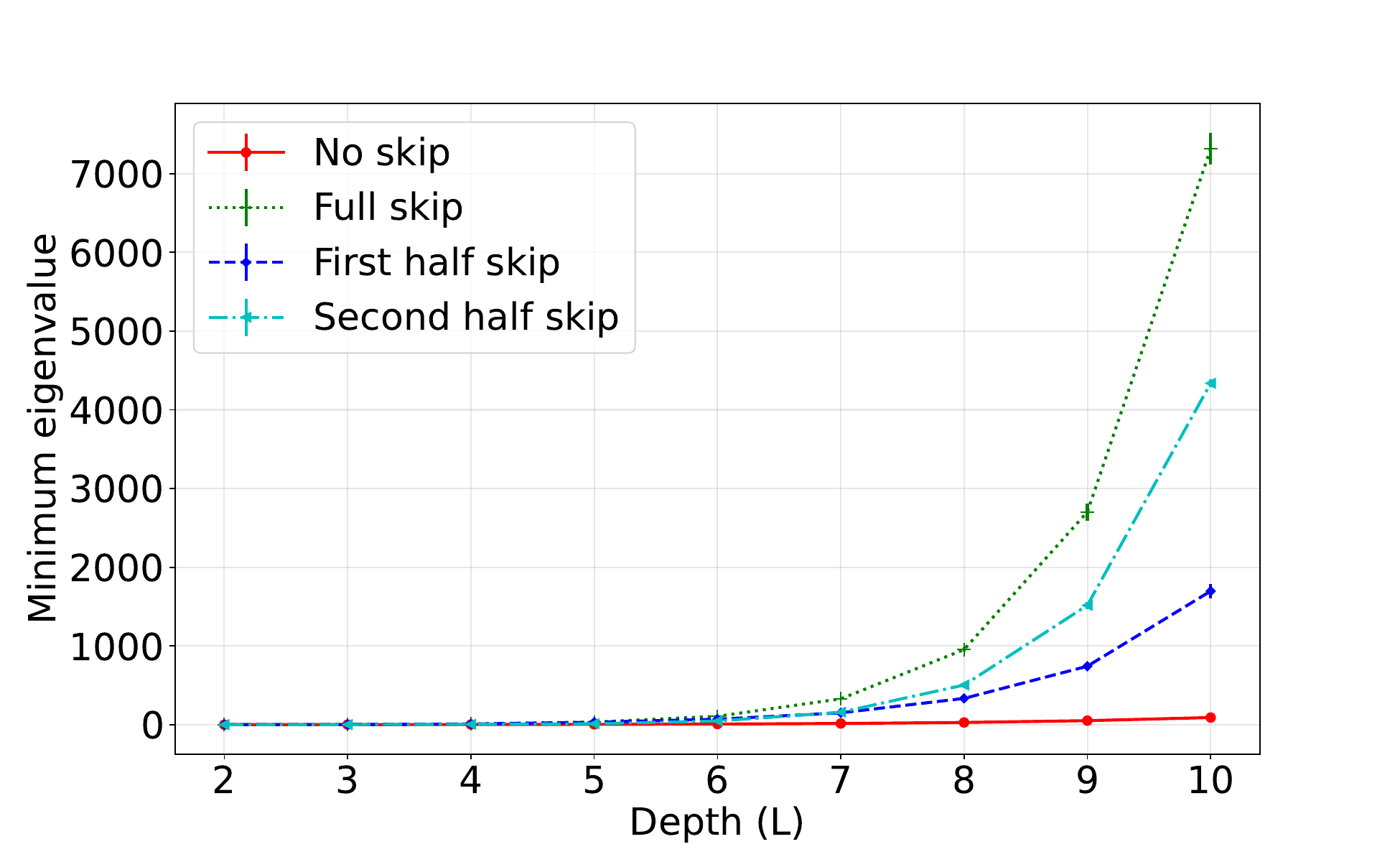}\label{fig:DARTS_experiment_Swish}}\vspace{-2mm}\\
\caption{Minimum eigenvalue of NTK for different activation function. The red line have skip connections in each layer, green line does not contain any skip connections, the blue line represents the skip connections in the first half and the cyan line represents the skip connections in the second half.}
\label{fig:DARTS_experiment_}
\end{figure}

\subsection{Additional experiments on NAS-Bench-101 and ranking correlations}
\label{ssec:bench101_experiment}

In this section, we conduct more experiments on two new benchmarks NAS-Bench-101~\citep{ying2019bench} and Network Design Spaces (NDS)~\citep{radosavovic2019network} using the same setting as~\cref{ssec:NAS_201_experiment}.

\cref{tab:NAS_benchmark_experiment_101} provides a comparison of the accuracy of Eigen-NAS, KNAS and NASWOT on four new search spaces. For all of four search spaces, our method achieves the best results with $1\%-2\%$ accuracy improvement.

\begin{table*}[tb]
\centering
\caption{New results on NAS-Benchmark-101, NDS-DARTS and NDS-PNAS using CIFAR-10 and ImageNette2, a subset of ImageNet.}
\begin{tabular}{l@{\hspace{0.25cm}} c@{\hspace{0.2cm}}c@{\hspace{0.2cm}}c@{\hspace{0.2cm}}c@{\hspace{0.2cm}} c} 
    \hline
    Benchmark  & NAS-Bench-101 & NDS-DARTS & NDS-PNAS & NDS-PNAS\\
    \hline
    Dataset  & CIFAR-10 & CIFAR-10 & CIFAR-10 & ImageNette2\\
    \hline
    \textbf{Eigen-NAS} ($k=20$)  & $\bm{92.7\%}$ & $\bm{92.6\%}$ & $\bm{93.8\%}$ &  $\bm{69.2\%}$\\
    KNAS ($k=20$)& $91.7\%$ & $90.1\%$ & $91.7\%$ &  $67.3\%$\\
    NASWOT & $91.3\%$ & $90.6\%$ & $93.3\%$ &  $68.4\%$\\
    \hline
\end{tabular}
\label{tab:NAS_benchmark_experiment_101}
\end{table*}

Moreover, we conduct more detailed experiments using the CIFAR-10 dataset on NAS-Bench-101.~\cref{tab:NAS_benchmark_experiment_time} provides the running time and Kendall rank correlation coefficient between minimum eigenvalues and accuracy for the above three train-free NAS algorithm. We can see that our Eigen-NAS method can get the best rank correlation coefficient with the fastest speed among three methods. The scatter plot of the relationship between the minimum eigenvalue and the accuracy is shown in~\cref{fig:ranking_correlation}.

\begin{table*}[tb]
\centering
\caption{Running time (in Second) and the Kendall rank correlation coefficient on NAS-Bench-101, CIFAR-10 (the larger the absolute value of Rank correlation, the stronger the correlation between the guide used by the algorithm and the network accuracy).}
\begin{tabular}{l@{\hspace{0.25cm}} c@{\hspace{0.2cm}}c@{\hspace{0.2cm}}c@{\hspace{0.2cm}}c@{\hspace{0.2cm}} c} 
    \hline
    Method  & Eigen-NAS ($k=20$) & KNAS ($k=20$) & NASWOT\\
    \hline
    Running time & $\bm{1136}$ & $1967$ &  $1468$\\
    Rank correlation & $ \bm{- 0.355}$ & $0.309$ &  $- 0.313$ \\
    \hline
\end{tabular}
\label{tab:NAS_benchmark_experiment_time}
\end{table*}

\begin{figure}[t]
\centering
    \includegraphics[width=0.6\linewidth]{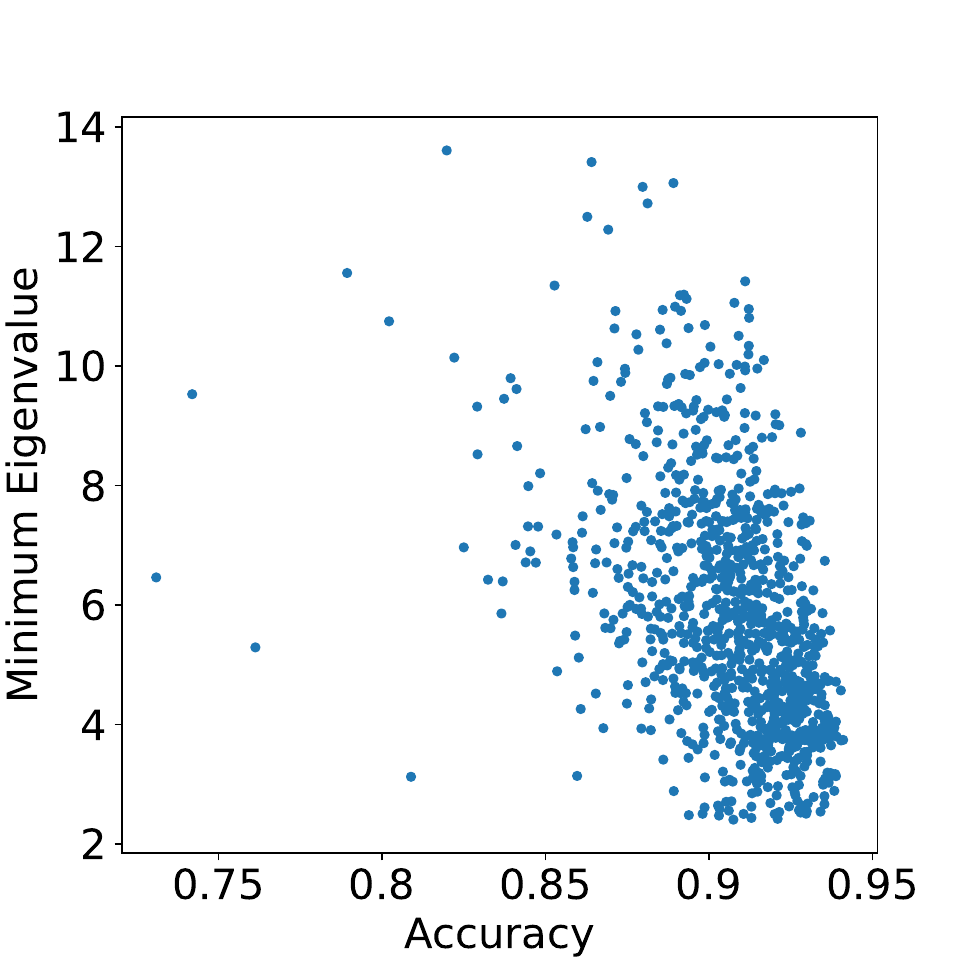}\vspace{-5mm}
\caption{The standard scatter plot on the kendall rank correlation coefficient.}
\label{fig:ranking_correlation}
\end{figure}

\subsection{Transfer learning experiment}
\label{ssec:transfer_learning_experiment}
Here we evaluate the proposed NAS framework on transfer learning. The algorithm from~\cref{ssec:DARTS_experiment} is employed for this experiment, e.g., the same search space and search strategy. 
The experiment setting is the following: we train the model on FashionMNIST for $20$ epochs, then we use the pretrained weights and fine-tune them for $5$ epochs on MNIST, with repeated three times.

\cref{tab:transfer_learning_experiment} show that, after the fine-tuning for just $2$ epochs, the method obtains up to $95\%$ accuracy and after fine-tuning for $5$ epochs it obtains up to $97\%$ accuracy. This verifies our intuition that the proposed NAS framework can obtain architectures that generalize well beyond the dataset they were optimized on.

\begin{table*}[tb]
\centering
\caption{Transfer learning result of our network for different width ($m$) which training in FashionMNIST (domain dataset) for $20$ epochs and then training in MNIST (target dataset) for $2$ or $5$ epochs. (the accuracy in the table are displayed in percentages)}
\begin{tabular}{c@{\hspace{0.2cm}}c@{\hspace{0.2cm}}c@{\hspace{0.2cm}} c@{\hspace{0.2cm}} c@{\hspace{0.2cm}} c} 
    \hline
    Epochs & $m=64$ & $m=128$ & $m=256$ & $m=512$ & $m=1024$\\
    \hline
    $20+2$ &  $94.13\pm0.64$ & $95.18\pm0.25$& $94.73\pm0.22$ & $94.40\pm0.80$& $\bm{95.41}\pm0.03$\\
    $20+5$ &  $95.73\pm0.28$ & $96.12\pm0.32$& $96.73\pm0.29$ & $96.73\pm0.11$& $\bm{96.96}\pm0.22$\\
    \hline
\end{tabular}
\label{tab:transfer_learning_experiment}
\end{table*}

\subsection{DARTS experiment on CNN}
\label{ssec:DARTS_experiment_CNN}

Our theory relies on fully-connected matrices and we have indeed verified experimentally the validity of our theoretical findings. To scrutinize our method even further, we attempt to extend our results to the popular convolutional neural networks. We believe this will provide some further insights on future extensions of our theory. 
In particular, we use DARTS (similarly with the experiment in \cref{ssec:DARTS_experiment}) with convolutional layers. The standard dataset of CIFAR-10 is selected; the details of the dataset are shared in \cref{ssec:dataset}. The search space and search strategy follow~\cref{ssec:DARTS_experiment} with one differentiating point: we use convolutional layers instead of fully connected layers in~\cref{eq:problem_setting_parameter_space}.

We select DARTS on a Convolutional Neural Network with $L=10$ and $m=1024$, while we repeat the experiment for 5 times. After training, the probability of these activation functions and skip connections in each layer are reported in Figure~\ref{fig:DARTS_heatmap_cnn_1} and~\ref{fig:DARTS_heatmap_cnn_2}, respectively. Compared with the~\cref{fig:DARTS_heatmap}, the activation function search exhibits similar characteristics with the results of the fully connected network. Namely: (1) ReLU and LeakyReLU have the highest probability to be selected, (2) the difference of probability between different activation functions in the first layer is the largest. But for skip layer search, CNN exhibits the opposite results with fully connected network, that is, almost all of the skip connections have a probability of being selected less than $50\%$. 

Based on the above results, our theory can still explain some of the phenomena observed in CNNs, e.g., activation functions search.
Nevertheless, our theory on skip connections search on CNNs mismatches with experimental demonstration in practice to some extent, which motivates us to conduct a refined analysis on CNNs for NAS.   

\begin{figure}[t]
\centering
    \subfigure[activation functions $\bm \sigma$]{\includegraphics[width=0.45\linewidth]{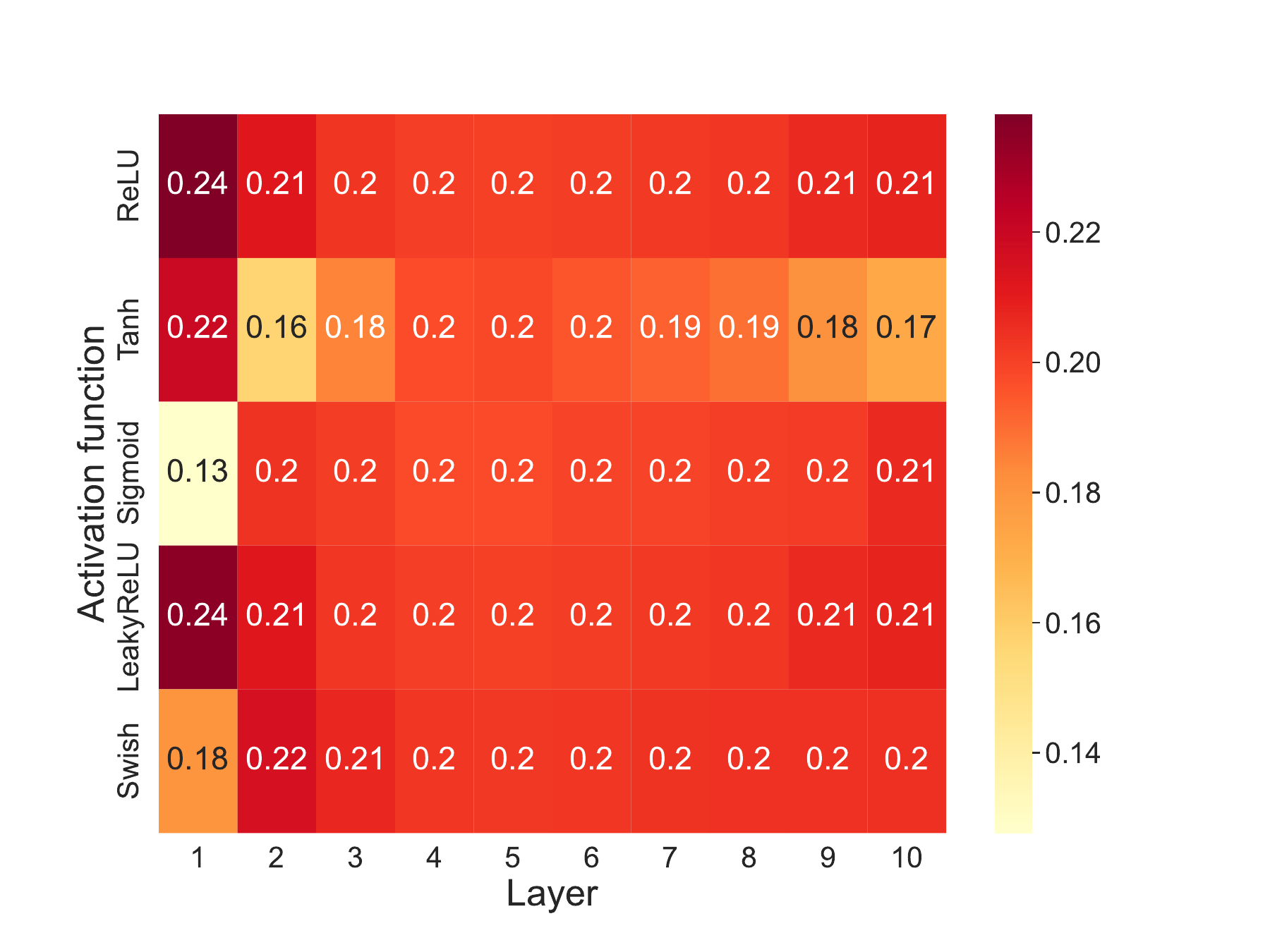}\label{fig:DARTS_heatmap_cnn_1}\hspace{-1mm}}
    \subfigure[skip connections $\bm \alpha$]{\includegraphics[width=0.45\linewidth]{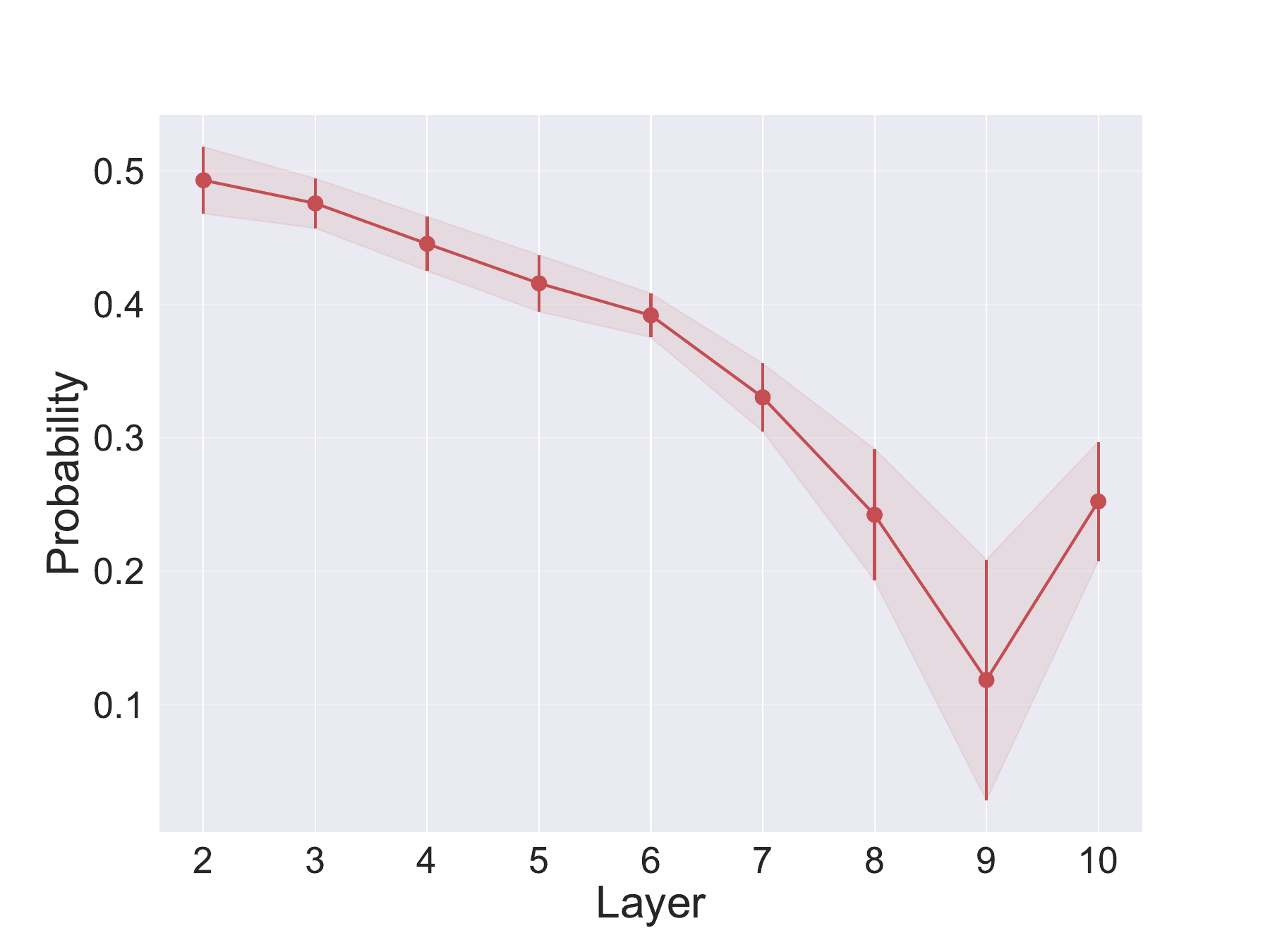}\vspace{-9mm}
    \label{fig:DARTS_heatmap_cnn_2}}
\caption{Architecture search results on activation functions indicated by the probability of $\bm \sigma$ in (a) and skip connections indicated by $\bm \alpha$ in (b). We notice that for each layer, ReLU and LeakyReLU are selected with the higher probability.}
\label{fig:DARTS_heatmap_cnn}
\end{figure}

\subsection{$\beta$-DARTS experiment on MLP}
\label{ssec:B_DARTS_experiment}

In this section, we use an improved DARTS-based algorithm, $\beta$-DARTS~\citep{https://doi.org/10.48550/arxiv.2203.01665}, for doing the activation function search. Our experiments are performed on a 5-layers MLP and the experimental results are presented in~\cref{fig:B_DARTS_experiment}. Compared with the results of DARTS in~\cref{fig:DARTS_heatmap}, the experimental results of $\beta$-DARTS indicate that the probability difference between different activation functions is smaller, which may verify that DART is more easily to overfit . This is also the advantage mentioned in the $\beta$-DARTS paper.

\begin{figure}[t]
\centering
    \includegraphics[width=0.6\linewidth]{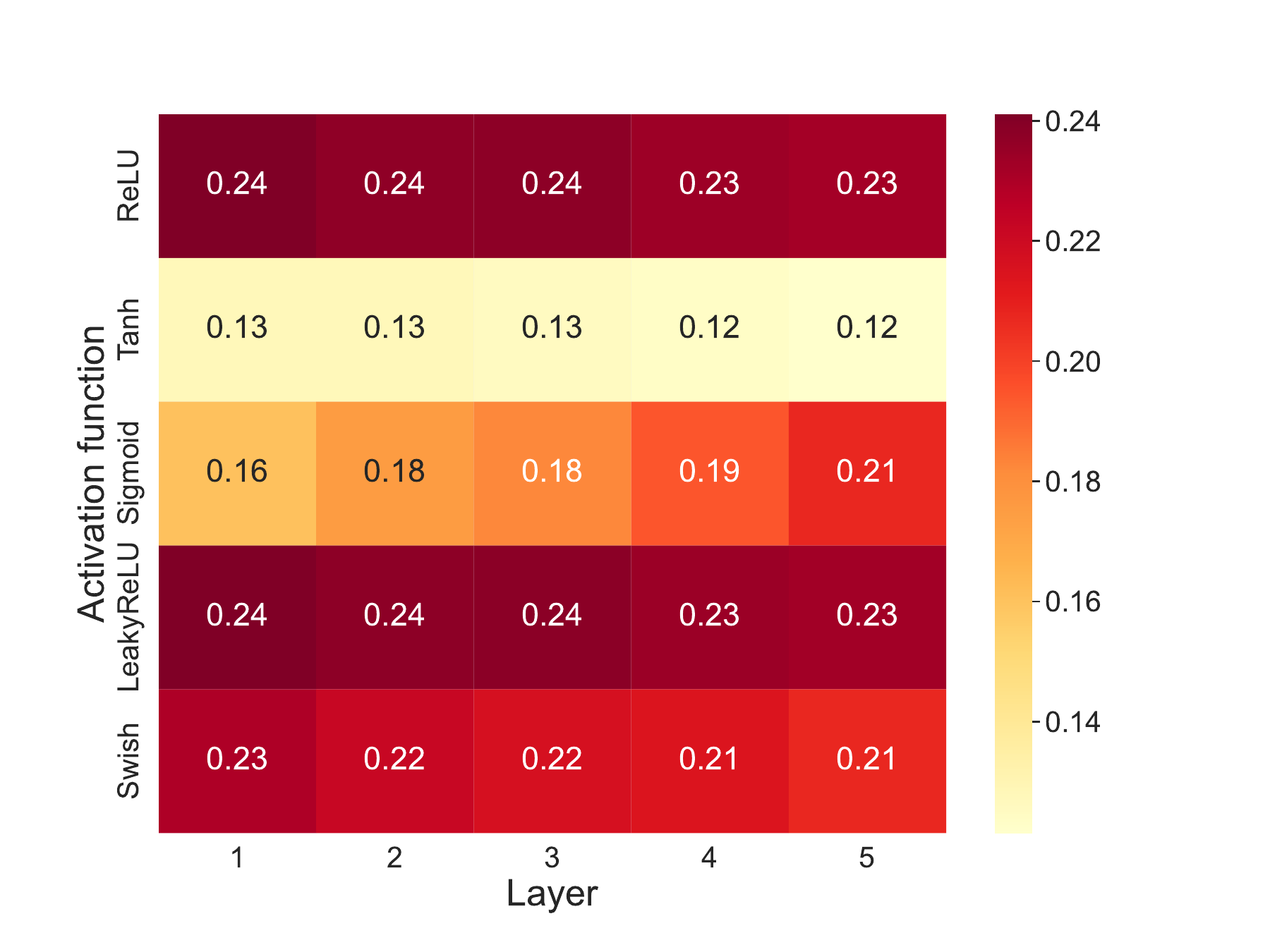}\vspace{-5mm}
\caption{Architecture search results using $\beta$-DARTS on activation functions indicated by the probability of $\bm \sigma$. We notice that for each layer, ReLU and LeakyReLU are selected with the higher probability.}
\label{fig:B_DARTS_experiment}
\end{figure}

\section{Societal impact}
\label{sec:nas_societal_impact}

This is a theoretical work that derived generalization bounds for the architectures obtained by NAS. As such, we do not expect our work to have negative societal bias, as we do not focus on obtaining state-of-the-art results in a particular task. On the contrary, our work can have various benefits for the community: 
\begin{itemize}
    \item We provide the first generalization bounds for the class of NAS architectures, which is expected to have a positive impact on the understanding and the application of such architectures. 
    \item As we illustrate in~\cref{sec:experiment}, we can use the minimum eigenvalue as a  promising metric to guide NAS. This can lead to further investigation on techniques for efficient evaluation of NAS by avoiding solving the intensive bi-level optimization of NAS explicitly. 
\end{itemize}

Nevertheless, we encourage researchers to further investigate the impact of different architectures and their inductive biases on the society.

\end{document}